\title{Small steps no more: Global convergence of stochastic gradient bandits for arbitrary learning rates}
\author{%
  Jincheng Mei$^{\, 1}$
  \hspace{5mm}
  Bo Dai$^{\, 1 \, 3}$ 
  \hspace{5mm}
  Alekh Agarwal$^{\, 2}$
  \hspace{5mm}
  \textbf{Sharan Vaswani}$^{\, 5}$
  \hspace{5mm}
  \textbf{Anant Raj}$^{\, 6}$ \\
  \hspace{7mm}
  \textbf{Csaba Szepesv{\'a}ri}$^{\, 1 \, 4}$
  \hspace{7mm}
  \textbf{Dale Schuurmans}$^{\, 1 \, 4}$ \\
  \\
  \hspace{-7mm} $^1$\normalfont{Google DeepMind} \hspace{2mm} $^2$Google Research \hspace{2mm} $^3$Georgia Institute of Technology \hspace{2mm} $^4$University of Alberta \\ $^5$Simon Fraser University \hspace{2mm} $^6$Indian Institute of Science \\
  \\
  \hspace{-2mm}
  \texttt{\{jcmei,bodai,alekhagarwal,szepi,schuurmans\}@google.com} \\
  \texttt{vaswani.sharan@gmail.com} \hspace{2mm} \texttt{anantraj@iisc.ac.in} 
}
\crefname{proposition}{Proposition}{Propositions}
\crefname{theorem}{Theorem}{Theorems}
\crefname{lemma}{Lemma}{Lemmas}
\crefname{update_rule}{Update}{Updates}
\crefname{algorithm}{Algorithm}{Algorithms}
\crefname{figure}{Figure}{Figures}
\def\eqref#1{equation~\ref{#1}}
\def\1{\bm{1}}
\DeclareMathAlphabet{\mathsfit}{\encodingdefault}{\sfdefault}{m}{sl}
\SetMathAlphabet{\mathsfit}{bold}{\encodingdefault}{\sfdefault}{bx}{n}
\def\gA{{\mathcal{A}}}
\def\gE{{\mathcal{E}}}
\def\gF{{\mathcal{F}}}
\def\sI{{\mathbb{I}}}
\def\sP{{\mathbb{P}}}
\def\sR{{\mathbb{R}}}
\newcommand{\softmax}{\mathrm{softmax}}
\DeclareMathOperator*{\argmax}{arg\,max}
\DeclareMathOperator*{\argmin}{arg\,min}
\newtheorem{theorem}{Theorem}
\newtheorem{lemma}{Lemma}
\newtheorem{proposition}{Proposition}
\newtheorem{remark}{Remark}
\newtheorem{assumption}{Assumption}
\def\rvzero{{\mathbf{0}}}
\def\diagonalmatrix{\text{diag}}
\DeclareMathOperator*{\probability}{Pr}
\newcommand{\chE}{\mathbb{E}}
\newcommand{\EE}[1]{\mathbb{E}[#1]}
\newcommand{\EEt}[1]{\mathbb{E}_t[#1]}
\newlength\tocrulewidth
\begin{document}

\maketitle

\begin{abstract}

We provide a new understanding of the stochastic gradient bandit algorithm by showing that it converges to a globally optimal policy almost surely
using \emph{any} constant learning rate.
This result demonstrates that the stochastic gradient algorithm continues to balance exploration and exploitation
appropriately even in scenarios where standard smoothness and noise control
assumptions break down.
The proofs are based on novel findings about action sampling rates and the
relationship between cumulative progress and noise,
and extend the current understanding of how simple stochastic gradient methods
behave in bandit settings.


\end{abstract}

\section{Introduction}

The stochastic gradient method has been ubiquitous in the field of machine learning for decades \citep{bottou2010large}. When applied to reinforcement learning (RL), a representative instantiation of stochastic gradient is the well known policy gradient \citep{sutton1999policy} (or REINFORCE \citep{williams1992simple}) algorithm, where in each iteration an online 
sample is gathered
using the current policy, from which a gradient estimate is obtained to conduct parameter updates. 
In the simplest setting of a stochastic bandit problem \citep{lattimore2020bandit}, where decisions matter only for one step, the REINFORCE policy gradient method becomes equivalent to the stochastic gradient bandit algorithm \citep[Section 2.8]{sutton2018reinforcement}. Compared to other statistical methods, such as the upper confidence bound algorithm (UCB, \citep{lai1985asymptotically, auer2002finite}), and Thompson sampling (TS, \citep{thompson1933likelihood,agrawal2012analysis}), the stochastic gradient bandit algorithm is conceptually simpler and more computationally efficient, as it does not calculate exploration bonuses nor posterior distributions. Moreover, the stochastic gradient method is highly scalable and naturally applicable to large scale neural networks \citep{schulman2015trust,schulman2017proximal}.

However, unlike UCB or TS, the stochastic gradient bandit algorithm does not have an equivalently well established and comprehensive theoretical footing. Given its pervasive success and widespread application in RL \citep{schulman2017proximal} and fine-tuning for large language models \citep{ouyang2022training,rafailov2024direct}), it remains an important question to understand the success of stochastic gradient based algorithms in bandit-like settings, not only to bridge the gap between theory and practice, but also to identify more effective and robust variants. 
In this paper, we make a significant contribution to the theoretical understanding of the stochastic gradient bandit algorithm, bringing its justification closer to that of other less scalable but theoretically well established methods.
In particular, we establish the surprising result that:
\begin{center}
\emph{For \textcolor{red}{any constant learning rate} $\eta>0$, the stochastic gradient bandit algorithm is guaranteed to converge to the globally optimal policy almost surely.}
\end{center}
Since learning rate is the only tuning parameter in the stochastic gradient bandit algorithm, this result offers a remarkable robustness for the method, that it converges to a near optimal policy, irrespective of the value of this hyperparameter!
Analysis of this algorithm is challenging because it requires techniques for simultaneously handling non-convex optimization, stochastic approximation, and the exploration-exploitation trade-off. 
Prior theoretical work on the stochastic gradient algorithm has primarily focused on
non-convex optimization and stochastic approximation, but understanding the simultaneous effect on exploration has been largely lacking.

Recently, significant progress has been made in establishing global convergence results for policy gradient (PG) methods. For example, it has been shown that using exact gradients, Softmax PG converges to a globally optimal policy asymptotically as the number of iterations $t$ goes to infinity \citep{agarwal2021theory}. Subsequent work has demonstrated that the asymptotic rate of convergence is $O(1/t) $\citep{mei2020global}, albeit with problem and initialization dependent constants \citep{mei2020escaping,li2021softmax}. The rate and constant dependence in the true gradient setting have been improved via several techniques, including entropy regularization \citep{mei2020global}, normalization \citep{mei2021leveraging}, and using natural gradient (mirror descent) \citep{agarwal2021theory,cen2022fast,lan2023policy}.

Unfortunately, in the online stochastic setting, where the policy gradient has to be estimated using the current policy to collect samples, these accelerated methods all obtain worse asymptotic results than the standard Softmax PG \citep{mei2021understanding}, failing to converge to a global optimum without careful design choices~\citep{mei2022role}. Yet in the same setting, standard Softmax PG has been shown to succeed in its simplest form, provided only that a sufficiently small learning constant rate $\eta \in \Theta(1)$ is used \citep{mei2024stochastic}.

For stochastic gradient based methods, decaying or sufficiently small learning rates are used by almost all current approaches, motivated by classical convergence analyses from stochastic optimization 
\citep{robbins1951stochastic, ghadimi2013stochastic,zhang2020global,zhang2020sample,ding2021beyond,zhang2021convergence,yuan2022general,mei2024stochastic,denisov2020regret}. Stationary point convergence is guaranteed for learning rates sufficiently small with respect to the smoothness of the objective function, while also decaying to zero at a precise rate if noise in the gradient estimator persists. In addition to appropriate learning rate control, many other techniques have been developed to control the effects of gradient noise, including regularization \citep{zhang2020sample,ding2021beyond}, variance reduction \citep{zhang2021convergence}, and carefully considering growth conditions \citep{yuan2022general,mei2024stochastic}.

The technical challenges we face in the current study can be understood in the following aspects: \textbf{(1)} Using an arbitrarily large constant learning rate for online stochastic gradient optimization immediately renders the smoothness and noise control techniques mentioned above inapplicable.
\textbf{(2)} With any constant learning rate $\eta>0$, the question of whether oscillation or convergence will ultimately occur needs to be addressed before even considering whether any convergence is to a global optimum.  This additional level of complexity arises because the optimization objective is not necessarily improved monotonically in expectation. Finally, 
\textbf{(3)} The gradient bandit algorithm does not use any exploration bonus, which means that new techniques are required to demonstrate that it adequately balances the exploration-exploitation trade-off.

In this paper, we resolve the above difficulties by uncovering intriguing exploration properties of stochastic gradient when using any constant learning rate.  In particular, we establish the following.
\begin{itemize}[leftmargin=16pt, nosep]
    \item[\textbf{(i)}] In the stochastic online setting, with probability $1$, the stochastic gradient bandit algorithm will not keep sampling any single action forever, implying that it will exhibit a minimal form of exploration without any further modification. This asymptotic event (as $t \to \infty$) happens with probability $1$ and holds for any constant learning rate $\eta>0$.
    \item[\textbf{(ii)}] This result can then be leveraged to show that, as a consequence, given any constant learning rate, the stochastic gradient bandit algorithms will converge to the globally optimal policy as $t \to \infty$, with probability $1$. That is, the probability of sub-optimal actions decays to $0$, even though some of them are taken infinitely often asymptotically.
\end{itemize}

\section{Setting and Background}

We consider the stochastic multi-armed bandit problem \citep{lattimore2020bandit}, specified by $K$ actions and a true mean reward vector $r \in \sR^K$, where for each action $a \in [K] \coloneqq \{1, 2, \dots, K \}$,
\begin{align}
\label{eq:true_mean_reward_expectation_bounded_sampled_reward}
    r(a) = \int_{-R_{\max}}^{R_{\max}}{ x \cdot P_a(x) \mu(d x)},
\end{align}
where $R_{\max} > 0$ is the reward range, $\mu$ is a finite measure over $[-R_{\max}, R_{\max}]$, and $P_a(x) \ge 0$ is the probability density function with respect to $\mu$.
We use $R_a$ to denote the reward distribution for action $a$ defined by the density $P_a$ and base measure $\mu$. The goal is to find a policy $\pi_{\theta} \in [0, 1]^K$ to achieve high expected reward,
\begin{align}
\label{eq:expected_reward}
    \max_{\theta \in \sR^K}{ \pi_{\theta}^\top r},
\end{align}
where $\pi_\theta$ is parameterized by $\theta \in \sR^K$. 

\textbf{The gradient bandit algorithm.} A natural idea to optimize \cref{eq:expected_reward} is to use stochastic gradient ascent, which is shown in \cref{alg:gradient_bandit_algorithm_sampled_reward} and known as the gradient bandit algorithm \citep[Section 2.8]{sutton2018reinforcement}. In \cref{alg:gradient_bandit_algorithm_sampled_reward}, in each iteration $t \ge 1$, the probability of pulling arm $a \in [K]$ is given as 
\begin{align}
\label{eq:softmax}
    \pi_{\theta_t}(a) = [ \softmax(\theta_t) ](a) \coloneqq
    \frac{ \exp\{ \theta_t(a) \} }{ \sum_{a^\prime \in [K]}{ \exp\{ \theta_t(a^\prime) } \} }, \mbox{ \quad   for all } a \in [K],
\end{align}
where $\theta_t \in \sR^{K}$ is the parameter vector to be updated. The following proposition shows that \cref{alg:gradient_bandit_algorithm_sampled_reward} is an instance of stochastic gradient ascent with an unbiased gradient estimator \citep{sutton2018reinforcement,mei2024stochastic}.

\begin{figure}[t]
\centering
\vskip -0.1in
\begin{minipage}{.7\linewidth}
    \begin{algorithm}[H]
    \caption{Gradient bandit algorithm (without baselines)}
    \label{alg:gradient_bandit_algorithm_sampled_reward}
    \begin{algorithmic}
    \STATE {\bfseries Input:} initial parameters $\theta_1 \in \sR^K$, learning rate $\eta > 0$.
    \STATE {\bfseries Output:} policies $\pi_{\theta_t} = \softmax(\theta_t)$.
    \WHILE{$t \ge 1$}
   \STATE Sample an action $a_t \sim \pi_{\theta_t}(\cdot)$ and observe reward $R_t(a_t)\sim P_{a_t}$.
   \FOR{all $a \in [K]$}
   \IF{$a = a_t$}
   \STATE $\theta_{t+1}(a) \gets \theta_t(a) + \eta \cdot \left( 1 - \pi_{\theta_t}(a) \right) \cdot R_t(a_t)$.
   \ELSE
   \STATE $\theta_{t+1}(a) \gets \theta_t(a) - \eta \cdot \pi_{\theta_t}(a) \cdot R_t(a_t)$.
   \ENDIF
   \ENDFOR
   \ENDWHILE
   \end{algorithmic}
    \end{algorithm}
\end{minipage}
\end{figure}

\begin{proposition}[Proposition 2.3 of \citep{mei2024stochastic}]
\label{prop:gradient_bandit_algorithm_equivalent_to_stochastic_gradient_ascent_sampled_reward}
\cref{alg:gradient_bandit_algorithm_sampled_reward} is equivalent to the following update,
\begin{align}
\label{eq:stochastic_gradient_ascent_sampled_reward}
    \theta_{t+1} &\gets  \theta_{t} + \eta \cdot \frac{d \ \pi_{\theta_t}^\top \hat{r}_t}{d \theta_t} = \theta_t + \eta \cdot \left(  \diagonalmatrix{(\pi_{\theta_t})} - \pi_{\theta_t} \pi_{\theta_t}^\top \right) \hat{r}_t,
\end{align}
where $\mathbb{E}_t{ \Big[ \frac{d \ \pi_{\theta_t}^\top \hat{r}_t }{d \theta_t} \Big] } = \frac{d \ \pi_{\theta_t}^\top r}{d \theta_t }$, and $\EEt{\cdot}$ is defined with respect to randomness from on-policy sampling $a_t \sim \pi_{\theta_t}(\cdot)$ and reward sampling $R_t(a_t)\sim P_{a_t}$. The Jacobian of $\theta \mapsto \pi_\theta \coloneqq \softmax(\theta)$ is $\left( \frac{d \ \pi_{\theta}}{d \theta} \right)^\top = \diagonalmatrix{(\pi_{\theta})} - \pi_{\theta} \pi_{\theta}^\top \in \sR^{K \times K}$, and 
$\hat{r}_t(a) \coloneqq \frac{ \sI\left\{ a_t = a \right\} }{ \pi_{\theta_t}(a) } \cdot R_t(a)$ for all $a \in [K]$ is the importance sampling (IS) estimator, and we set $R_t(a)=0$ for all $a \not= a_t$. 
\end{proposition}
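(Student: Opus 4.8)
The plan is to verify the three assertions in the statement separately: the stated softmax Jacobian, the algebraic equivalence between the coordinate-wise update of \cref{alg:gradient_bandit_algorithm_sampled_reward} and the vectorized form, and the unbiasedness of the importance-sampling (IS) gradient estimator $\hat{r}_t$. Each piece reduces to a direct computation, so the argument is more bookkeeping than deep reasoning, and I would organize it to make the role of the IS estimator's support transparent.

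First I would establish the Jacobian. Differentiating $\pi_{\theta}(a) = \exp\{\theta(a)\} / \sum_{a'}\exp\{\theta(a')\}$ by the quotient rule gives the standard identity $\partial \pi_{\theta}(a) / \partial \theta(b) = \pi_{\theta}(a)\,(\sI\{a=b\} - \pi_{\theta}(b))$, which assembled into a matrix is exactly $\diagonalmatrix{(\pi_{\theta})} - \pi_{\theta}\pi_{\theta}^\top$. This is immediate and carries no obstacle.

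Next I would check the equivalence of the updates by expanding the $a$-th coordinate of $(\diagonalmatrix{(\pi_{\theta_t})} - \pi_{\theta_t}\pi_{\theta_t}^\top)\hat{r}_t$, namely $\pi_{\theta_t}(a)\,\hat{r}_t(a) - \pi_{\theta_t}(a)\sum_{a'}\pi_{\theta_t}(a')\,\hat{r}_t(a')$. The key simplification is that $\hat{r}_t(a') = \sI\{a_t=a'\}\,R_t(a')/\pi_{\theta_t}(a')$ is supported only at $a'=a_t$ (since $R_t(a')=0$ for $a'\neq a_t$), so $\pi_{\theta_t}(a)\,\hat{r}_t(a) = \sI\{a_t=a\}\,R_t(a_t)$ and $\sum_{a'}\pi_{\theta_t}(a')\,\hat{r}_t(a') = R_t(a_t)$. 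The coordinate thus collapses to $(\sI\{a_t=a\} - \pi_{\theta_t}(a))\,R_t(a_t)$, which recovers the $(1-\pi_{\theta_t}(a))\,R_t(a_t)$ branch when $a=a_t$ and the $-\pi_{\theta_t}(a)\,R_t(a_t)$ branch otherwise, matching \cref{alg:gradient_bandit_algorithm_sampled_reward} exactly.

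Finally, for unbiasedness, since the Jacobian is deterministic given $\theta_t$, it suffices to show $\EEt{\hat{r}_t(a)} = r(a)$ coordinate-wise and then invoke linearity of $\EEt{\cdot}$. Conditioning on the sampled action and then on the reward draw, $\EEt{\sI\{a_t=a\}\,R_t(a)/\pi_{\theta_t}(a)} = \pi_{\theta_t}(a)\cdot r(a)/\pi_{\theta_t}(a) = r(a)$, using that $\pi_{\theta_t}(a)>0$ for every softmax policy (so the IS weight is well defined) and that the conditional mean of $R_t(a_t)\sim P_{a_t}$ equals $r(a_t)$ by \cref{eq:true_mean_reward_expectation_bounded_sampled_reward}. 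The only point demanding any care is the support bookkeeping of the IS estimator, i.e.\ ensuring the $\pi_{\theta_t}\pi_{\theta_t}^\top$ term contracts to precisely the observed reward; because $\pi_{\theta_t}$ is strictly positive, this is a routine verification rather than a genuine obstacle, and the whole proposition follows by combining the three computations.
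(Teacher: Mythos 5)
Your proof is correct and complete. Note that this paper does not actually reprove the proposition---it imports it verbatim as Proposition 2.3 of \citep{mei2024stochastic}---and your three-step verification (the softmax Jacobian identity, the coordinate-wise collapse of $\left(\diagonalmatrix{(\pi_{\theta_t})} - \pi_{\theta_t}\pi_{\theta_t}^\top\right)\hat{r}_t$ onto $\left(\sI\{a_t=a\} - \pi_{\theta_t}(a)\right)R_t(a_t)$ using the support of the IS estimator, and unbiasedness by conditioning on $a_t$ together with strict positivity of the softmax) is precisely the standard argument that the cited proof consists of, so there is nothing to add.
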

\textbf{Known results on the convergence of the gradient bandit algorithm.}
Since \cref{eq:expected_reward} corresponds to a smooth non-concave maximization problem over $\theta \in \sR^K$ \citep{mei2020global}, using \cref{alg:gradient_bandit_algorithm_sampled_reward} with decaying learning rates is sufficient to guarantee convergence to a stationary point \citep{robbins1951stochastic,nemirovski2009robust,ghadimi2013stochastic,zhang2020global}. However, this is insufficient to ensure the  globally optimal solution of \cref{eq:expected_reward} is reached, since there exist multiple stationary points. More recently, guarantees of convergence to a globally optimal policy have been developed for PG methods in the true gradient setting \citep{agarwal2021theory,mei2020global}, where the algorithm has access to exact mean rewards. These results were later extended to achieve global convergence guarantees (almost surely) in the stochastic setting \citep{zhang2020sample,ding2021beyond,zhang2021convergence,yuan2022general,mei2024stochastic,lu2024towards}. However, these extended results have required decaying or sufficiently small learning rates, motivated by exploiting smoothness and combating the inherent noise in stochastic gradients.

Despite these previous assumptions, there exists empirical and theoretical evidence that using a large learning rate in the stochastic gradient bandit algorithm is a viable option. For example, it has been observed that softmax policies learn even with extremely large learning rates such as $2^{14}$ \citep{garg2021alternate}. For logistic regression on linearly separable data, the objective has an exponential tail and the minimizer is unbounded, yet it has been shown that gradient descent with iteration dependent learning rate $\eta \in \Theta(t)$ achieves accelerated $O(1/t^2)$ convergence \citep{wu2024large}. Though the objective in \cref{eq:expected_reward} has similar properties, unlike logistic regression, the problem we are considering is non-concave, so the same techniques cannot be directly applied. 
The most related results are from \citep{mei2024stochastic}, which proved that with a small problem specific constant learning rate, \cref{alg:gradient_bandit_algorithm_sampled_reward} achieves convergence to a globally optimal policy almost surely. However, as mentioned, the learning rate choices in \citep{mei2024stochastic} rely on assumptions of (non-uniform) smoothness and noise growth conditions (their Lemmas 4.2, 4.3, and 4.6), which cannot be directly applied here for a large learning rate.

Consequently, the use of large learning rates appear to render existing results and techniques inapplicable. Furthermore, with a large constant learning rate, it is unclear whether \cref{alg:gradient_bandit_algorithm_sampled_reward} will converge to any stationary point, or the iterates will keep oscillating. If the algorithm does converge, it is also not clear what effect large step-sizes have on exploration, and whether the algorithm will converge to the optimal arm in such cases. Resolving these questions requires new results that characterize the behavior of \cref{alg:gradient_bandit_algorithm_sampled_reward}, since the classical optimization and stochastic approximation convergence theories are no longer applicable, as explained. 


\section{Asymptotic Global Convergence of Gradient Bandit Algorithm}

We have seen that solving the non-concave maximization problem \cref{eq:expected_reward} using \cref{alg:gradient_bandit_algorithm_sampled_reward} with any constant (potentially large) learning rate requires ideas beyond classical optimization theory. Here, we take a different perspective to investigate how \cref{alg:gradient_bandit_algorithm_sampled_reward} samples actions. For analysis, we make the following assumption about the reward distribution.
\begin{assumption}[True mean reward has no ties]
\label{assp:reward_no_ties}
For all $i, j \in [K]$, if $i \not= j$, then $r(i) \not= r(j)$.
\end{assumption}
\begin{remark}
Removing \cref{assp:reward_no_ties} remains an open question for future work, while we believe that \cref{alg:gradient_bandit_algorithm_sampled_reward} works without \cref{assp:reward_no_ties}. One piece of evidence to support this conjecture is that even in the exact gradient setting, the set of initializations where Softmax PG approaches non-strict one-hot policies has zero measure.
\end{remark}

\subsection{Failure Mode of Aggressive Updates}
\label{subsec:failure_model_aggressive_updates}

It has been observed that several accelerated PG methods in the true gradient setting, including natural PG \citep{kakade2002natural,agarwal2021theory} and normalized PG \citep{mei2021leveraging}, obtain worse results than standard softmax PG if combined with online sampling $a_t \sim \pi_{\theta_t}(\cdot)$ using constant learning rates \citep{mei2021understanding}. The failure mode in these cases is that the update is too aggressive and commits to a sub-optimal arm without sufficiently exploring all arms. This results in a non-trivial probability of sampling one action forever, i.e., there exists a potentially sub-optimal action $a \in [K]$, such that with some constant probability, $a_t = a$ for all $t \ge 1$. Such an outcome implies that $\pi_{\theta_t}(a) \to 1$ as $t \to \infty$ \citep[Theorem 3]{mei2021understanding}. Since $a \in [K]$ could be a sub-optimal action with $r(a) < r(a^*) = \max_{a \in [K]} r(a)$, this results in a lack of exploration, and consequently, methods such as natural PG and normalized PG are not guaranteed to converge to the optimal action $a^* \coloneqq \argmax_{a \in [K]}{ r(a) }$ with probability $1$.

\subsection{Stochastic Gradient Automatically Avoids Lack of Exploration}
\label{subsec:avoid_lack_of_exploration}

Our first key finding is that \cref{alg:gradient_bandit_algorithm_sampled_reward} does not keep sampling one action forever, no matter how large the constant learning rate is.  This property avoids the problem of a lack of exploration, in the sense that \cref{alg:gradient_bandit_algorithm_sampled_reward} will at least explore more than one action infinitely often. At first glance, this might not seem like a strong property, since the algorithm might somehow explore only sub-optimal actions forever.  
However, we will argue below that this property coupled with additional arguments is sufficient to guarantee convergence to the globally optimal policy.

Let us now formally prove the above property. By \cref{alg:gradient_bandit_algorithm_sampled_reward}, for all $a \in [K]$, for all $t \ge 1$,
\begin{empheq}[left={\theta_{t+1}(a) \gets \theta_t(a) +\empheqbiglbrace~}]{align}
\label{eq:first_update}
    \eta \cdot \left( 1 - \pi_{\theta_t}(a) \right) \cdot R_t(a), & \quad \text{ if} \ a_t = a, \\ 
\label{eq:second_update}
    - \eta \cdot \pi_{\theta_t}(a) \cdot R_t(a_t), & \quad \text{ otherwise}.
\end{empheq}
We define $N_t(a)$ as the number of times action $a \in [K]$ is sampled up to iteration $t \ge 1$, i.e.,
\begin{align}
\label{eq:finite_sample_count}
    N_t(a) \coloneqq \sum_{s=1}^{t}{ \sI\left\{ a_s = a \right\} },
\end{align}
and its asymptotic limit $N_\infty(a) \coloneqq \lim_{t \to \infty}{ N_t(a) }$, which could possibly be infinity. For all $a \in [K]$, we have either $N_\infty(a) = \infty$ or $N_\infty(a) < \infty$, meaning that $a \in [K]$ is sampled infinitely often or only finitely many times asymptotically. First, we prove the following \cref{lem:finite_sample_time_implies_finite_parameter}, which shows that if an action $a \in [K]$ is sampled only finitely many times as $t \to \infty$, then the parameter corresponding to action $a$ is also finite, i.e., $\sup_{t \ge 1}{| \theta_t(a) |} < \infty$.
\begin{lemma}
\label{lem:finite_sample_time_implies_finite_parameter}
Using \cref{alg:gradient_bandit_algorithm_sampled_reward} with any constant $\eta \in \Theta(1)$, if $N_\infty(a) < \infty$ for an action $a \in [K]$, then we have, almost surely,
\begin{align}
\label{lem:finite_sample_time_implies_finite_parameter_claim}
    \sup_{t \ge 1}{ \theta_t(a) } < \infty, \text{ and } \inf_{t \ge 1}{ \theta_t(a) } > -\infty.
\end{align}
\end{lemma}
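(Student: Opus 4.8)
The plan is to show that, on the event $\{N_\infty(a) < \infty\}$, the sequence $\{\theta_t(a)\}_{t\ge 1}$ has finite total variation, i.e.\ $\sum_{t\ge1} |\theta_{t+1}(a) - \theta_t(a)| < \infty$; this immediately delivers both claims, since then $|\theta_t(a)| \le |\theta_1(a)| + \sum_{s\ge1}|\theta_{s+1}(a)-\theta_s(a)|$ is uniformly bounded, forcing $\sup_{t\ge1}\theta_t(a) < \infty$ and $\inf_{t\ge1}\theta_t(a) > -\infty$. Splitting the increments according to whether $a$ is sampled at step $s$, \cref{eq:first_update,eq:second_update} together with $|R_s(\cdot)| \le R_{\max}$ and $0\le\pi_{\theta_s}(a)\le1$ give $|\theta_{s+1}(a)-\theta_s(a)| \le \eta R_{\max}$ on the steps with $a_s = a$, of which there are only $N_\infty(a)<\infty$ many, and $|\theta_{s+1}(a)-\theta_s(a)| \le \eta R_{\max}\,\pi_{\theta_s}(a)$ on the steps with $a_s \ne a$. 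Hence the total variation is at most $\eta R_{\max}\, N_\infty(a) + \eta R_{\max}\sum_{s\ge1}\pi_{\theta_s}(a)$, and the whole lemma reduces to proving that $\sum_{s\ge1}\pi_{\theta_s}(a) < \infty$ on the event $\{N_\infty(a)<\infty\}$.

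The crucial step is this summability, which I would obtain from the Lévy (conditional) extension of the second Borel–Cantelli lemma. Let $\gF_{s-1}$ be the $\sigma$-algebra generated by the history up to the start of round $s$, so that $\theta_s$, and hence $\pi_{\theta_s}$, are $\gF_{s-1}$-measurable and $\mathbb{P}(a_s = a \mid \gF_{s-1}) = \pi_{\theta_s}(a)$. Since the events $\{a_s = a\}$ are adapted with conditional probabilities $\pi_{\theta_s}(a)$, Lévy's extension yields, almost surely,
\begin{align*}
    \Big\{ a_s = a \ \text{ infinitely often} \Big\} = \Big\{ \sum_{s\ge1}\pi_{\theta_s}(a) = \infty \Big\}.
\end{align*}
Taking complements, the event $\{N_\infty(a) < \infty\} = \{a_s = a \ \text{only finitely often}\}$ coincides, up to a null set, with $\{\sum_{s\ge1}\pi_{\theta_s}(a) < \infty\}$. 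Thus on $\{N_\infty(a)<\infty\}$ we have $\sum_{s\ge1}\pi_{\theta_s}(a) < \infty$ almost surely, and combining this with the total-variation bound above completes the argument.

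I expect the main subtlety to be precisely this conditioning. The event $\{N_\infty(a)<\infty\}$ depends on the entire future trajectory, so a direct (super)martingale analysis of the increments of $\theta_t(a)$ conditioned on this event is awkward, because conditioning on the tail distorts the one-step transition probabilities $\pi_{\theta_s}(a)$. Routing through the Lévy–Borel–Cantelli identity is exactly what converts the qualitative ``sampled only finitely often'' statement into the quantitative summability $\sum_s \pi_{\theta_s}(a) < \infty$ \emph{without} ever conditioning on the tail event, after which the boundedness of $\theta_t(a)$ follows from elementary deterministic estimates on the per-step updates. The remaining bookkeeping — verifying the two increment bounds from \cref{eq:first_update,eq:second_update} and that the finitely many sampled steps contribute only a finite amount (using that $\eta$ is a fixed constant) — is routine.
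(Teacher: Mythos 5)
Your proposal is correct and follows essentially the same route as the paper's proof: both bound $|\theta_t(a)-\theta_1(a)|$ by $\eta R_{\max}\bigl(N_{t-1}(a)+\sum_{s=1}^{t-1}\pi_{\theta_s}(a)\bigr)$ via the two update cases, and both invoke the extended (L\'evy/conditional) Borel--Cantelli lemma to convert $N_\infty(a)<\infty$ into the summability $\sum_{s\ge1}\pi_{\theta_s}(a)<\infty$ almost surely. Your framing as a total-variation bound is just a rephrasing of the paper's triangle-inequality step, so there is no substantive difference.
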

\cref{lem:finite_sample_time_implies_finite_parameter} will be used multiple times in the subsequent convergence arguments.

\paragraph{Proof sketch.} Since we assume action $a \in [K]$ is sampled finitely many times, the update given in the case depicted by~\cref{eq:first_update} happens finitely many times. Each update is bounded since the sampled reward is in $[-R_{\max}, R_{\max}]$ by \cref{eq:true_mean_reward_expectation_bounded_sampled_reward}, and the learning rate is a constant, i.e., $\eta \in \Theta(1)$. In \cref{alg:gradient_bandit_algorithm_sampled_reward}, $\theta_t(a)$ is still updated even when $a_t \ne a$, with the corresponding update given by the case depicted by \cref{eq:second_update}. Therefore, whether $\theta_t(a)$ is bounded depends on the cumulative probability $\sum_{s=1}^{t}{ \pi_{\theta_s}(a) }$ being summable as $t \to \infty$. According to the extended Borel-Cantelli lemma (\cref{lem:ebc}), we have, almost surely,
\begin{align}
\label{eq:ebc_result}
    \Big\{ \sum_{t \ge 1} \pi_{\theta_t}(a) =\infty \Big\} = \left\{ N_\infty(a )=\infty \right\},
\end{align}
which implies (by taking complements) that $\sum_{t \ge 1} \pi_{\theta_t}(a) < \infty$ if and only if $N_\infty(a) < \infty$. Therefore, if $a \in [K]$ is sampled finitely often, $\theta_t(a)$ will be updated in a bounded manner (using \cref{eq:first_update,eq:second_update}) as $t \to \infty$, hence establishing \cref{lem:finite_sample_time_implies_finite_parameter}. Detailed proofs for this lemma, as well as for all other results in this paper can be found in the appendix. 

Given \cref{lem:finite_sample_time_implies_finite_parameter}, we can then establish 
the above-mentioned finding about the exploration effect of~\cref{alg:gradient_bandit_algorithm_sampled_reward} in \cref{lem:at_least_two_actions_infinite_sample_time}.
\begin{lemma}[Avoiding a lack of exploration]
\label{lem:at_least_two_actions_infinite_sample_time}
Using \cref{alg:gradient_bandit_algorithm_sampled_reward} with any $\eta \in \Theta(1)$, there exists at least a pair of distinct actions $i, j \in [K]$ and $i \ne j$, such that, almost surely,
\begin{align}
\label{lem:at_least_two_actions_infinite_sample_time_claim}
    N_\infty(i) = \infty, \text{ and } N_\infty(j) = \infty.
\end{align}
\end{lemma}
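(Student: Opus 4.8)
The plan is to prove the statement by contradiction, ruling out with probability one the scenario in which only a single action is sampled infinitely often. First I would note that $\sum_{a \in [K]} N_t(a) = t$ diverges while $K$ is finite, so by the pigeonhole principle at least one action $a^\dagger \in [K]$ satisfies $N_\infty(a^\dagger) = \infty$ almost surely. It then suffices to show that for each fixed $a^\dagger$ the event $E_{a^\dagger} \coloneqq \{ N_\infty(a^\dagger) = \infty,\ N_\infty(a) < \infty \ \forall a \ne a^\dagger \}$ is null; a union bound over the $K$ candidates then forces at least two actions to have infinite sample counts, which is exactly the desired conclusion \cref{lem:at_least_two_actions_infinite_sample_time_claim}.

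Next I would extract the two consequences of being on $E_{a^\dagger}$. For every $a \ne a^\dagger$, \cref{lem:finite_sample_time_implies_finite_parameter} yields $\sup_{t \ge 1} |\theta_t(a)| < \infty$, while the extended Borel--Cantelli equivalence \cref{eq:ebc_result} yields $\sum_{t \ge 1} \pi_{\theta_t}(a) < \infty$. Summing the latter over the finitely many $a \ne a^\dagger$ gives $\sum_{t \ge 1} (1 - \pi_{\theta_t}(a^\dagger)) = \sum_{t \ge 1} \sum_{a \ne a^\dagger} \pi_{\theta_t}(a) < \infty$, so in particular $\pi_{\theta_t}(a^\dagger) \to 1$.

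The hard part will be controlling $\theta_t(a^\dagger)$ itself: since $a^\dagger$ is sampled infinitely often it is the one coordinate to which \cref{lem:finite_sample_time_implies_finite_parameter} does not apply, and a priori the repeated winning updates of \cref{eq:first_update} could send $\theta_t(a^\dagger) \to +\infty$, which is precisely the behaviour needed to sustain $\pi_{\theta_t}(a^\dagger) \to 1$. The resolution is that the very factor $1 - \pi_{\theta_t}(a^\dagger)$ multiplying the reward in \cref{eq:first_update} is the quantity whose summability I just established. Concretely, on $E_{a^\dagger}$ there is a random finite time $T$ after which only $a^\dagger$ is sampled, so for $t > T$ each increment of $\theta_t(a^\dagger)$ has magnitude at most $\eta R_{\max} (1 - \pi_{\theta_t}(a^\dagger))$ by the reward bound \cref{eq:true_mean_reward_expectation_bounded_sampled_reward}, whence $\sum_{t > T} |\theta_{t+1}(a^\dagger) - \theta_t(a^\dagger)| \le \eta R_{\max} \sum_{t > T} (1 - \pi_{\theta_t}(a^\dagger)) < \infty$. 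Thus $\theta_t(a^\dagger)$ has bounded total variation and converges to a finite limit.

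Finally I would close the loop. With every coordinate of $\theta_t$ now bounded (the $a \ne a^\dagger$ coordinates by \cref{lem:finite_sample_time_implies_finite_parameter}, the $a^\dagger$ coordinate by the step above), continuity of the softmax forces $\pi_{\theta_t}(a^\dagger) \to e^{\theta_\infty(a^\dagger)} / \sum_{b \in [K]} e^{\theta_\infty(b)}$, which is strictly less than $1$ because at least two coordinates attain finite limits. This contradicts $\pi_{\theta_t}(a^\dagger) \to 1$, so $P(E_{a^\dagger}) = 0$ and the union bound completes the proof. In effect, the self-reinforcement that is fatal in the aggressive-update failure mode of \cref{subsec:failure_model_aggressive_updates} saturates here, since the step toward $a^\dagger$ shrinks exactly as fast as $1 - \pi_{\theta_t}(a^\dagger)$ vanishes.
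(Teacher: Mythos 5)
Your proposal is correct and follows essentially the same route as the paper: pigeonhole gives one infinitely-sampled action, the extended Borel--Cantelli equivalence plus \cref{lem:finite_sample_time_implies_finite_parameter} bound every coordinate of $\theta_t$ under the lone-action hypothesis (the increments of $\theta_t(a^\dagger)$ being summable precisely because $\sum_t (1-\pi_{\theta_t}(a^\dagger)) < \infty$), and boundedness then contradicts the hypothesis --- the paper phrases the final contradiction via a second application of Borel--Cantelli (all probabilities bounded below, hence all actions sampled infinitely often), while you contradict $\pi_{\theta_t}(a^\dagger)\to 1$ directly, a purely cosmetic difference. One small repair: \cref{lem:finite_sample_time_implies_finite_parameter} gives only boundedness, not convergence, of $\theta_t(a)$ for $a \ne a^\dagger$, so the softmax limit you write need not exist; instead conclude from $\sup_t \max_b |\theta_t(b)| \le M < \infty$ that $1 - \pi_{\theta_t}(a^\dagger) \ge e^{-2M}/K$ for all $t$, which contradicts $\pi_{\theta_t}(a^\dagger) \to 1$ just as well.
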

\paragraph{Proof sketch.} 
The argument for the existence of one such action is straightforward, since by the pigeonhole principle, if there are finitely many actions, i.e., $K < \infty$, there must be at least one action $i \in [K]$ that is sampled infinitely often as $t \to \infty$.

The argument for the existence of a second such action is by contradiction. Suppose that all the other actions $j \in [K]$ with $j \ne i$ are sampled only finitely many times as $t \to \infty$. According to \cref{lem:finite_sample_time_implies_finite_parameter}, their corresponding parameters must remain finite, i.e., $\sup_{t \ge 1}{ |\theta_t(j)| } < \infty$ for all $j \in [K]$ with $j \ne i$. Now consider  $\theta_t(i)$. By assumption, the second update case for this parameter, \cref{eq:second_update}, happens only finitely often, since \cref{eq:second_update} can only occur when $a_t \ne i$. Therefore, the key question is whether the cumulative probability $\sum_{s=1}^{t} \left( 1 - \pi_{\theta_s}(i) \right)$ involved in the first case of the update,~\cref{eq:first_update}, is summable as $t \to \infty$. Note that $\sum_{s=1}^{t} \left( 1 - \pi_{\theta_s}(i) \right) = \sum_{s=1}^{t} \sum_{j \ne i}{ \pi_{\theta_s}(j) }$, which is indeed summable as $t \to \infty$, by the assumption and~\cref{eq:ebc_result}. This implies that   action $i$, which is sampled infinitely often, achieves a parameter magnitude, $\sup_{t \ge 1}{ |\theta_t(i)| } < \infty$, that remains bounded as $t\rightarrow\infty$. Using the softmax parameterization \cref{eq:softmax} in the above argument, we conclude that for all $a \in [K]$, $\inf_{t \ge 1}{ \pi_{\theta_t}(a)} > 0$, i.e., every action's probability remains bounded away from zero, and hence is not summable. Using~\cref{eq:ebc_result}, this implies that every action is sampled infinitely often, which contradicts the assumption that only action $i$ is sampled infinitely often as $t \to \infty$. 

\paragraph{Discussion.}
\cref{lem:at_least_two_actions_infinite_sample_time} implies that~\cref{alg:gradient_bandit_algorithm_sampled_reward} is not an aggressive method in the sense of \citep{mei2021understanding}, no matter how large the learning rate is, as long as it is constant, i.e., $\eta \in \Theta(1)$. According to \citep[Theorem 7]{mei2021understanding}, even if we fix the sampling in \cref{alg:gradient_bandit_algorithm_sampled_reward} to a sub-optimal action $a \in [K]$ forever, i.e., $a_t = a$ for all $t \ge 1$, its probability will not approach $1$ faster than $O(1/t)$, i.e., $1 - \pi_{\theta_t}(a) \in \Omega(1/t)$.  This means that there must be at least one another action $a^\prime \in [K]$ with $a^\prime \ne a$, such that $a^\prime$ will also be sampled  infinitely often. A more intuitive explanation is that the $\left( 1 - \pi_{\theta_t}(a) \right)$ term in \cref{eq:first_update} will be near $0$, which slows the speed of committing to a  deterministic policy on $a$ whenever $\pi_{\theta_t}(a)$ is close to $1$, 
which encourages exploration. Such natural exploratory behavior arises in \cref{alg:gradient_bandit_algorithm_sampled_reward} because of the softmax Jacobian $  \diagonalmatrix{(\pi_{\theta})} - \pi_{\theta} \pi_{\theta}^\top $ in the update shown in \cref{prop:gradient_bandit_algorithm_equivalent_to_stochastic_gradient_ascent_sampled_reward}, which determines the growth order of $\theta_t(a)$ for all $a \in [K]$ as $t \to \infty$, making the effect of a constant learning rate $\eta \in \Theta(1)$  asymptotically inconsequential.

\subsection{Warm up: Global Asymptotic Convergence when $K = 2$}

We now consider the simplest case, where we have only two possible actions. According to \cref{lem:at_least_two_actions_infinite_sample_time}, each of the two actions must be sampled infinitely often as $t \to \infty$. We now illustrate the second key result, that for both actions $a \in [K]$, the random sequence $\{ \theta_t(a) \}_{t \ge 1}$ follows the direction of the expected gradient for sufficiently large $t \ge 1$ almost surely. The proof uses a technique that has been previously used in \citep{mei2022role,mei2024stochastic} for small learning rates, but here we observe that the same technique continues to work for \cref{alg:gradient_bandit_algorithm_sampled_reward} no matter how large the learning rate is, as long as $\eta \in \Theta(1)$.

\begin{theorem}
\label{thm:two_action_global_convergence}
Let $K = 2$ and $r(1) > r(2)$. Using \cref{alg:gradient_bandit_algorithm_sampled_reward} with any $\eta \in \Theta(1)$, we have, almost surely, $\pi_{\theta_t}(a^*) \to 1$ as $t \to \infty$, where $a^* \coloneqq \argmax_{a \in [K]}{ r(a) }$ (equal to Action $1$ in this case).
\end{theorem}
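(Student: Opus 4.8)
The plan is to reduce the two-dimensional dynamics to a scalar process and then show that a strictly positive expected drift dominates the gradient noise \emph{uniformly in} $\eta$. Since $\pi_{\theta_t}(1)$ depends only on the gap $z_t \coloneqq \theta_t(1) - \theta_t(2)$ through $\pi_{\theta_t}(1) = \sigma(z_t)$ (the logistic sigmoid), and $\pi_{\theta_t}(1)\to 1 \iff z_t \to +\infty$, it suffices to prove $z_t \to +\infty$ almost surely. First I would combine the two update branches \cref{eq:first_update,eq:second_update} (using $\pi_{\theta_t}(2) = 1 - \pi_{\theta_t}(1)$ for $K=2$) into the single-coordinate recursion
\begin{align*}
z_{t+1} - z_t = 2\eta\left[ \sI\{a_t = 1\}\,\pi_{\theta_t}(2)\,R_t(1) - \sI\{a_t = 2\}\,\pi_{\theta_t}(1)\,R_t(2) \right].
\end{align*}
Writing $\Delta \coloneqq r(1) - r(2) > 0$ and $W_t \coloneqq \pi_{\theta_t}(1)\pi_{\theta_t}(2)$, a direct computation of the conditional mean and second moment gives the drift $\delta_t \coloneqq \mathbb{E}_t[z_{t+1}-z_t] = 2\eta\,\Delta\, W_t \ge 0$ and the bound $\mathbb{E}_t[(z_{t+1}-z_t)^2] \le 4\eta^2 R_{\max}^2\, W_t$ (the cross term vanishes since the two indicators are mutually exclusive, and $\pi_{\theta_t}(1)\pi_{\theta_t}(2)^2 + \pi_{\theta_t}(2)\pi_{\theta_t}(1)^2 = W_t$).

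The crucial structural observation is that both the drift and the conditional variance are proportional to the \emph{same} quantity $W_t$, the drift carrying a factor $\eta$ and the variance a factor $\eta^2$. Decomposing $z_t = z_1 + 2\eta\Delta\sum_{s<t} W_s + S_t$, where $S_t \coloneqq \sum_{s<t} M_{s+1}$ is the martingale with increments $M_{s+1} \coloneqq (z_{s+1}-z_s) - \delta_s$, the cumulative signal is $\asymp \eta\,\Delta\,U_t$ while the cumulative standard deviation is $\lesssim \eta R_{\max}\sqrt{U_t}$, where $U_t \coloneqq \sum_{s\le t} W_s$. Their ratio is $\gtrsim (\Delta/R_{\max})\sqrt{U_t}$, which is \emph{independent of $\eta$} and diverges as soon as $U_t \to \infty$. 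This is precisely why an arbitrarily large constant learning rate cannot break the argument: $\eta$ cancels in the signal-to-noise ratio.

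To make this rigorous I would proceed in two steps. \textbf{(a)} Rule out $\{\sum_s W_s < \infty\}$: on this event the predictable quadratic variation $\sum_s \mathbb{E}_s[M_{s+1}^2] \le 4\eta^2R_{\max}^2\sum_s W_s$ is finite, so $S_t$ converges almost surely, and the drift $2\eta\Delta\sum_s W_s$ converges too; hence $z_t$ tends to a finite limit $z_\infty$, forcing $W_t \to \sigma(z_\infty)(1-\sigma(z_\infty)) > 0$, which contradicts the $W_t \to 0$ required for summability. Thus $\sum_s W_s = \infty$ almost surely (consistent with \cref{lem:at_least_two_actions_infinite_sample_time}, which already guarantees both actions are sampled infinitely often). \textbf{(b)} On the almost-sure event $\{\sum_s W_s = \infty\}$, set $b_t \coloneqq 1 + U_t \to \infty$; the pathwise telescoping estimate $\sum_s W_s / b_s^2 \le \sum_s (b_s - b_{s-1})/(b_s b_{s-1}) \le 1$ shows the normalized martingale $\sum_s M_{s+1}/b_s$ has finite quadratic variation and hence converges, so Kronecker's lemma yields $S_t / b_t \to 0$, i.e. $S_t = o(U_t)$. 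Combining, $z_t = z_1 + 2\eta\Delta\,U_{t-1}\,(1 + o(1)) \to +\infty$, so $\pi_{\theta_t}(1) = \sigma(z_t) \to 1$.

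The main obstacle is step \textbf{(a)} together with the self-normalization in \textbf{(b)}: because the classical stochastic-approximation recipe (small or decaying step sizes controlling a fixed smoothness and noise budget) is unavailable for large constant $\eta$, one cannot simply invoke a Robbins--Monro-type theorem. The decisive insight that makes the proof go through for \emph{every} $\eta$ is the exact co-scaling of drift and noise by $W_t$, which lets a self-normalized martingale argument (Kronecker's lemma against the running sum $b_t = 1 + \sum_{s\le t} W_s$) replace step-size control. The telescoping bound $\sum_s W_s/b_s^2 \le 1$ is the technical linchpin, and it holds pathwise regardless of $\eta$.
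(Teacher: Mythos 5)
Your proof is correct, but it takes a genuinely different route from the paper's. The paper never collapses the problem to the scalar gap $z_t$: it keeps the decomposition $\theta_t(a) = \mathbb{E}[\theta_1(a)] + \sum_s W_s(a) + \sum_s P_s(a)$ for each coordinate separately, controls the cumulative noise with a time-uniform Freedman-type concentration bound (\cref{lem:parameter_noise_concentration}, with the explicit variance proxy $V_t(a) = \tfrac{5}{18}\sum_s \pi_{\theta_s}(a)(1-\pi_{\theta_s}(a))$), and rules out summable progress by a contradiction that routes through the extended Borel--Cantelli lemma (\cref{lem:ebc}) and the exploration result \cref{lem:at_least_two_actions_infinite_sample_time}, whose conclusion $N_\infty(a)=\infty$ is a standing hypothesis of all the helper lemmas (\cref{lem:bounded_progress_bounded_parameter,lem:positive_unbounded_progress_unbounded_parameter,lem:negative_unbounded_progress_unbounded_parameter}). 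You replace all of this with two classical facts applied to the single process $z_t = \theta_t(1)-\theta_t(2)$: (a) a square-integrable martingale converges a.s.\ on the event that its predictable quadratic variation is finite (your increments are bounded by $4\eta R_{\max}$, so this applies), which turns the case $\sum_s W_s < \infty$ into $z_t \to z_\infty$ finite and hence $W_t \to \sigma(z_\infty)(1-\sigma(z_\infty)) > 0$, a contradiction; and (b) the martingale with predictable weights $1/b_s$, $b_s = 1+\sum_{u\le s}W_u$, whose quadratic variation is controlled by the pathwise telescoping bound $\sum_s W_s/b_s^2 \le 1$, so Kronecker's lemma yields $S_t = o(U_t)$ and the drift $2\eta\Delta U_{t-1}$ wins. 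Both arguments rest on the same structural insight that the paper states informally --- drift and conditional variance are proportional to the same quantity $\pi_{\theta_t}(1)\pi_{\theta_t}(2)$, with the drift linear and the variance quadratic in $\eta$, so the signal-to-noise ratio is $\eta$-free --- but your execution is more elementary: no concentration constants, no Borel--Cantelli, and no reliance on \cref{lem:at_least_two_actions_infinite_sample_time}, whose content your step (a) effectively re-derives in the form $\sum_s \pi_{\theta_s}(1)\pi_{\theta_s}(2) = \infty$ a.s. What the paper's heavier machinery buys is reusability and quantitative control: the per-coordinate progress/noise decomposition and the Freedman-based lemmas are exactly what get recycled in the general case (\cref{thm:general_action_global_convergence}) and in the rate analysis (\cref{thm:asymptotic_rate_of_convergence}), whereas your reduction $\pi_{\theta_t}(1) = \sigma(z_t)$ is special to $K=2$ and does not extend, since for $K>2$ the policy is not a function of a single scalar gap.
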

\paragraph{Proof sketch.} According to \cref{lem:at_least_two_actions_infinite_sample_time}, $N_\infty(1) = N_\infty(2) = \infty$. Denote the the reward gap as $\Delta \coloneqq r(a^*) - \max_{a \not= a^*}{ r(a) } > 0$, which becomes $\Delta = r(1) - r(2)$ for two actions. Since the stochastic gradient is unbiased (\cref{prop:gradient_bandit_algorithm_equivalent_to_stochastic_gradient_ascent_sampled_reward}), we have, for all $t \ge 1$ (detailed calculations omitted),
\begin{align}
\label{eq:two_action_case_optimal_action_param_submartingale_a}
    \EEt{\theta_{t+1}(a^*)} &= \theta_t(a^*) + \eta \cdot \pi_{\theta_t}(a^*) \cdot \left(  r(a^*) - \pi_{\theta_t}^\top r \right) \\
\label{eq:two_action_case_optimal_action_param_submartingale_b}
    &= \theta_t(a^*) + \eta \cdot \pi_{\theta_t}(a^*) \cdot \Delta \cdot \left( 1 - \pi_{\theta_t}(a^*) \right) > \theta_t(a^*).
\end{align}
A similar calculation shows that,
\begin{align}
\label{eq:two_action_case_suboptimal_action_param_supermartingale}
    \EEt{\theta_{t+1}(2)} = \theta_t(2) - \eta \cdot \pi_{\theta_t}(2) \cdot \Delta \cdot \left( 1 - \pi_{\theta_t}(2) \right) &< \theta_t(2),
\end{align}
which means that $\theta_t(a^*)$ is monotonically increasing in expectation and $\theta_t(2)$ is monotonically decreasing in expectation. In other words, $\{ \theta_t(a^*) \}_{t \ge 1}$ is a sub-martingale, while $\{ \theta_t(2) \}_{t \ge 1}$ is a  super-martingale. However, since $\theta_t \in \sR^K$ is unbounded, Doob's martingale convergence results cannot be directly applied, so we pursue a different argument. Following \citep{mei2022role,mei2024stochastic}, given an action $a \in [K]$, we define $P_t(a) \coloneqq \EEt{\theta_{t+1}(a)} -\theta_t(a) $ as the ``progress'', and define $W_t(a) \coloneqq \theta_t(a) - \chE_{t-1}{[ \theta_t(a)]}$ as the ``noise'', where $\theta_{t}(a) = W_t(a) + P_{t-1}(a) + \theta_{t-1}(a)$.  By recursion we can determine that,
\begin{align}
\label{eq:cumulative_noise_progress}
    \theta_t(a) = \EE{\theta_1(a)} + \sum_{s=1}^{t}{W_s(a)} + \sum_{s=1}^{t-1}{P_s(a)},
\end{align}
i.e., $\theta_t(a)$ is the result of ``cumulative progress'' and ``cumulative noise''. According to \citep[Theorem C.3]{mei2024stochastic}, the cumulative noise term can be bounded by using martingale concentration, where the order of the corresponding confidence interval is smaller than the order of the cumulative progress. 
Therefore, the summation will always be determined by the cumulative progress as $t \to \infty$. According to the calculations in~\cref{eq:two_action_case_optimal_action_param_submartingale_b,eq:two_action_case_suboptimal_action_param_supermartingale}, we have $P_t(a^*) > 0$ and $P_t(2) < 0$,  both of which are not summable. As a result, $\theta_t(a^*) \to \infty$ and $\theta_t(2) \to -\infty$ as $t \to \infty$, which implies that $\frac{ \pi_{\theta_t}(a^*)}{\pi_{\theta_t}(2)} = \exp\{ \theta_t(a^*) - \theta_t(a)\} \to \infty$, hence $\pi_{\theta_t}(a^*) \to 1$ as $t \to \infty$.

\subsection{Global Asymptotic Convergence for all $K \ge 2$}

The illustrative two-action case shows that if $\pi_{\theta_t}^\top r \in (r(2), r(a^*))$ and if both actions are sampled infinitely often, then we have, almost surely $\theta_t(a^*) \to \infty$ and $\theta_t(2) \to - \infty$ as $t \to \infty$. However, the question at the beginning of \cref{subsec:avoid_lack_of_exploration} remains: when $K > 2$, if the two actions sampled infinitely often in \cref{lem:at_least_two_actions_infinite_sample_time} are both sub-optimal, will that result in a similar failure mode to the one described in \cref{subsec:failure_model_aggressive_updates}? The answer is no, which follows from our third key finding, which is based on another contradiction-based argument that establishes almost sure convergence to a globally optimal policy in the general $K > 2$ case.

\begin{theorem}
\label{thm:general_action_global_convergence}
Given $K \ge 2$, using \cref{alg:gradient_bandit_algorithm_sampled_reward} with any $\eta \in \Theta(1)$, we have, almost surely, $\pi_{\theta_t}(a^*) \to 1$ as $t \to \infty$, where $a^* = \argmax_{a \in [K]}{ r(a) }$ is the optimal action.
\end{theorem}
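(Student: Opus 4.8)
The plan is to argue by contradiction, assuming that the optimal action is sampled only finitely often and then contradicting \cref{lem:at_least_two_actions_infinite_sample_time}. Suppose, for contradiction, that on an event of positive probability $N_\infty(a^*) < \infty$, and work throughout on the almost-sure event where \cref{lem:finite_sample_time_implies_finite_parameter}, \cref{lem:at_least_two_actions_infinite_sample_time}, and the cumulative-noise bound underlying \cref{eq:cumulative_noise_progress} all hold. Let $S := \{ a \in [K] : N_\infty(a) = \infty \}$; by \cref{lem:at_least_two_actions_infinite_sample_time} we have $|S| \ge 2$, and under the contradiction hypothesis $a^* \notin S$. Write $a^+ := \argmax_{a \in S} r(a)$ for the best infinitely-sampled action; since $a^* \notin S$, \cref{assp:reward_no_ties} gives $r(a^+) < r(a^*)$. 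By the extended Borel--Cantelli equivalence \cref{eq:ebc_result}, reaching a contradiction only requires showing $\sum_{t \ge 1} \pi_{\theta_t}(a^*) = \infty$. Since $a^* \notin S$ makes $\theta_t(a^*)$ bounded below by \cref{lem:finite_sample_time_implies_finite_parameter}, the numerator of the softmax is bounded away from zero, so with $Z_t := \sum_{a} e^{\theta_t(a)}$ we have
\[
  \pi_{\theta_t}(a^*) = \frac{ e^{\theta_t(a^*)} }{ Z_t } \ge \frac{ e^{\inf_{s \ge 1} \theta_s(a^*)} }{ Z_t },
\]
and the task reduces to proving $\sum_{t \ge 1} 1/Z_t = \infty$, i.e. that the normalizer grows slowly enough.

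The key observation is that it suffices to exhibit a \emph{single} infinitely-sampled action whose parameter stays bounded above. Indeed, if some $b \in S$ has $B := \sup_{t \ge 1} \theta_t(b) < \infty$, then $1/Z_t = \pi_{\theta_t}(b)/e^{\theta_t(b)} \ge e^{-B}\,\pi_{\theta_t}(b)$, and summing over $t$ gives $\sum_t 1/Z_t \ge e^{-B} \sum_t \pi_{\theta_t}(b) = \infty$ because $b \in S$ means $\sum_t \pi_{\theta_t}(b) = \infty$ by \cref{eq:ebc_result}. To produce such a $b$, I would first show the policy concentrates on $a^+$ within $S$, i.e. $\pi_{\theta_t}(a^+) \to 1$, hence $\delta_t := \pi_{\theta_t}^\top r \to r(a^+)$. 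Granting this, pick any $b \in S \setminus \{a^+\}$ (which exists since $|S| \ge 2$); by \cref{assp:reward_no_ties}, $r(b) < r(a^+) = \lim_t \delta_t$, so eventually the advantage $r(b) - \delta_t$ is negative and bounded away from zero. Its progress $P_t(b) = \eta\,\pi_{\theta_t}(b)\,(r(b) - \delta_t)$ is then negative, and since $\sum_t \pi_{\theta_t}(b) = \infty$ the cumulative progress diverges to $-\infty$; as the cumulative noise in \cref{eq:cumulative_noise_progress} is of strictly lower order, $\theta_t(b) \to -\infty$, so $\sup_t \theta_t(b) < \infty$, as required. This yields $\sum_t \pi_{\theta_t}(a^*) = \infty$, contradicting $a^* \notin S$, and therefore $a^* \in S$.

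The main obstacle is establishing the concentration step $\pi_{\theta_t}(a^+) \to 1$ in the general $K > 2$ setting. This is the lift of \cref{thm:two_action_global_convergence} from two actions to the infinitely-sampled set $S$: for each $b \in S \setminus \{a^+\}$ one must drive $\theta_t(a^+) - \theta_t(b) \to \infty$ using the progress/noise decomposition \cref{eq:cumulative_noise_progress}. The new difficulty relative to the two-action warm-up is that $\delta_t$ now feels the superior-but-finitely-sampled actions, most notably $a^*$ itself, whose reward exceeds $r(a^+)$; this can make the advantage of $a^+$ transiently negative. The resolution I expect is that the total contribution of all finitely-sampled actions to the cumulative progress is absolutely summable, since their probabilities sum to a finite value by \cref{eq:ebc_result} while rewards are bounded by $R_{\max}$, so these terms only perturb the drift by a finite amount and cannot overturn the divergent positive within-$S$ drift generated by the reward gaps $r(a^+) - r(b) > 0$. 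Crucially, this argument is insensitive to the size of $\eta$: exactly as in \cref{thm:two_action_global_convergence} and the discussion following \cref{lem:at_least_two_actions_infinite_sample_time}, the softmax Jacobian in \cref{prop:gradient_bandit_algorithm_equivalent_to_stochastic_gradient_ascent_sampled_reward} makes cumulative progress dominate cumulative noise for any constant learning rate.

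Finally, once $a^* \in S$ is established, the proof concludes by the same concentration argument applied with $a^+ = a^*$. Now $a^*$ is the global optimum, so its advantage $r(a^*) - \delta_t \ge 0$ is always nonnegative, its progress is a genuine submartingale increment as in \cref{eq:cumulative_noise_progress}, and the cumulative progress is non-summable; hence $\theta_t(a^*) - \theta_t(a) \to \infty$ for every $a \ne a^*$, giving $\pi_{\theta_t}(a^*) \to 1$ almost surely, which is the claim.
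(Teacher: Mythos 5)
Your high-level skeleton matches the paper's (a case split on whether $N_\infty(a^*)$ is finite), and two of your ingredients are sound: the normalizer reduction is correct and is in fact just the paper's \cref{lem:unbouned_prob_ratio} unrolled (bounded-above $\theta_t(b)$ for some infinitely-sampled $b$, plus bounded-below $\theta_t(a^*)$ from \cref{lem:finite_sample_time_implies_finite_parameter}, forces $\sum_t \pi_{\theta_t}(a^*) = \infty$ and hence $N_\infty(a^*) = \infty$ by \cref{eq:ebc_result}); and the step from concentration to $\theta_t(b) \to -\infty$ is fine. The genuine gap is the step you yourself call the main obstacle: the concentration $\pi_{\theta_t}(a^+) \to 1$ within $S$, which your argument needs in \emph{both} halves, and whose proposed resolution does not close it. Your resolution rests on the claim that the ``divergent positive within-$S$ drift'' dominates a summable perturbation, but the divergence of that drift is precisely what must be proven and does not follow from anything you cite. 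The within-$S$ cumulative drift for $a^+$ has the form $\sum_t \pi_{\theta_t}(a^+)\sum_{b \in S \setminus \{a^+\}} \pi_{\theta_t}(b)\,(r(a^+)-r(b))$; each factor is non-summable by \cref{eq:ebc_result}, but a sum of products of two non-summable nonnegative sequences can be finite (the sequences can be out of phase), so non-summability of the product sum is a real claim. The paper devotes a dedicated contradiction argument to exactly this point: if the product sum were finite, then \cref{lem:bounded_progress_bounded_parameter} would bound $\theta_t(i_1)$, \cref{lem:positive_progress_lower_bounded_parameter} would bound the reference parameter below, and the resulting probabilities bounded away from zero would force the sum to diverge after all. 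Nothing in your sketch plays this role. The same gap recurs in your final paragraph, where you assert that once $a^* \in S$ the cumulative progress of $a^*$ is non-summable and that this alone yields $\theta_t(a^*) - \theta_t(a) \to \infty$ for \emph{all} $a \neq a^*$; the paper instead needs the recursion through the sorted rewards of $\gA_\infty$ to kill off each infinitely-sampled suboptimal arm one at a time.

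A second, related omission: even to fix the \emph{sign} of the relevant drifts eventually (that $a^+$'s advantage, or the worst infinitely-sampled arm's disadvantage, is eventually one-signed), you need the pointwise statement that $\pi_{\theta_t}^\top r$ eventually lies strictly between $\min_{a \in S} r(a)$ and $\max_{a \in S} r(a)$. Summability of $\sum_t \pi_{\theta_t}(a^*)$ does not give this, since $\pi_{\theta_t}(a^*)$ could still be large at sparse times; the paper's \cref{lem:expected_reward_range} proves it via pointwise probability-ratio comparisons against the finitely-sampled arms. Finally, note that your route makes the first half harder than it needs to be: the paper never proves concentration under the contradiction hypothesis. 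It only needs the far weaker fact that the \emph{worst} infinitely-sampled action $i_1 = \argmin_{a \in \gA_\infty} r(a)$ has eventually negative drift (immediate from \cref{lem:expected_reward_range}), whence \cref{lem:negative_progress_upper_bounded_parameter} gives $\sup_t \theta_t(i_1) < \infty$ --- which is exactly the bounded-above parameter your normalizer argument requires. Replacing your concentration step with that observation would repair the first half; the second half still requires the paper's recursive argument in full.
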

\paragraph{Proof sketch.} We consider two cases: $N_\infty(a^*) < \infty$ and $N_\infty(a^*) = \infty$, corresponding to whether the optimal action is sampled finitely or infinitely often as $t \to \infty$. We argue that the first case ($N_\infty(a^*) < \infty$) is impossible, while for the second case ($N_\infty(a^*) = \infty$) we prove that $\theta_t(a^*) - \theta_t(a) \to \infty$ for all $a \in [K]$ with $r(a) < r(a^*)$, which implies $\pi_{\theta_t}(a^*) \to 1$ as $t \to \infty$.

\textit{First case.} Suppose that $N_\infty(a^*) < \infty$. We argue that this is impossible via contradiction. 
Given the assumption and \cref{lem:at_least_two_actions_infinite_sample_time} we know there must be at least two other sub-optimal actions $i_1, i_2 \in [K]$, $i_1 \ne i_2$, such that $N_\infty(i_1) = N_\infty(i_2) = \infty$.
In particular, let $i_1=\argmin_{a \in [K], N_\infty(a) = \infty} r(a)$
and $i_2=\argmax_{a \in [K], N_\infty(a) = \infty} r(a)$,
hence $r(i_1) < r(i_2) < r(a^*)$.
By \cref{lem:expected_reward_range} (see Appendix) we will also have $r(i_1) < \pi_{\theta_t}^\top r < r(i_2)$ for sufficiently large $t \ge 1$ , which implies for  action $i_1$,
\begin{align}
\label{eq:general_case_first_case_suboptimal_action_param_supermartingale}
    \EEt{\theta_{t+1}(i_1)} &= \theta_t(i_1) + \eta \cdot \pi_{\theta_t}(i_1) \cdot \left(  r(i_1) - \pi_{\theta_t}^\top r \right) < \theta_t(i_1),
\end{align}
for sufficiently large $t \ge 1$, which further implies
that $\sup_{t \ge 1}{ \theta_t(i_1) } < \infty$. 
Meanwhile, for the optimal action $a^*$, the assumption and \cref{lem:finite_sample_time_implies_finite_parameter} imply that $\inf_{t \ge 1}{ \theta_t(a^*)} > -\infty$.
Combining these two observations gives,
\begin{align}
\label{eq:general_case_first_case_bounded_prob_ratio}
    \sup_{t \ge 1}{ \frac{\pi_{\theta_t}(i_1)}{\pi_{\theta_t}(a^*)} } = \sup_{t \ge 1} \, \exp\{ \theta_t(i_1) - \theta_t(a^*) \}  < \infty \,.
\end{align}
On the other hand, since $N_\infty(i_1) = \infty$ and $N_\infty(a^*) < \infty$ (by assumption), we then have
$\sup_{t \ge 1}{ \exp\{ \theta_t(i_1) - \theta_t(a^*)} \} = \infty$
by \cref{lem:unbouned_prob_ratio} (see Appendix), which contradicts \cref{eq:general_case_first_case_bounded_prob_ratio}.

\textit{Second case.} Suppose that $N_\infty(a^*) = \infty$. We will argue that $\pi_{\theta_t}(a^*) \to 1$ as $t \to \infty$ almost surely.
First, according to \cref{lem:at_least_two_actions_infinite_sample_time}, there exists at least one sub-optimal action $i_1\in[K]$, $i_1\neq a^*$, such that $N_\infty(i_1) = \infty$.
Let $i_1=\argmin_{a \in [K], N_\infty(a) = \infty} r(a)$.
By \cref{lem:expected_reward_range} and the definition of $a^*$, we have $r(i_1) < \pi_{\theta_t}^\top r < r(a^*)$ for all sufficiently large $t \ge 1$. Since $N_\infty(i_1) = \infty$, using similar calculations to~\cref{eq:two_action_case_suboptimal_action_param_supermartingale,eq:cumulative_noise_progress} in \cref{thm:two_action_global_convergence}, we have, $\theta_t(i_1) \to -\infty$ as $t \to \infty$. We also have, $\inf_{t \ge 1} \theta_t(a^*) > - \infty$ as $t \to \infty$. Hence, $\frac{ \pi_{\theta_t}(a^*) }{ \pi_{\theta_t}(i_1)} = \exp\{ \theta_t(a^*) - \theta_t(i_1) \} \to \infty$ as $t \to \infty$. 

Define $\gA_\infty \coloneqq \left\{ a \in [K] \ | \ N_\infty(a) = \infty \right\}$ as the set of actions that are sampled infinitely often, and note that $| \gA_\infty | \ge 2$ by \cref{lem:at_least_two_actions_infinite_sample_time}. Sort the action indices in $\gA_\infty$ according to their expected reward values in descending order, i.e.,
\begin{align}
\label{eq:general_case_second_case_descending_action_indices}
    r(a^*) > r(i_{|\gA_\infty| - 1}) > r(i_{|\gA_\infty| - 2}) > \cdots > r(i_2) > r(i_1).
\end{align}
\cref{assp:reward_no_ties} is used here to prevent two arms from having the same reward and thus guarantee the inequalities are strict in \cref{eq:general_case_second_case_descending_action_indices}. Next, using similar calculations as in \cref{lem:expected_reward_range}, we have,
\begin{align}
\label{eq:general_case_second_case_reward_difference}
    \pi_{\theta_t}^\top r - r(i_2) > \pi_{\theta_t}(a^*) \cdot \bigg[ r(a^*) - r(i_2) - \sum_{a^- \in \gA^-(i_2)}{ \frac{ \pi_{\theta_t}(a^-)}{ \pi_{\theta_t}(a^*) } \cdot ( r(i_2) - r(a^-)) } \bigg],
\end{align}
where $\gA^-(i_2) \coloneqq \left\{ a^- \in [K]: r(a^-) < r(i_2) \right\}$ is the set of actions that have lower mean reward than $i_2 \in [K]$, and note that $i_1 \in \gA^-(i_2)$. Using the above definitions, we can conclude that $i_1$ is the only arm in $\gA^-(i_2)$ that has been sampled infinitely often. According to \cref{lem:unbouned_prob_ratio}, for all $a^- \in \gA^-(i_2)$ with $a^- \ne i_1$, we have, $\frac{ \pi_{\theta_t}(a^*) }{ \pi_{\theta_t}(a^-)} \to \infty$ as $t \to \infty$, since $N_\infty(a^*) = \infty$ (by assumption) and $N_\infty(a^-) < \infty$ (by \cref{eq:general_case_second_case_descending_action_indices}). Therefore, for all sufficiently large $t$, the probability ratio in \cref{eq:general_case_second_case_reward_difference} $ \frac{ \pi_{\theta_t}(a^*)}{ \pi_{\theta_t}(a^-)} \to \infty$ for all $a^- \in \gA^-(i_2)$, which implies that, for all sufficiently large $t \ge 1$,
\begin{align}
\label{eq:general_case_second_case_positive_reward_difference_a}
    \pi_{\theta_t}^\top r - r(i_2) > 0.5 \cdot \pi_{\theta_t}(a^*) \cdot \big( r(a^*) - r(i_2) \big) > 0.
\end{align}
We have thus shown that $\pi_{\theta_t}^\top r > r(i_2)$. Recall that we had previously proved that $\pi_{\theta_t}^\top r > r(i_1)$. Hence, we will apply this argument recursively: after this point, $i_2 \in [K]$ will become the new ``$i_1 \in [K]$'', and a similar inequality to \cref{eq:two_action_case_suboptimal_action_param_supermartingale} will then hold for $i_2 \in [K]$ from similar calculations to \cref{eq:two_action_case_suboptimal_action_param_supermartingale,eq:cumulative_noise_progress} in \cref{thm:two_action_global_convergence},
establishing $\theta_t(i_2) \to -\infty$ as $t \to \infty$. 
This will imply that for all sufficiently large $t \ge 1$,
\begin{align}
\label{eq:general_case_second_case_positive_reward_difference_b}
    \pi_{\theta_t}^\top r - r(i_3) > 0.5 \cdot \pi_{\theta_t}(a^*) \cdot \big( r(a^*) - r(i_3) \big) > 0.
\end{align}
Continuing the recursive argument, we can conclude for all actions $a \in \gA_\infty$ with $a \ne a^*$ that $\frac{ \pi_{\theta_t}(a^*) }{ \pi_{\theta_t}(a)} \to \infty$ as $t \to \infty$. 
Meanwhile, for all actions $a \not\in \gA_\infty$, \cref{lem:unbouned_prob_ratio} also shows that
$\frac{ \pi_{\theta_t}(a^*) }{ \pi_{\theta_t}(a)} \to \infty$ as $t \to \infty$. Combining these two results yields the conclusion that for all sub-optimal actions $a \in [K]$ with $r(a) < r(a^*)$ we have $\frac{ \pi_{\theta_t}(a^*) }{ \pi_{\theta_t}(a)} \to \infty$ as $t \to \infty$, which implies $\pi_{\theta_t}(a^*) \to 1$ as $t \to \infty$. Thus, we have established almost sure convergence to the globally optimal policy.

\paragraph{Discussion.} \cref{lem:at_least_two_actions_infinite_sample_time} is important to prove that the optimal arm will be sampled infinitely often. In particular,~\cref{lem:at_least_two_actions_infinite_sample_time} guarantees $| \gA_\infty | \ge 2$ and the existence of $i_1$ in \cref{eq:general_case_first_case_suboptimal_action_param_supermartingale}, which can then be used to construct the contradiction in \cref{eq:general_case_first_case_bounded_prob_ratio}. Without \cref{lem:at_least_two_actions_infinite_sample_time}, 
$| \gA_\infty |$ might be equal to $1$ and the failure mode in \cref{subsec:failure_model_aggressive_updates} can occur, resulting in \cref{alg:gradient_bandit_algorithm_sampled_reward} not sampling the optimal action infinitely often as $t \to \infty$.

\subsection{Asymptotic Rate of Convergence}

According to \cref{thm:general_action_global_convergence}, almost surely, $\pi_{\theta_t}(a^*) \to 1$ as $t \to \infty$. Therefore, after a large enough time $\tau < \infty$, we have $\pi_{\theta_t}(a^*) \ge 1/2$, which implies that the ``progress'' term in \cref{eq:cumulative_noise_progress} can be lower bounded. With this, an asymptotic rate of convergence can be proved as follows.
\begin{theorem}
\label{thm:asymptotic_rate_of_convergence}
For a large enough $\tau > 0$, for all $T > \tau$, the average sub-optimality decreases at an $O\left(\frac{\ln(T)}{T} \right)$ rate. Formally, if $a^*$ is the optimal arm, then, for a constant $c$,
\begin{align*}
\frac{\sum_{s=\tau}^{T} r(a^*) - \langle \pi_s, r \rangle}{T}  & \leq \frac{c \, \ln(T)}{T - \tau}.
\end{align*}
\end{theorem}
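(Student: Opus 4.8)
The plan is to reduce the averaged sub-optimality to the cumulative \emph{progress} of the optimal parameter and then control that sum. Write $\delta_t \coloneqq r(a^*) - \pi_{\theta_t}^\top r \ge 0$ for the instantaneous sub-optimality, so the target is to bound $\sum_{s=\tau}^{T} \delta_s$. By \cref{thm:general_action_global_convergence}, almost surely $\pi_{\theta_t}(a^*) \to 1$, so there is a finite (random) time $\tau$ with $\pi_{\theta_t}(a^*) \ge 1/2$ for all $t \ge \tau$. The identity behind \cref{eq:two_action_case_optimal_action_param_submartingale_a} gives the progress $P_t(a^*) = \eta \cdot \pi_{\theta_t}(a^*) \cdot \delta_t$, so for $t \ge \tau$ we get $P_t(a^*) \ge \tfrac{\eta}{2}\,\delta_t$, whence $\sum_{s=\tau}^{T} \delta_s \le \tfrac{2}{\eta}\sum_{s=\tau}^{T} P_s(a^*)$. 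Telescoping the decomposition $\theta_{t+1}(a) = \theta_t(a) + P_t(a) + W_{t+1}(a)$ from \cref{eq:cumulative_noise_progress} then yields $\sum_{s=\tau}^{T} P_s(a^*) = \theta_{T+1}(a^*) - \theta_\tau(a^*) - \sum_{s=\tau+1}^{T+1} W_s(a^*)$, with $\theta_\tau(a^*)$ a finite constant. Thus everything hinges on an $O(\ln T)$ bound for $\theta_{T+1}(a^*)$ together with the cumulative noise being of lower order.

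The heart of the argument, and the main obstacle, is to show the convergence is no faster than the matching $1/t$ rate, i.e.\ $u_t \coloneqq 1 - \pi_{\theta_t}(a^*) = O(1/t)$ almost surely. The difficulty is that naively telescoping the increments of $\theta_t(a^*)$ re-expresses the bound in terms of $\sum_s u_s$ itself, so the estimate is circular and does not close by algebra alone. To break this I would track the dominant gap $d_t \coloneqq \theta_t(a^*) - \theta_t(i_1)$ for $i_1 = \argmin_{a \in \gA_\infty} r(a)$, whose expected one-step increase $P_t(a^*) - P_t(i_1)$ is, for large $t$, at least of order $\pi_{\theta_t}(a^*)\,\delta_t \gtrsim u_t \asymp e^{-d_t}$. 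A second-order expansion of the softmax (the $\diagonalmatrix{(\pi_{\theta_t})} - \pi_{\theta_t}\pi_{\theta_t}^\top$ curvature), together with the fact that the per-step conditional variance of the update to $\theta_t(a^*)$ is of order $\pi_{\theta_t}(a^*)\,u_t \le u_t$, produces a stochastic recursion of the form $\mathbb{E}_t[u_{t+1}] \le u_t - c\,u_t^2 + (\text{lower order})$; a discrete Gronwall / stochastic-approximation argument then delivers $u_t = O(1/t)$, equivalently $d_t = \Omega(\ln t)$. Controlling the mean-zero noise against this quadratic drift is the technically demanding step.

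Once $u_t = O(1/t)$ is in hand, the remaining pieces follow. For the parameter bound I would use conservation of $\sum_a \theta_t(a)$ under \cref{alg:gradient_bandit_algorithm_sampled_reward} — which holds because $\vone^\top(\diagonalmatrix{(\pi_{\theta_t})} - \pi_{\theta_t}\pi_{\theta_t}^\top) = \vzero$ by \cref{prop:gradient_bandit_algorithm_equivalent_to_stochastic_gradient_ascent_sampled_reward}: the finitely-sampled coordinates stay bounded by \cref{lem:finite_sample_time_implies_finite_parameter}, while each infinitely-sampled sub-optimal coordinate decreases no faster than $-O(\ln t)$ (its total downward drift is $\eta R_{\max}\sum_s \pi_{\theta_s}(\cdot) = O(\ln t)$ since $u_t = O(1/t)$), so conservation forces $\theta_{T+1}(a^*) = O(\ln T)$. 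For the noise, $\{W_s(a^*)\}$ is a martingale with increments bounded by $2\eta R_{\max}$ and quadratic variation $\sum_s \mathrm{Var}_s(W_s(a^*)) \le \eta^2 R_{\max}^2 \sum_s u_s = O(\ln T)$, so the martingale concentration used in the proof of \cref{thm:two_action_global_convergence} (Theorem~C.3 of \citep{mei2024stochastic}) gives $|\sum_s W_s(a^*)| = O(\sqrt{\ln T \cdot \ln\ln T}) = o(\ln T)$, strictly lower order than the leading term.

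Finally I would assemble: $\sum_{s=\tau}^{T} \delta_s \le \tfrac{2}{\eta}\big[\theta_{T+1}(a^*) - \theta_\tau(a^*) - \sum_{s} W_s(a^*)\big] = O(\ln T)$. Dividing by $T$ and absorbing the harmless factor $T/(T-\tau) \to 1$ into the constant $c$ yields the claimed bound $\frac{1}{T}\sum_{s=\tau}^{T}\big(r(a^*) - \pi_{\theta_s}^\top r\big) \le \frac{c\,\ln T}{T-\tau}$. The reductions in the first, third, and fourth steps are routine; I expect the quadratic-drift recursion establishing $u_t = O(1/t)$, and in particular controlling the noise against it, to be where essentially all the work lies.
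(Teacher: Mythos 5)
Your outer scaffolding (reducing $\sum_s \delta_s$ to the cumulative progress of $\theta_t(a^*)$, telescoping against the noise, the conservation of $\sum_a \theta_t(a)$) is sound, but the step you yourself identify as the heart of the argument is where the proposal breaks, and it breaks for a structural reason rather than a technical one. First, an almost-sure last-iterate rate $u_t \coloneqq 1-\pi_{\theta_t}(a^*) = O(1/t)$ is strictly stronger than \cref{thm:asymptotic_rate_of_convergence}: the paper explicitly does \emph{not} prove it and states (in the discussion of the convergence rate in \cref{sec:sim}) that whether such a last-iterate rate holds is left open. Second, the mechanism you propose for it --- a drift inequality $\mathbb{E}_t[u_{t+1}]\le u_t - c\,u_t^2 + (\text{lower order})$ --- is false in general for large constant $\eta$. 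The reason is that $u_t$ is a convex nonlinear function of the logit gaps and the updates are \emph{not} small: whenever a sub-optimal arm is sampled (probability $\approx u_t$), the gaps $\theta_t(a^*)-\theta_t(a)$ jump by $\pm\Theta(\eta R_{\max})$, so $u_t$ jumps by a constant multiplicative factor $e^{\pm\Theta(\eta R_{\max})}$. Concretely, take $K=2$, arm $1$ with deterministic reward $r_1>0$ and arm $2$ with reward $\pm\rho$ equiprobably (mean $0$); a direct computation over the three possible transitions gives
\begin{align*}
\mathbb{E}_t[u_{t+1}] = u_t + u_t^2\left(\cosh(2\eta\rho) - 1 - 2\eta r_1\right) + O(u_t^3),
\end{align*}
and since $\cosh(2\eta\rho)$ grows exponentially in $\eta$ while $2\eta r_1$ grows only linearly, the quadratic drift is \emph{positive} once $\eta$ is large. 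This is Jensen's inequality at work: the zero-mean reward noise enters the gap linearly, but $u\approx e^{-\mathrm{gap}}$ is convex, so the noise inflates $\mathbb{E}_t[u_{t+1}]$. The process still converges, but only because the gap drifts upward on the log scale --- which is precisely why the paper's analysis lives entirely at the level of $\theta_t$, where the stochastic gradient is unbiased, and never writes a drift inequality for $\pi_{\theta_t}$. Even if your inequality did hold, it would yield $\mathbb{E}[u_t]=O(1/t)$, not the almost-sure pointwise bound that your subsequent conservation and noise steps consume.

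The circularity you correctly identified is resolved in the paper not by a pointwise rate but by a self-bounding recursion on the partial sums themselves. Let $S_t \coloneqq \sum_{s=\tau}^{t}(1-\pi_{\theta_s}(a^*))$. The progress/noise decomposition together with $\pi_{\theta_s}(a^*)\ge 1/2$ and \cref{lem:parameter_noise_concentration} (the variance $V_t$ enters only under a square root, hence is dominated by the cumulative progress) gives $\theta_t(a^*) \ge C\,S_{t-1}$ for all large $t$, while the proof of \cref{thm:general_action_global_convergence} already gives $\sup_t\theta_t(a)<\infty$ for every sub-optimal $a$. Hence $1-\pi_{\theta_t}(a^*) \le \sum_{a\ne a^*}e^{\theta_t(a)-\theta_t(a^*)}\le (K-1)\,e^{-C S_{t-1}}$, i.e.\ $S_t \le S_{t-1} + (K-1)\,e^{-C S_{t-1}}$: the sum bounds its own increments. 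Comparison with the difference equation $y_{n+1}=y_n+B\,e^{-cy_n}$ (\cref{lemma:difference-inequality} and \cref{lemma:partial-sum-bound}, combined in \cref{lemma:partial-sum-combination}) yields $S_T \le \frac{1}{C}\ln(CT+e^{CM})+\frac{\pi^2}{12C}=O(\ln T)$, and then $r(a^*)-\pi_{\theta_s}^\top r \le 2R_{\max}\,(1-\pi_{\theta_s}(a^*))$ finishes the theorem. Your telescoping and conservation ideas are fine as packaging, but they must be fed by this partial-sum recursion; as written, the proposal rests on a claim that is both stronger than the theorem being proved and, for large $\eta$, not obtainable by the drift argument you sketch.
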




\vskip-2ex
\section{Simulation Study}
\label{sec:sim}
\vskip-1ex

To validate and enhance the theoretical findings, we ran experiments with a four action bandit environment
($K=4$)
with a true mean reward vector of $r = (0.2, 0.05, -0.1, -0.4)^\top \in \sR^4$.
The reward distribution $P_a$ for arm $1\le a \le 4$ is Gaussian, centered at $r(a)$
and with a standard deviation of $0.1$.
The environment is chosen to illustrate various phenomenon, which we discuss after presenting the results.
The algorithm is \cref{alg:gradient_bandit_algorithm_sampled_reward}
with 
$\theta_1 = \rvzero \in \sR^K$.

For comparison, assuming that the random rewards belong to the $[-1,1]$ interval,
the only result for the stochastic gradient bandit algorithm~\citep[Lemma 4.6]{mei2024stochastic}
that allowed a constant learning rate
required that the learning rate be less 
than $\eta_c = \frac{\Delta^2}{40 \cdot K^{3/2} \cdot R_{\max}^3 } = \frac{9}{128000} \approx 0.00007$, where we used $\Delta = 0.15$, $K = 4$, and $R_{\max} = 1$.
While technically, the result does not apply to our case where the reward distributions have unbounded support, the probability of the reward landing outside of $[-1,1]$ is in the order of $10^{-9}$.
Choosing $R_{\max}$ to be larger, this probability falls extremely quickly, which suggests that the above threshold is generous.
For the experiments
 we use the learning rates $\eta \in \{1, 10, 100, 1000\}$, that are several orders of magnitudes larger than $\eta_c$.
 For each learning rate, we plot the outcome of $10$ runs, corresponding to different random seeds.
Each run lasts $10^6 (\approx e^{14})$ iterations. 
The log-suboptimality gaps for the $4\times 10$ cases are shown on
Figures~\ref{fig::sub_optimality_gap_general_action_case_eta_1}-\ref{fig::sub_optimality_gap_general_action_case_eta_1000}, 
where they are plotted against the logarithm of time. 
Additional results for $K=2$ arms are shown in Appendix~\ref{app:sim}.
Note that in this example small sub-optimality implies that the optimal arm is chosen with high probability.
In what follows, we discuss the results in the plots. 

\paragraph{Asymptotic convergence.} 
For the smaller learning rates of $\eta = 1$ and $10$, all $10$ seeds rapidly and steadily converge, reaching a sub-optimality of $e^{-14}$ or less. For $\eta = 100$ and $1000$, most of the runs reach even small error even faster, but some runs are ``stuck'' even after $10^6$ steps. Note that this does not contradict the theoretical result; nor do we suspect numerical issues. As seen for the case of $\eta=100$, even after a long phase with little to no progress, a run can ``recover'' (see the grey curve). In fact, it is reasonable to expect that the price of increasing the learning rate is larger variance; as seen in these plots (subplots (a) and (b) are also attesting to this). Differences between learning rates are further discussed below.


\paragraph{Non-monotone objective value.} Using a very small learning rate guarantees monotonic improvement (in expectation) in the policy's expected reward~\citep{mei2024stochastic}. 
Conversely, a large learning rate results in non-monotonic evolution of the expected rewards $\{ \pi_{\theta_t}^\top r \}_{t \ge 1}$, even in the final stages of convergence, as can be seen clearly for 
the learning rates of $\eta = 1$ and $10$ in Figures~\ref{fig::sub_optimality_gap_general_action_case_eta_1} and~\ref{fig::sub_optimality_gap_general_action_case_eta_10}. For larger $\eta$, the non-monotone behavior happens over longer periods and is less visible in the plots. This is because using large learning rates causes the policy to rapidly increase the parameters for some action, after which the gradient becomes small, limiting further progress.

\begin{figure}
\centering
\begin{subfigure}[b]{.39\linewidth}
\includegraphics[width=\linewidth]{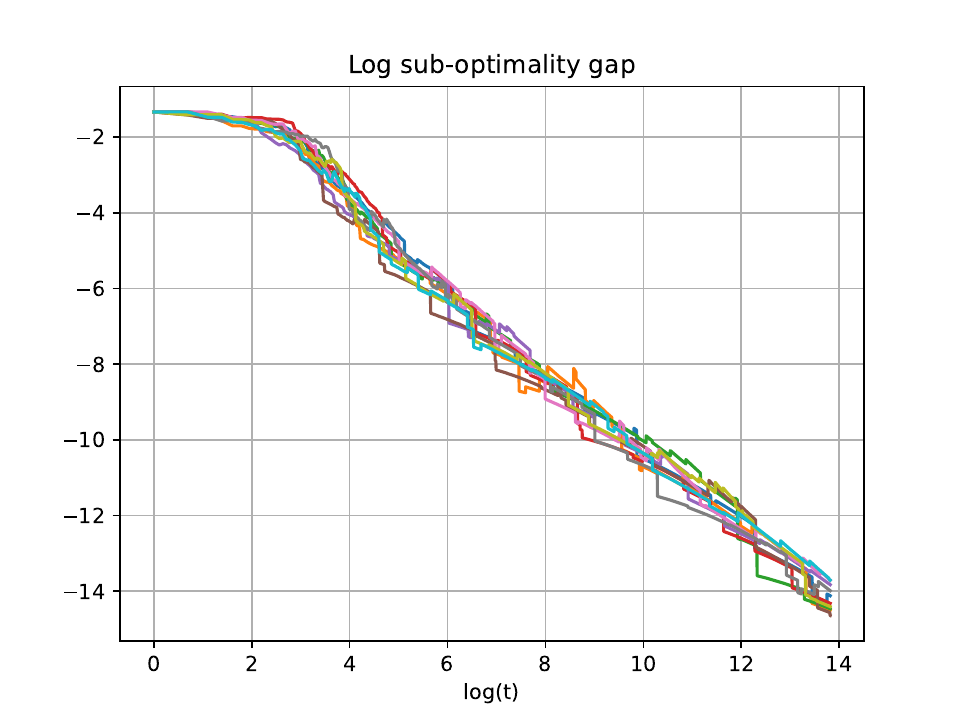}
\caption{$\eta = 1$.}\label{fig::sub_optimality_gap_general_action_case_eta_1}
\end{subfigure}
\begin{subfigure}[b]{.39\linewidth}
\includegraphics[width=\linewidth]{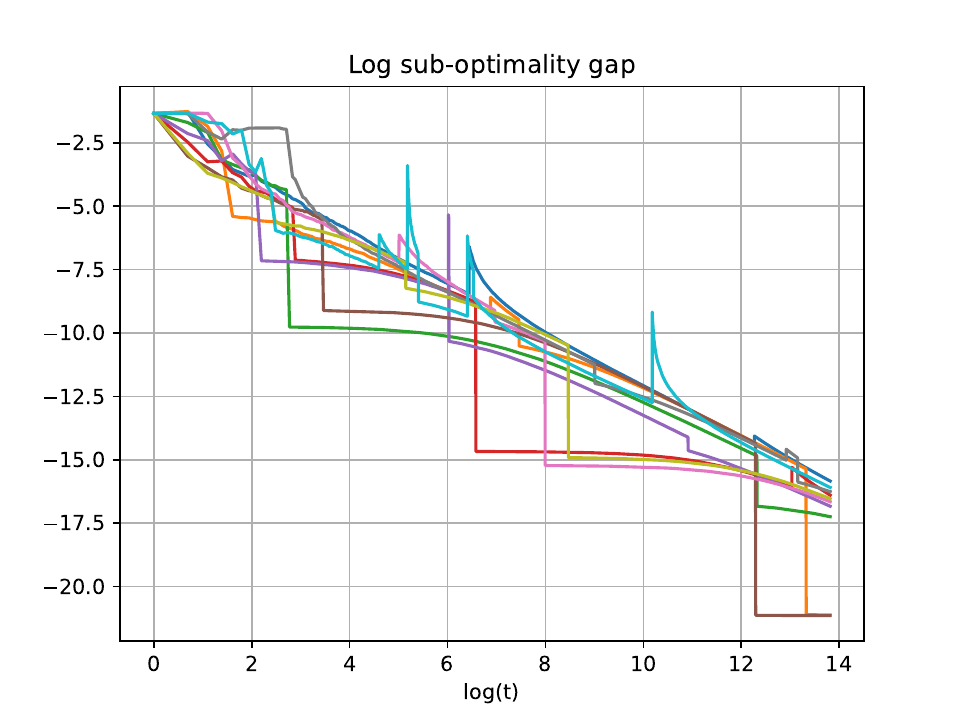}
\caption{$\eta = 10$.}\label{fig::sub_optimality_gap_general_action_case_eta_10}
\end{subfigure}\\
\begin{subfigure}[b]{.39\linewidth}
\includegraphics[width=\linewidth]{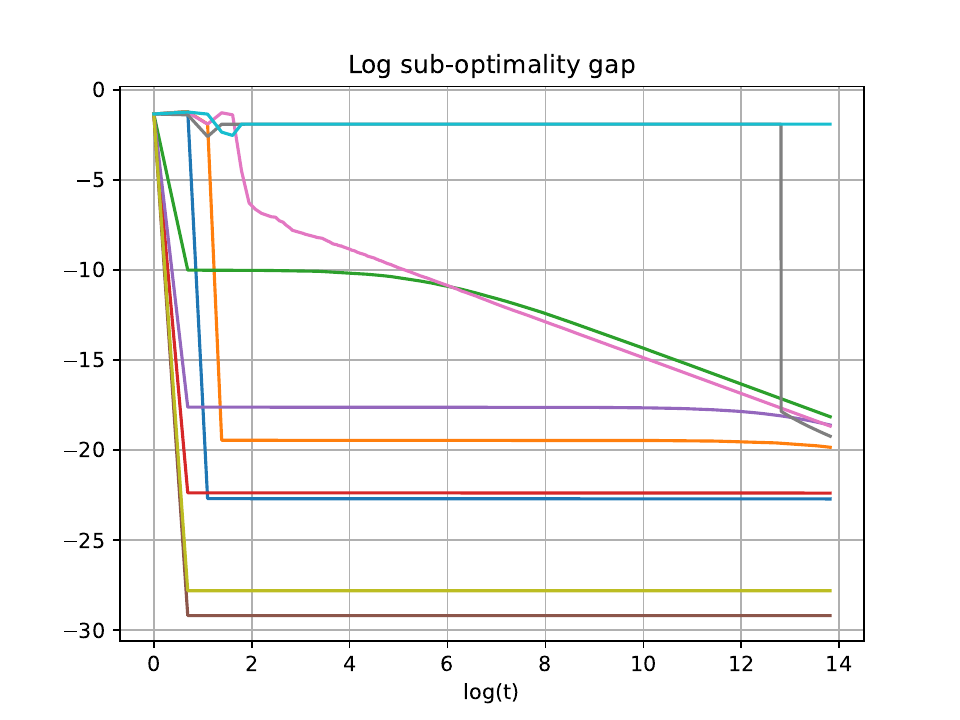}
\caption{$\eta = 100$.}\label{fig::sub_optimality_gap_general_action_case_eta_100}
\end{subfigure}
\begin{subfigure}[b]{.39\linewidth}
\includegraphics[width=\linewidth]{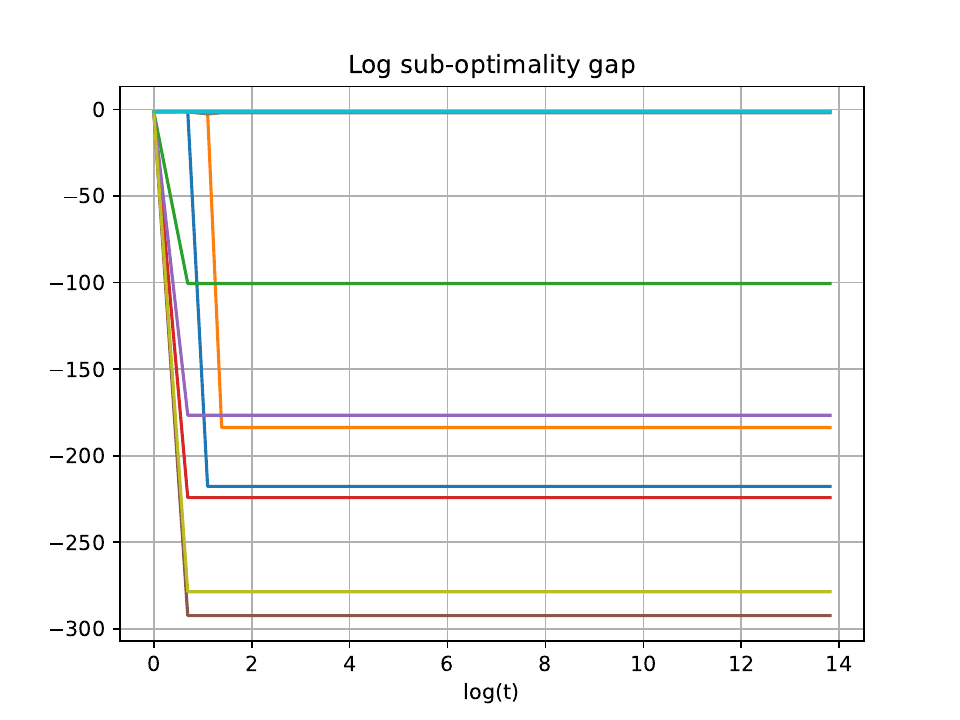}
\caption{$\eta = 1000$.}\label{fig::sub_optimality_gap_general_action_case_eta_1000}
\end{subfigure}
\caption{
Log sub-optimality gap, 
$\log{ (r(a^*) - \pi_{\theta_t}^\top r) } $, plotted against the logarithm of time,  $\log{t}$, in a $4$-action problem with various learning rates, $\eta$. 
Each subplot shows a run with a specific learning rate. The curves in a subplot correspond to 10 different random seeds. Theory predicts that essentially all seeds will lead to a curve converging to zero ($-\infty$ in these plots). For a discussion of the results, see the text.,
}
\label{fig:visualization_general_action_case}
\vspace{-10pt}
\end{figure}
\paragraph{Rate of convergence.} Figures~\ref{fig::sub_optimality_gap_general_action_case_eta_1} and~\ref{fig::sub_optimality_gap_general_action_case_eta_10}, where the log-log plot has a slope of nearly $-1$, give some evidence that an $O(1/t)$ asymptotic rate is achieved. In general, such a rate cannot be improved in terms of $t$ \citep{lai1985asymptotically}. \cref{thm:asymptotic_rate_of_convergence} gives a weaker version of convergence rate over averaged iterates (not last iterate), which is slightly worse than $O(1/t)$. More work is needed to verify if the asymptotic convergence rate in \cref{thm:asymptotic_rate_of_convergence} is improvable or not. 

\paragraph{Different learning rates.} 
Two observations can be made from Figure~\ref{fig:visualization_general_action_case} regarding the effect of using different $\eta$ values: \textbf{First}, during the final stage of convergence when $r(a^*) - \pi_{\theta_t}^\top r \approx 0$, using larger $\eta$ results in faster convergence on average. As $\eta$ increases, the order of $\log{ (r(a^*) - \pi_{\theta_t}^\top r) } $ also changes from $e^{-14}$ ($\eta = 1$), to $e^{-20}$ ($\eta = 100$), and $e^{-200}$ ($\eta = 1000$). We conjecture that the asymptotic rate of convergence has an $O(1/\eta)$ dependence. 
\textbf{Second}, using larger learning rates can take a longer time to enter the final stage of convergence. When $\eta = 1$ or $10$, all curves quickly enter the final stage of $r(a^*) - \pi_{\theta_t}^\top r \approx 0$. However, for larger $\eta$ values, $1/10$ runs ($\eta = 100$) and $3/10$ runs ($\eta = 1000$) result in $r(a^*) - \pi_{\theta_t}^\top r $ values far from $0$ even after $10^6$ iterations. These runs take orders of magnitude more iterations to eventually achieve $r(a^*) - \pi_{\theta_t}^\top r \approx 0$. These situations correspond to the policy $\pi_{\theta_t}$ getting stuck near sub-optimal corners of the simplex, meaning that $\pi_{\theta_t}(i) \approx 1$ for a sub-optimal action $i \in [K]$ with $r(i) < r(a^*)$. In such cases, even Softmax PG with the true gradient can remain stuck on a sub-optimal plateau for an extremely long time~\citep{mei2020global}. However, the reason why larger learning rates lead to longer plateaus in the stochastic setting remains unclear.

\paragraph{Trade-offs and multi-stage chracterizations of convergence.} Given the above observations, there appears to exist a trade-off for $\eta$: larger $\eta$ values result in faster convergence during the final stage where $r(a^*) - \pi_{\theta_t}^\top r \approx 0$, but at the the cost of taking far longer to enter this final stage of convergence. Since asymptotic convergence results are insufficient for explaining these subtleties in a satisfactory manner, a more refined analysis that considers the different stages of convergence is required. 


\vskip-2ex
\section{Conclusions and Future Directions}
\label{sec:conclusion}
\vskip-1ex

This work refines our understanding of stochastic gradient bandit algorithms by proving that it converges to a globally optimal policy almost surely with \emph{any} constant learning rate. Our new proof strategy based on the asymptotics of sample counts opens new directions for better characterizing exploration effects of stochastic gradient methods, while also suggesting interesting new questions. 
Characterizing the multiple stages of convergence remains another interesting future direction.
One interesting possibility is  that there might exist an optimal time-dependent scheme for \emph{increasing} the learning rate (such as $\eta \in O(\log{t})$) to accelerate convergence, rather than use a constant $\eta \in O(1)$. This is corroborated by our experiments:
As seen in \cref{fig:visualization_general_action_case}, small learning rates perform better during the early stages of optimization, while larger learning rates achieve faster convergence during the final stage. Other directions include extending our bandit results to the more general RL setting \citep{williams1992simple}, as well as extending our results for the softmax tabular parameterization to handle function approximation~\citep{agarwal2021theory}.

\textbf{Limitations:} While this work establishes a surprising asymptotic convergence result for any constant learning rate, it does not shed light on the effect of different learning rates on the convergence. Moreover, our analysis is limited to multi-armed bandits, and does not immediately extend to the general RL setting. These aspects are the main limitations of this paper.  

\textbf{Broader impact:} This is primarily theoretical work on a fundamental algorithm that is used broadly in RL applications. We expect these results to improve the research community's understanding of the basic stochastic gradient bandit method.

\begin{ack}
Jincheng Mei would like to thank Ramki Gummadi for providing feedback on a draft of this manuscript.
Csaba Szepesv\'ari and Dale Schuurmans gratefully acknowledge funding from 
the Canada CIFAR AI Chairs Program, Amii and NSERC. Sharan Vaswani acknowledges the support from the NSERC Discovery Grant (RGPIN-2022-04816). 
\end{ack}

{\small
\bibliography{neurips_refs}
}
\bibliographystyle{plain}


\appendix
\newpage
\section{Asymptotic Convergence}

\textbf{\cref{lem:finite_sample_time_implies_finite_parameter}.}
Using \cref{alg:gradient_bandit_algorithm_sampled_reward} with any constant $\eta \in \Theta(1)$, if $N_\infty(a) < \infty$ for an action $a \in [K]$, then we have, almost surely,
\begin{align}
    \sup_{t \ge 1}{ \theta_t(a) } < \infty, \text{ and } \inf_{t \ge 1}{ \theta_t(a) } > -\infty.
\end{align}
\begin{proof}
Suppose $N_\infty(a) < \infty$ for an action $a \in [K]$. According to \cref{alg:gradient_bandit_algorithm_sampled_reward},
\begin{align}
\label{eq:finite_sample_time_implies_finite_parameter_proof_1}
    \theta_{t+1}(a) &\gets \theta_t(a) + \begin{cases}
		\eta \cdot \left( 1 - \pi_{\theta_t}(a) \right) \cdot R_t(a), & \text{if } a_t = a\, , \\
		- \eta \cdot \pi_{\theta_t}(a) \cdot R_t(a_t), & \text{otherwise}\, ,
    \end{cases}
\end{align}
and let
\begin{align}
\label{eq:finite_sample_time_implies_finite_parameter_proof_2}
    I_t(a) \coloneqq 
    \begin{cases}
	1, & \text{if } a_t = a\, , \\
	0, & \text{otherwise}\,.
    \end{cases}
\end{align}
According to \cref{eq:finite_sample_time_implies_finite_parameter_proof_1}, we have, for all $t \ge 1$,
\begin{align}
\label{eq:finite_sample_time_implies_finite_parameter_proof_3}
    \theta_t(a) - \theta_1(a) = \sum_{s=1}^{t-1}{ I_s(a) \cdot \eta \cdot \left( 1 - \pi_{\theta_s}(a) \right) \cdot R_s(a)} + \sum_{s=1}^{t-1}{ \left( 1 - I_s(a) \right) \cdot (- \eta) \cdot \pi_{\theta_s}(a) \cdot R_s(a_s) }.
\end{align}
Using triangle inequality, we have,
\begin{align}
\label{eq:finite_sample_time_implies_finite_parameter_proof_4}
    \left| \theta_t(a) - \theta_1(a) \right| &\le \sum_{s=1}^{t-1}{ \Big| I_s(a) \cdot \eta \cdot \left( 1 - \pi_{\theta_s}(a) \right) \cdot R_s(a) \Big| } + \sum_{s=1}^{t-1}{ \Big|\left( 1 - I_s(a) \right) \cdot (- \eta) \cdot \pi_{\theta_s}(a) \cdot R_s(a_s) \Big| } \\
    &\le \eta \cdot R_{\max} \cdot \sum_{s=1}^{t-1}{ I_s(a) } + \eta \cdot R_{\max} \cdot \sum_{s=1}^{t-1}{ \pi_{\theta_s}(a) } \\
    &= \eta \cdot R_{\max} \cdot \Big( N_{t-1}(a) + \sum_{s=1}^{t-1}{ \pi_{\theta_s}(a) } \Big).
\end{align}
By assumption, we have,
\begin{align}
\label{eq:finite_sample_time_implies_finite_parameter_proof_5}
    N_\infty(a) \coloneqq \lim_{t \to \infty}{N_t(a)} < \infty.
\end{align}
According to the extended Borel-Cantelli \cref{lem:ebc}, we have, almost surely,
\begin{align}
\label{eq:finite_sample_time_implies_finite_parameter_proof_6}
    \sum_{t=1}^{\infty}{\pi_{\theta_t}(a)} \coloneqq \lim_{t \to \infty}{ \sum_{s=1}^{t}{ \pi_{\theta_s}(a) } } < \infty.
\end{align}
Combining \cref{eq:finite_sample_time_implies_finite_parameter_proof_4,eq:finite_sample_time_implies_finite_parameter_proof_5,eq:finite_sample_time_implies_finite_parameter_proof_6}, we have, almost surely,
\begin{align}
\label{eq:finite_sample_time_implies_finite_parameter_proof_7}
    \sup_{t \ge 1}{ \left| \theta_t(a) - \theta_1(a) \right| } < \infty,
\end{align}
which implies that, almost surely,
\begin{equation*}
    \sup_{t \ge 1}{ \left| \theta_t(a) \right| } \le \sup_{t \ge 1}{ \left| \theta_t(a) - \theta_1(a) \right| + \left| \theta_1(a) \right| } < \infty. \qedhere
\end{equation*}
\end{proof}

\textbf{\cref{lem:at_least_two_actions_infinite_sample_time} }(Avoiding lack of exploration)\textbf{.}
Using \cref{alg:gradient_bandit_algorithm_sampled_reward} with any $\eta \in \Theta(1)$, there exists at least a pair of distinct actions $i, j \in [K]$ and $i \ne j$, such that, almost surely,
\begin{align}
    N_\infty(i) = \infty, \text{ and } N_\infty(j) = \infty.
\end{align}
\begin{proof}
First, we have, for all $t \ge 1$,
\begin{align}
\label{eq:at_least_two_actions_infinite_sample_time_proof_1}
    t &= \sum_{s=1}^{t}{ \sum_{a \in [K]}{ I_s(a) } } \qquad \Big( \sum_{a \in [K]}{ I_t(a) } = 1 \text{ for all } t \ge 1 \Big) \\
    &= \sum_{a \in [K]} \sum_{s=1}^{t} I_s(a) \\
    &= \sum_{a \in [K]}{ N_t(a) }.
\end{align}
By pigeonhole principle, there exists at least one action $i \in [K]$, such that, almost surely,
\begin{align}
\label{eq:at_least_two_actions_infinite_sample_time_proof_2}
    N_\infty(i) \coloneqq \lim_{t \to \infty}{ N_t(i) } = \infty.
\end{align}
We argue the existence of another action by contradiction. Suppose for all the other actions $j \in [K]$ and $j \ne i$, we have $N_\infty(j) < \infty$. According to the extended Borel-Cantelli \cref{lem:ebc}, we have, almost surely,
\begin{align}
\label{eq:at_least_two_actions_infinite_sample_time_proof_3}
    \sum_{t=1}^{\infty}{\pi_{\theta_t}(j)} \coloneqq \lim_{t \to \infty}{ \sum_{s=1}^{t}{ \pi_{\theta_s}(j) } } < \infty.
\end{align}
According to the update \cref{eq:finite_sample_time_implies_finite_parameter_proof_1}, we have, for all $t \ge 1$,
\begin{align}
\label{eq:at_least_two_actions_infinite_sample_time_proof_4}
    \theta_t(i) - \theta_1(i) = \sum_{s=1}^{t-1}{ I_s(i) \cdot \eta \cdot \left( 1 - \pi_{\theta_s}(i) \right) \cdot R_s(i)} + \sum_{s=1}^{t-1}{ \left( 1 - I_s(i) \right) \cdot (- \eta) \cdot \pi_{\theta_s}(i) \cdot R_s(a_s) }.
\end{align}
By triangle inequality, we have,
\begin{align}
\label{eq:at_least_two_actions_infinite_sample_time_proof_5}
    \left| \theta_t(i) - \theta_1(i) \right| &\le \sum_{s=1}^{t-1}{ \Big| I_s(i) \cdot \eta \cdot \left( 1 - \pi_{\theta_s}(i) \right) \cdot R_s(i) \Big| } + \sum_{s=1}^{t-1}{ \Big| \left( 1 - I_s(i) \right) \cdot (- \eta) \cdot \pi_{\theta_s}(i) \cdot R_s(a_s) \Big| } \\
    &\le \eta \cdot R_{\max} \cdot \sum_{s=1}^{t-1}{ \left( 1 - \pi_{\theta_s}(i) \right) } + \eta \cdot R_{\max} \cdot \sum_{s=1}^{t-1}{ \left( 1 - I_s(i) \right) } \\
    &= \eta \cdot R_{\max} \cdot \Big( \sum_{s=1}^{t-1} \sum_{j \ne i}{ \pi_{\theta_s}(j) } +  \sum_{s=1}^{t-1} \sum_{j \ne i}{ I_s(j) } \Big) \\
    &= \eta \cdot R_{\max} \cdot \Big( \sum_{j \ne i} \sum_{s=1}^{t-1}{\pi_{\theta_s}(j)} + \sum_{j \ne i}{ N_{t-1}(j) } \Big).
\end{align}
Combing \cref{eq:at_least_two_actions_infinite_sample_time_proof_3,eq:at_least_two_actions_infinite_sample_time_proof_5} and the assumption of $N_\infty(j) < \infty$ for all $j \ne i$, we have, almost surely,
\begin{align}
\label{eq:at_least_two_actions_infinite_sample_time_proof_6}
    \sup_{t \ge 1}{ \left| \theta_t(i) \right| } \le \sup_{t \ge 1}{ \left| \theta_t(i) - \theta_1(i) \right| + \left| \theta_1(i) \right| } < \infty.
\end{align}
Since $N_\infty(j) < \infty$ for all $j \ne i$ by assumption, and according to \cref{lem:finite_sample_time_implies_finite_parameter}, we have, almost surely,
\begin{align}
\label{eq:at_least_two_actions_infinite_sample_time_proof_7}
    \sup_{t \ge 1}{ \left| \theta_t(j) \right| } < \infty.
\end{align}
Combining \cref{eq:at_least_two_actions_infinite_sample_time_proof_6,eq:at_least_two_actions_infinite_sample_time_proof_7}, we have,  for all action $a \in [K]$,
\begin{align}
\label{eq:at_least_two_actions_infinite_sample_time_proof_8}
    \sup_{t \ge 1}{ \left| \theta_t(a) \right| } < \infty,
\end{align}
which implies that, there exists $c > 0$ and $c \in O(1)$, such that, for all $a \in [K]$,
\begin{align}
\label{eq:at_least_two_actions_infinite_sample_time_proof_9}
    \inf_{t \ge 1}{ \pi_{\theta_t}(a) } = \inf_{t \ge 1}{ \frac{ \exp\{ \theta_t(a) \} }{ \sum_{a^\prime \in [K]}{ \exp\{ \theta_t(a^\prime) } \} } } \ge c > 0.
\end{align}
Therefore, for all action $a \in [K]$,
\begin{align}
\label{eq:at_least_two_actions_infinite_sample_time_proof_10}
    \sum_{t=1}^{\infty}{\pi_{\theta_t}(a)} &\coloneqq \lim_{t \to \infty}{ \sum_{s=1}^{t}{ \pi_{\theta_s}(a) } } \\
    &\ge \lim_{t \to \infty}{ \sum_{s=1}^{t}{ c } } \\
    &= \lim_{t \to \infty}{ t \cdot c } \\
    &= \infty.
\end{align}
According to the extended Borel-Cantelli \cref{lem:ebc}, we have, almost surely, for all action $a \in [K]$,
\begin{align}
\label{eq:at_least_two_actions_infinite_sample_time_proof_11}
    N_\infty(a) = \infty,
\end{align}
which is a contradiction with the assumption of $N_\infty(j) < \infty$ for all the other actions $j \in [K]$ with $j \ne i$. Therefore, there exists another action $j \in [K]$ with $j \ne i$, such that $N_\infty(j) = \infty$.
\end{proof}

\textbf{\cref{thm:two_action_global_convergence}.}
Let $K = 2$ and $r(1) > r(2)$. Using \cref{alg:gradient_bandit_algorithm_sampled_reward} with any $\eta \in \Theta(1)$, we have, almost surely, $\pi_{\theta_t}(a^*) \to 1$ as $t \to \infty$, where $a^* \coloneqq \argmax_{a \in [K]}{ r(a) }$ (equal to Action $1$ in this case).
\begin{proof}
The proof uses the same notations as of~\citep[Theorem 5.1]{mei2024stochastic}.

According to \cref{lem:at_least_two_actions_infinite_sample_time}, we have, $N_\infty(1) = \infty$ and $N_\infty(2) = \infty$, i.e., both of the two actions are sampled for infinitely many times as $t \to \infty$.

Let $\gF_t$ be the $\sigma$-algebra generated by $a_1$, $R_1(a_1)$, $\cdots$, $a_{t-1}$, $R_{t-1}(a_{t-1})$:
\begin{align}
\label{eq:two_action_global_convergence_proof_1}
    \gF_t = \sigma( \{ a_1, R_1(a_1), \cdots, a_{t-1}, R_{t-1}(a_{t-1}) \} )\,.
\end{align}
Note that $\theta_{t}$ and $I_t$ (defined by \cref{eq:finite_sample_time_implies_finite_parameter_proof_2}) are $\gF_t$-measurable for all $t\ge 1$. Let $\EEt{\cdot}$ denote the conditional expectation with respect to $\gF_t$: $\mathbb{E}_t[X] = \mathbb{E}[X|\gF_t]$. Define the following notations,
\begin{align}
\label{eq:two_action_global_convergence_proof_2b}
    W_t(a) &\coloneqq \theta_t(a) - \chE_{t-1}{[ \theta_t(a)]}, \qquad \left( \text{``noise''} \right) \\
\label{eq:two_action_global_convergence_proof_2c}
    P_t(a) &\coloneqq \EEt{\theta_{t+1}(a)} - \theta_t(a). \qquad \left( \text{``progress''} \right)
\end{align}
For each action $a \in [K]$, for $t \ge 2$, we have the following decomposition,
\begin{align}
\label{eq:two_action_global_convergence_proof_3}
    \theta_{t}(a) = W_t(a) + P_{t-1}(a) + \theta_{t-1}(a).
\end{align}
By recursion we can determine that,
\begin{align}
\label{eq:two_action_global_convergence_proof_4}
    \theta_t(a) = \EE{\theta_1(a)} + \sum_{s=1}^{t}{W_s(a)} + \sum_{s=1}^{t-1}{P_s(a)},
\end{align}
while we also have,
\begin{align}
\label{eq:two_action_global_convergence_proof_5}
    \theta_1(a) = \underbrace{ \theta_{1}(a) - \EE{\theta_{1}(a)} }_{ W_1(a) } + \EE{\theta_{1}(a)},
\end{align}
and $\EE{\theta_{1}(a)}$ accounts for potential randomness in initializing $\theta_1 \in \sR^K$. According to \cref{prop:gradient_bandit_algorithm_equivalent_to_stochastic_gradient_ascent_sampled_reward}, we have,
\begin{align}
\label{eq:two_action_global_convergence_proof_6}
    P_t(a) &= \EEt{\theta_{t+1}(a)} - \theta_t(a) 
    = \eta \cdot \pi_{\theta_t}(a) \cdot \left(  r(a) - \pi_{\theta_t}^\top r \right),
\end{align}
which implies that, for all $t \ge 1$,
\begin{align}
\label{eq:two_action_global_convergence_proof_7}
    P_t(a^*) > 0 > P_t(2).
\end{align}
Next, we show that $\sum_{s=1}^t P_s(a^*) \to \infty$ as $t \to \infty$ by contradiction. Suppose that,
\begin{align}
\label{eq:two_action_global_convergence_proof_8}
    \sum_{t=1}^{\infty} P_t(a^*) < \infty.
\end{align}
For the optimal action $a^* = 1$,
\begin{align}
\label{eq:two_action_global_convergence_proof_9}
    \sum_{s=1}^t P_s(a^*) &=  \sum_{s=1}^t \eta \cdot \pi_{\theta_s}(a^*) \cdot ( r(a^*) - \pi_{\theta_s}^\top r ) \\
    &= \eta \cdot \Delta \cdot  \sum_{s=1}^t  \pi_{\theta_s}(a^*) \cdot ( 1 - \pi_{\theta_s}(a^*)  ),
\end{align}
where $\Delta \coloneqq r(a^*) - \max_{a \not= a^*}{ r(a) } = r(1) - r(2) > 0$ is the reward gap. Denote that, for all $t 
\ge 1$, 
\begin{align} 
\label{eq:two_action_global_convergence_proof_10}
    V_t(a^*) &\coloneqq \frac{5}{18} \cdot \sum_{s=1}^{t-1}  \pi_{\theta_s}(a^*) \cdot (1-\pi_{\theta_s}(a^*)).
\end{align}
According to \cref{lem:bounded_progress_bounded_parameter} (using \cref{eq:two_action_global_convergence_proof_7,eq:two_action_global_convergence_proof_8,eq:two_action_global_convergence_proof_9,eq:two_action_global_convergence_proof_10}), we have, almost surely,
\begin{align}
\label{eq:two_action_global_convergence_proof_11}
    \sup_{t \ge 1}{ |\theta_t(a^*)| } < \infty.
\end{align}
For the sub-optimal action (Action $2$), we have,
\begin{align}
\label{eq:two_action_global_convergence_proof_12}
    \sum_{s=1}^t P_s(2) &=  \sum_{s=1}^t \eta \cdot \pi_{\theta_s}(2) \cdot ( r(2) - \pi_{\theta_s}^\top r ) \\
    &= - \eta \cdot \Delta \cdot  \sum_{s=1}^t  \pi_{\theta_s}(2) \cdot ( 1 - \pi_{\theta_s}(2)  ) \\
    &= - \sum_{s=1}^t P_s(a^*),
\end{align}
and also denote, for all $t \ge 1$,
\begin{align}
\label{eq:two_action_global_convergence_proof_13}
    V_t(2) &\coloneqq \frac{5}{18} \cdot \sum_{s=1}^{t-1}  \pi_{\theta_s}(2) \cdot (1-\pi_{\theta_s}(2)) = V_t(a^*).
\end{align}
According to \cref{lem:bounded_progress_bounded_parameter} (using \cref{eq:two_action_global_convergence_proof_7,eq:two_action_global_convergence_proof_8,eq:two_action_global_convergence_proof_12,eq:two_action_global_convergence_proof_13}), we have, almost surely,
\begin{align}
\label{eq:two_action_global_convergence_proof_14}
    \sup_{t \ge 1}{ \big| \theta_t(2) \big| } < \infty.
\end{align}
Combining \cref{eq:two_action_global_convergence_proof_11,eq:two_action_global_convergence_proof_14}, there exists $c > 0$ and $c \in O(1)$, such that, for all $a \in \{ a^*, 2 \}$,
\begin{align}
\label{eq:two_action_global_convergence_proof_15}
    \inf_{t \ge 1}{ \pi_{\theta_t}(a) } = \inf_{t \ge 1}{ \frac{ \exp\{ \theta_t(a) \} }{ \sum_{a^\prime \in [K]}{ \exp\{ \theta_t(a^\prime) } \} } } \ge c > 0,
\end{align}
which implies that,
\begin{align}
\label{eq:two_action_global_convergence_proof_16}
    \sum_{t=1}^{\infty}{\pi_{\theta_t}(a^*) \cdot ( 1 - \pi_{\theta_t}(a^*)  )} &\coloneqq \lim_{t \to \infty}{ \sum_{s=1}^{t}{ \pi_{\theta_s}(a^*) \cdot ( 1 - \pi_{\theta_s}(a^*)  ) } } \\
    &\ge \lim_{t \to \infty}{ \sum_{s=1}^{t}{ c \cdot c} } \\
    &= \lim_{t \to \infty}{ t \cdot c^2 } \\
    &= \infty,
\end{align}
which is a contradiction with the assumption of 
\cref{eq:two_action_global_convergence_proof_8}. Therefore, we have,
\begin{align}
\label{eq:two_action_global_convergence_proof_17}
    \sum_{s=1}^t P_s(a^*) \to \infty, \text{ as } t \to \infty.
\end{align}
According to \cref{lem:positive_unbounded_progress_unbounded_parameter} (using \cref{eq:two_action_global_convergence_proof_7,eq:two_action_global_convergence_proof_17,eq:two_action_global_convergence_proof_9,eq:two_action_global_convergence_proof_10}), we have, almost surely,
\begin{align}
\label{eq:two_action_global_convergence_proof_18}
    \theta_t(a^*) \to \infty, \text{ as } t \to \infty.
\end{align}
Similarly, according to \cref{lem:negative_unbounded_progress_unbounded_parameter} (using \cref{eq:two_action_global_convergence_proof_7,eq:two_action_global_convergence_proof_17,eq:two_action_global_convergence_proof_12,eq:two_action_global_convergence_proof_13}), we have, almost surely,
\begin{align}
\label{eq:two_action_global_convergence_proof_19}
     \lim_{t \to \infty}{ \theta_t(2) } = - \infty.
\end{align}
Note that,
\begin{align}
\label{eq:two_action_global_convergence_proof_20}
    \pi_{\theta_t}(a^*)
    &= \frac{\pi_{\theta_t}(a^*)}{ \pi_{\theta_t}(2) +  \pi_{\theta_t}(a^*)} \\
    &= \frac{1}{ \exp\{ \theta_t(2) - \theta_t(a^*) \} + 1}.
\end{align}
Combining \cref{eq:two_action_global_convergence_proof_18,eq:two_action_global_convergence_proof_19,eq:two_action_global_convergence_proof_20}, we have, almost surely,
\begin{equation*}
    \pi_{\theta_t}(a^*) \to 1, \text{ as } t \to \infty. \qedhere
\end{equation*}
\end{proof}

\textbf{\cref{thm:general_action_global_convergence}.}
Under~\cref{assp:reward_no_ties}, given $K \ge 2$, using \cref{alg:gradient_bandit_algorithm_sampled_reward} with any $\eta \in \Theta(1)$, we have, almost surely, $\pi_{\theta_t}(a^*) \to 1$ as $t \to \infty$, where $a^* = \argmax_{a \in [K]}{ r(a) }$ is the optimal action.
\begin{proof}
Similar to the proof of~\cref{thm:two_action_global_convergence}, we define $\gF_t$ to be the $\sigma$-algebra generated by $a_1$, $R_1(a_1)$, $\cdots$, $a_{t-1}$, $R_{t-1}(a_{t-1})$ i.e. 
\begin{align}
\label{eq:general_action_global_convergence_proof_1}
    \gF_t = \sigma( \{ a_1, R_1(a_1), \cdots, a_{t-1}, R_{t-1}(a_{t-1}) \} )\,.
\end{align}
Note that $\theta_{t}$ and $I_t$ (defined by \cref{eq:finite_sample_time_implies_finite_parameter_proof_2}) are $\gF_t$-measurable for all $t\ge 1$. Let $\EEt{\cdot}$ denote the conditional expectation with respect to $\gF_t$: $\mathbb{E}_t[X] = \mathbb{E}[X|\gF_t]$. Define the following notations,
\begin{align}
W_t(a) &\coloneqq \theta_t(a) - \chE_{t-1}{[ \theta_t(a)]}, \qquad \left( \text{``noise''} \right) \\
P_t(a) &\coloneqq \EEt{\theta_{t+1}(a)} - \theta_t(a). \qquad \left( \text{``progress''} \right)
\end{align}
For each action $a \in [K]$, for $t \ge 2$, we have the following decomposition,
\begin{align}
\theta_{t}(a) = W_t(a) + P_{t-1}(a) + \theta_{t-1}(a).
\end{align}
By recursion we can determine that,
\begin{align}
\theta_t(a) = \EE{\theta_1(a)} + \sum_{s=1}^{t}{W_s(a)} + \sum_{s=1}^{t-1}{P_s(a)},
\end{align}
while we also have,
\begin{align}
\theta_1(a) = \underbrace{ \theta_{1}(a) - \EE{\theta_{1}(a)} }_{ W_1(a) } + \EE{\theta_{1}(a)},
\end{align}
and $\EE{\theta_{1}(a)}$ accounts for potential randomness in initializing $\theta_1 \in \sR^K$. For an action $a \in [K]$, 
\begin{align}
\label{eq:general_action_global_convergence_first_part_proof_6}
    \gA^+(a) &\coloneqq \left\{ a^+ \in [K]: r(a^+) > r(a) \right\}, \\
    \gA^-(a) &\coloneqq \left\{ a^- \in [K]: r(a^-) < r(a) \right\}. 
\end{align}
Define $\gA_\infty$ as the set of actions which are sampled for infinitely many times as $t \to \infty$, i.e.,
\begin{align}
    \gA_\infty \coloneqq \left\{ a \in [K] \ | \ N_\infty(a) = \infty \right\}.
\end{align}

\textbf{First part.} We show that $N_\infty(a^*) = \infty$ by contradiction.

Suppose $a^* \not\in \gA_\infty$ i.e. $N_\infty(a^*) < \infty$. According to \cref{lem:at_least_two_actions_infinite_sample_time}, we have, $| \gA_\infty | \ge 2$. Since $a^* \not\in \gA_\infty$ by assumption, there must be at least two other sub-optimal actions $i_1, i_2 \in [K]$, $i_1 \ne i_2$, such that,
\begin{align}
\label{eq:general_action_global_convergence_first_part_proof_1}
    N_\infty(i_1) = N_\infty(i_2) = \infty.
\end{align}
In particular, define
\begin{align}
\label{eq:general_action_global_convergence_first_part_proof_2}
    i_1 &\coloneqq \argmin_{\substack{a \in [K], \\ N_\infty(a) = \infty}} r(a), \\
    i_2 &\coloneqq \argmax_{\substack{a \in [K], \\ N_\infty(a) = \infty}} r(a).
\end{align}
Using the update for arm $i_1$, we know that, for all $s \geq 1$, 
\begin{align}
P_s(i_1) &= \eta \cdot \pi_{\theta_s}(i_1) \cdot ( r(i_1) - \pi_{\theta_s}^\top r ) 
\end{align}
By definition and because of~\cref{assp:reward_no_ties}, we have that $r(i_1) < r(i_2) < r(a^*)$. Furthermore, according to \cref{lem:expected_reward_range}, we have, for sufficiently large $\tau \ge 1$,
\begin{align}
\label{eq:general_action_global_convergence_first_part_proof_3}
    r(i_1) < \pi_{\theta_t}^\top r < r(i_2),
\end{align}
which implies that after some large enough $\tau \ge 1$,
\begin{align}
    \sum_{s=\tau}^t P_s(i_1) &= \sum_{s=\tau}^t \eta \cdot \pi_{\theta_s}(i_1) \cdot ( r(i_1) - \pi_{\theta_s}^\top r ) \\
    & < 0. \qquad \left(\text{by \cref{eq:general_action_global_convergence_first_part_proof_3}}\right) \label{eq:general_action_global_convergence_first_part_proof_4}
\end{align}

\begin{align}
    r(i_1) - \pi_{\theta_s}^\top r &= \sum_{a \neq i_1} \pi_{\theta_s}(a) \, [r(i_1) - r(a)] \nonumber \\
    &= - \sum_{a^+ \in \gA^+(i_1)}{ \pi_{\theta_s}(a^+) \cdot (r(a^+) - r(i_1) ) } + \sum_{a^- \in \gA^-(i_1)}{ \pi_{\theta_s}(a^-) \cdot ( r(i_1) - r(a^-)) }. \label{eq:general_action_global_convergence_first_part_proof_5}
\end{align}
From~\cref{eq:general_action_global_convergence_first_part_proof_2}, we have,
\begin{align}
\label{eq:general_action_global_convergence_first_part_proof_8}
    \gA_\infty \subseteq \gA^+(i_1) \cup \{ i_1 \},
\end{align}
which implies that, for all $a^- \in \gA^-(i_1)$, we have,
\begin{align}
\label{eq:general_action_global_convergence_first_part_proof_9}
    N_\infty(a^-) < \infty.
\end{align}
Note that, by definition, $i_2 \in \gA^+(i_1)$, and 
\begin{align}
\label{eq:general_action_global_convergence_first_part_proof_10}
    N_\infty(i_2) = \infty.
\end{align}
In order to bound~\cref{eq:general_action_global_convergence_first_part_proof_5} using the above relations, note that, 
\begin{align}
\MoveEqLeft
    \sum_{a^- \in \gA^-(i_1)}{ \frac{\pi_{\theta_s}(a^-)}{ \sum_{a^+ \in \gA^+(i_1)}{ \pi_{\theta_s}(a^+) \cdot (r(a^+) - r(i_1) ) } } \cdot ( r(i_1) - r(a^-)) } \\
    &\qquad <  \sum_{a^- \in \gA^-(i_1)}{ \frac{\pi_{\theta_s}(a^-)}{ \pi_{\theta_s}(i_2) \cdot (r(i_2) - r(i_1) ) } \cdot ( r(i_1) - r(a^-)) } \tag{Fewer terms in the denominator} \\    
    \intertext{By \cref{lem:unbouned_prob_ratio}, for large enough $\tau \geq 1$, for all $a^-$, $\frac{\pi_{\theta_s}(a^-)}{ \pi_{\theta_s}(i_2)} \leq \frac{1}{\left| \gA^-(i_1) \right|} \cdot \frac{r(i_2) - r(i_1)}{r(i_1) - r(a^-)}$. Hence,}
    &\qquad \le \sum_{a^- \in \gA^-(i_1)}{ \frac{1}{2} \cdot \frac{1}{\left| \gA^-(i_1) \right|} \cdot \frac{r(i_2) - r(i_1)}{r(i_1) - r(a^-)}  \cdot \frac{r(i_1) - r(a^-)}{r(i_2) - r(i_1)} } \\
    &\qquad = \frac{1}{2} \\
    \implies & \sum_{a^- \in \gA^-(i_1)} \pi_{\theta_s}(a^-) \cdot ( r(i_1) - r(a^-)) \leq \frac{1}{2} \, \sum_{a^+ \in \gA^+(i_1)} \pi_{\theta_s}(a^+) \cdot (r(a^+) - r(i_1) ). \label{eq:general_action_global_convergence_first_part_proof_11}
\end{align}
Combining \cref{eq:general_action_global_convergence_first_part_proof_4,eq:general_action_global_convergence_first_part_proof_5,eq:general_action_global_convergence_first_part_proof_11}, we have,
\begin{align}
    \sum_{s=\tau}^t P_s(i_1) &= \sum_{s=\tau}^t \eta \cdot \pi_{\theta_s}(i_1) \cdot ( r(i_1) - \pi_{\theta_s}^\top r ) \\
    &\le - \frac{\eta}{2} \cdot \sum_{s=\tau}^t \pi_{\theta_s}(i_1) \sum_{a^+ \in \gA^+(i_1)}{ \pi_{\theta_s}(a^+) \cdot (r(a^+) - r(i_1) ) } \\
    &\le - \frac{\eta \cdot \Delta}{2} \cdot \sum_{s=\tau}^t \pi_{\theta_s}(i_1) \sum_{a^+ \in \gA^+(i_1)}{ \pi_{\theta_s}(a^+) } \label{eq:general_action_global_convergence_first_part_proof_12} \,,
\end{align}
where
\begin{align}
\label{eq:general_action_global_convergence_first_part_proof_13}
    \Delta \coloneqq \min_{i, j \in [K], \ i \not= j}{ |r(i) - r(j)| } > 0. \qquad \left( \text{by \cref{assp:reward_no_ties}} \right)
\end{align}
Bounding the variance for arm $i_1$, we have that for all large enough $\tau \ge 1$,
\begin{align}
    V_t(i_1) &\coloneqq \frac{5}{18} \cdot \sum_{s=\tau}^{t-1}  \pi_{\theta_s}(i_1) \cdot (1-\pi_{\theta_s}(i_1)) \\
    &= \frac{5}{18} \cdot \sum_{s=\tau}^{t-1}  \pi_{\theta_s}(i_1) \cdot \left( \sum_{a^- \in \gA^-(i_1)} \pi_{\theta_s}(a^-) + \sum_{a^+ \in \gA^+(i_1)} \pi_{\theta_s}(a^+) \right) \label{eq:general_action_global_convergence_first_part_proof_14}
\end{align}
Using similar calculations as for the progress term, we have,
\begin{align}
    \sum_{a^- \in \gA^-(i_1)} \frac{ \pi_{\theta_s}(a^-) }{ \sum_{a^+ \in \gA^+(i_1)} \pi_{\theta_s}(a^+) } &< \sum_{a^- \in \gA^-(i_1)} \frac{ \pi_{\theta_s}(a^-) }{  \pi_{\theta_s}(i_2) } \tag{Fewer terms in the denominator} \\
    \intertext{By \cref{lem:unbouned_prob_ratio}, for large enough $\tau \geq 1$, for all $a^-$, $\frac{\pi_{\theta_s}(a^-)}{ \pi_{\theta_s}(i_2)} \leq \frac{13}{5} \, \frac{1}{\left| \gA^-(i_1) \right|}$. Hence,}
    &\le \sum_{a^- \in \gA^-(i_1)}{ \frac{13}{5} \cdot \frac{1}{\left| \gA^-(i_1) \right|} } \\
    &= \frac{13}{5} \\
\implies \sum_{a^- \in \gA^-(i_1)} \pi_{\theta_s}(a^-)  & \leq  \frac{13}{5} \, \sum_{a^+ \in \gA^+(i_1)} \pi_{\theta_s}(a^+)  \label{eq:general_action_global_convergence_first_part_proof_15}
\end{align}
Combining \cref{eq:general_action_global_convergence_first_part_proof_14,eq:general_action_global_convergence_first_part_proof_15}, we have that for large enough $\tau \geq 1$, 
\begin{align}
    V_t(i_1) &= \frac{5}{18} \cdot \sum_{s=\tau}^{t-1}  \pi_{\theta_s}(i_1) \cdot (1-\pi_{\theta_s}(i_1)) \\
    &\le \sum_{s=\tau}^{t-1}  \pi_{\theta_s}(i_1) \sum_{a^+ \in \gA^+(i_1)}{ \pi_{\theta_s}(a^+) }. \label{eq:general_action_global_convergence_first_part_proof_16}
\end{align}
According to \cref{lem:negative_progress_upper_bounded_parameter} (using \cref{eq:general_action_global_convergence_first_part_proof_4,eq:general_action_global_convergence_first_part_proof_12,eq:general_action_global_convergence_first_part_proof_16}), we have, almost surely,
\begin{align}
\label{eq:general_action_global_convergence_first_part_proof_17}
    \sup_{t \ge 1}{ \theta_t(i_1) } < \infty.
\end{align}
Since $N_\infty(a^*) < \infty$ by assumption, and according to \cref{lem:finite_sample_time_implies_finite_parameter}, we have, almost surely, 
\begin{align}
\label{eq:general_action_global_convergence_first_part_proof_18}
    \inf_{t \ge 1}{ \theta_t(a^*)} > -\infty.
\end{align}
Combining \cref{eq:general_action_global_convergence_first_part_proof_17,eq:general_action_global_convergence_first_part_proof_18}, we have,
\begin{align}
\label{eq:general_action_global_convergence_first_part_proof_19}
    \sup_{t \ge 1}{ \frac{\pi_{\theta_t}(i_1)}{\pi_{\theta_t}(a^*)} } = \sup_{t \ge 1} \, \exp\{ \theta_t(i_1) - \theta_t(a^*) \}  < \infty \,.
\end{align}
On the other hand, by \cref{lem:unbouned_prob_ratio}, we have,
\begin{align}
\label{eq:general_action_global_convergence_first_part_proof_20}
    \sup_{t \ge 1}{ \frac{\pi_{\theta_t}(i_1)}{\pi_{\theta_t}(a^*)} } = \infty
\end{align}
which contradicts \cref{eq:general_action_global_convergence_first_part_proof_19}. Hence, the assumption that $N_\infty(a^*) < \infty$ cannot hold. This completes the proof by contradiction, and implies that $N_\infty(a^*) = \infty$. 

\clearpage
\textbf{Second part.} With $a^* \in \gA_\infty$ i.e. $N_\infty(a^*) =  \infty$, we now argue that $\pi_{\theta_t}(a^*) \to 1$ as $t \to \infty$ almost surely.

According to \cref{lem:at_least_two_actions_infinite_sample_time}, we have, $| \gA_\infty | \ge 2$. Since $a^* \in \gA_\infty$ by assumption, there must be at least one sub-optimal action $i_1\in[K]$ with $r(i_1) < r(a^*)$, such that,
\begin{align}
\label{eq:general_action_global_convergence_second_part_proof_1}
    N_\infty(i_1) = \infty.
\end{align}
In particular, define
\begin{align}
\label{eq:general_action_global_convergence_second_part_proof_2}
    i_1 \coloneqq \argmin_{\substack{a \in [K], \\ N_\infty(a) = \infty}} r(a).
\end{align}
According to \cref{lem:expected_reward_range},  we have, for all sufficiently large $\tau \ge 1$,
\begin{align}
\label{eq:general_action_global_convergence_second_part_proof_3}
    r(i_1) < \pi_{\theta_t}^\top r < r(a^*).
\end{align}
Using the same arguments as in the first part (except that $i_2$ in \cref{eq:general_action_global_convergence_first_part_proof_10} is replaced with $a^*$), both
\cref{eq:general_action_global_convergence_first_part_proof_12,eq:general_action_global_convergence_first_part_proof_16} hold. 

Next, we argue that $\sum_{s=\tau}^t \pi_{\theta_s}(i_1) \sum_{a^+ \in \gA^+(i_1)}{ \pi_{\theta_s}(a^+) } \to \infty$ as $t \to \infty$ by contradiction. 

Suppose
\begin{align}
\label{eq:general_action_global_convergence_second_part_proof_4}
    \sum_{t=\tau}^{\infty} \pi_{\theta_t}(i_1) \sum_{a^+ \in \gA^+(i_1)}{ \pi_{\theta_t}(a^+) } < \infty.
\end{align}
According to \cref{lem:bounded_progress_bounded_parameter} (using \cref{eq:general_action_global_convergence_first_part_proof_4,eq:general_action_global_convergence_first_part_proof_12,eq:general_action_global_convergence_first_part_proof_16,eq:general_action_global_convergence_second_part_proof_4}), we have,
\begin{align}
\label{eq:general_action_global_convergence_second_part_proof_5}
    \sup_{t \ge 1}{ |\theta_t(i_1)| } < \infty.
\end{align}
Calculating the progress and variance for arm $a^*$, for $t \geq 1$, 
\begin{align}
\label{eq:general_action_global_convergence_second_part_proof_5b}
    P_t(a^*) &= \eta \cdot \pi_{\theta_t}(a^*) \cdot (r(a^*) -  \pi_{\theta_t}^\top r ) \\
    &\geq \eta \cdot \Delta \cdot \pi_{\theta_t}(a^*) \cdot \big( 1 - \pi_{\theta_t}(a^*) \big). \qquad \big( \Delta \coloneqq r(a^*) - \max_{a \neq a^*}{r(a)} \big) \\
    &\ge 0.
\end{align}
Denote that, for all $t 
\ge 1$, 
\begin{align}
\label{eq:general_action_global_convergence_second_part_proof_5c}
    V_t(a^*) &\coloneqq \frac{5}{18} \cdot \sum_{s=1}^{t-1}  \pi_{\theta_s}(a^*) \cdot (1-\pi_{\theta_s}(a^*)).
\end{align}
According to \cref{lem:positive_progress_lower_bounded_parameter} (using \cref{eq:general_action_global_convergence_second_part_proof_5b,eq:general_action_global_convergence_second_part_proof_5c}), we have,
\begin{align}
\label{eq:general_action_global_convergence_second_part_proof_6}
    \inf_{t \ge 1}{\theta_t(a^*)} > -\infty.
\end{align}
Combining \cref{eq:general_action_global_convergence_second_part_proof_5,eq:general_action_global_convergence_second_part_proof_6}, we have,
\begin{align}
\label{eq:general_action_global_convergence_second_part_proof_7}
    \sup_{t \ge 1}{ \frac{\pi_{\theta_t}(i_1)}{\pi_{\theta_t}(a^*)}} &= \sup_{t \ge 1} {\exp\{ \theta_t(i_1) - \theta_t(a^*) \}} < \infty,
\end{align}
For all $t \geq 1$, $|\theta_t(i_1)| < \infty$ and since there is at least one arm ($a^*$) s.t. $\inf_{t \geq 1} \theta_t(a) > -\infty$, there exists $\epsilon > 0$  and $\epsilon \in O(1)$, such that,
\begin{align}
\label{eq:general_action_global_convergence_second_part_proof_8}
    \sup_{t \ge 1}{ \pi_{\theta_t}(i_1)} < 1 - 2 \, \epsilon.
\end{align}
According to \cref{eq:general_action_global_convergence_second_part_proof_1}, we know that $N_\infty(i_1) = \infty$ and for all $a^- \in \gA^{-}(i_1)$, $N_\infty(a^-) < \infty$. Using~\cref{lem:unbouned_prob_ratio}, we have, for all large enough $t \ge 1$, $\frac{\pi_{\theta_t}(a^-)}{\pi_{\theta_t}(i_1)} < \frac{\epsilon}{|\gA^{-}(i_1)|}$. 
\begin{align}
    \pi_{\theta_t}(i_1) + \sum_{a^+ \in \gA^+(i_1)}{ \pi_{\theta_t}(a^+) } &= 1 - \pi_{\theta_t}(i_1) \,  \sum_{a^- \in \gA^-(i_1)}{ \frac{\pi_{\theta_t}(a^-)}{\pi_{\theta_t}(i_1)} } > 1 - \pi_{\theta_t}(i_1) \, \epsilon \\
    \implies \pi_{\theta_t}(i_1) + \sum_{a^+ \in \gA^+(i_1)}{ \pi_{\theta_t}(a^+) } & \ge 1 - \epsilon. \label{eq:general_action_global_convergence_second_part_proof_9}
\end{align}
Combining \cref{eq:general_action_global_convergence_second_part_proof_8,eq:general_action_global_convergence_second_part_proof_9}, we have, for all large enough $t \ge 1$,
\begin{align}
\label{eq:general_action_global_convergence_second_part_proof_10}
    \sum_{a^+ \in \gA^+(i_1)}{ \pi_{\theta_t}(a^+) } \ge \epsilon,
\end{align}
which implies that,
\begin{align}
    \sum_{s=\tau}^{\infty} \pi_{\theta_s}(i_1) \sum_{a^+ \in \gA^+(i_1)}{ \pi_{\theta_s}(a^+) } &\ge \epsilon \cdot \sum_{s=\tau}^{\infty} \pi_{\theta_s}(i_1) \\
    &= \infty, \qquad \left( \text{since $N_\infty(i_1) = \infty$ and by \cref{lem:ebc}} \right) \label{eq:general_action_global_convergence_second_part_proof_11}
\end{align}
which contradicts the assumption of \cref{eq:general_action_global_convergence_second_part_proof_4}. This completes the proof by contradiction, and therefore, we have,
\begin{align}
\label{eq:general_action_global_convergence_second_part_proof_12}
    \sum_{s=\tau}^t \pi_{\theta_s}(i_1) \sum_{a^+ \in \gA^+(i_1)}{ \pi_{\theta_s}(a^+) } \to \infty, \text{ as } t \to \infty.
\end{align}
According to \cref{lem:negative_unbounded_progress_unbounded_parameter} (using \cref{eq:general_action_global_convergence_first_part_proof_12,eq:general_action_global_convergence_first_part_proof_16,eq:general_action_global_convergence_second_part_proof_12}), we have, almost surely, 
\begin{align}
\label{eq:general_action_global_convergence_second_part_proof_13}
    \theta_t(i_1) \to  - \infty, \text{ as } t \to \infty.
\end{align}
Combining \cref{eq:general_action_global_convergence_second_part_proof_6,eq:general_action_global_convergence_second_part_proof_13}, we have, almost surely,
\begin{align}
\label{eq:general_action_global_convergence_second_part_proof_14}
    \frac{ \pi_{\theta_t}(a^*) }{ \pi_{\theta_t}(i_1)} = \exp\{ \theta_t(a^*) - \theta_t(i_1) \} \to \infty, \text{ as } t \to \infty.
\end{align}
Hence, we have proved that if $N_\infty(i_1) = \infty$ and $r(i_1) < \pi_{\theta_t}^\top r < r(a^*)$ for sufficiently large $\tau \geq 1$, then, $\frac{ \pi_{\theta_t}(a^*) }{ \pi_{\theta_t}(i_1)} \to \infty, \text{ as } t \to \infty$. 

In order to use this argument recursively, consider sorting the action indices in $\gA_\infty$ according to their descending expected reward values,
\begin{align}
\label{eq:general_action_global_convergence_second_part_proof_15}
    r(a^*) > r(i_{|\gA_\infty| - 1}) > r(i_{|\gA_\infty| - 2}) > \cdots > r(i_2) > r(i_1).
\end{align}
We know that, 
\begin{align}
    \pi_{\theta_t}^\top r - r(i_2) &= \sum_{a \neq i_2} \pi_{\theta_t}(a) \cdot (r(a) - r(i_2)) \\
    & = \sum_{a^- \in \gA^-(i_2)}{ \pi_{\theta_t}(a^-) \cdot (r(a^-) - r(i_2)) } + \sum_{a^+ \in \gA^+(i_2)}{ \pi_{\theta_t}(a^+) \cdot ( r(a^+) - r(i_2)) }
    \\
    &> \pi_{\theta_t}(a^*) \cdot (r(a^*) - r(i_2) ) - \sum_{a^- \in \gA^-(i_2)}{ \pi_{\theta_t}(a^-) \cdot ( r(i_2) - r(a^-)) } \\
    &= \pi_{\theta_t}(a^*) \cdot \bigg[ r(a^*) - r(i_2) - \sum_{a^- \in \gA^-(i_2)}{ \frac{ \pi_{\theta_t}(a^-)}{ \pi_{\theta_t}(a^*) } \cdot ( r(i_2) - r(a^-)) } \bigg], \label{eq:general_action_global_convergence_second_part_proof_16}
\end{align}
According to \cref{lem:unbouned_prob_ratio}, for all $a^- \in \gA^-(i_2)$ with $a^- \ne i_1$, we have, 
\begin{align}
\label{eq:general_action_global_convergence_second_part_proof_18}
    \frac{ \pi_{\theta_t}(a^*) }{ \pi_{\theta_t}(a^-)} \to \infty, \text{ as } t \to \infty.
\end{align}
Combining \cref{eq:general_action_global_convergence_second_part_proof_14,eq:general_action_global_convergence_second_part_proof_18}, we have, for all $a^- \in \gA^-(i_2)$, 
\begin{align}
    \frac{ \pi_{\theta_t}(a^*) }{ \pi_{\theta_t}(a^-)} \to \infty, \text{ as } t \to \infty.
\end{align}
Hence, for all sufficiently large $\tau \ge 1$, for all $a^- \in \gA^{-}(i_2)$, 
\begin{align}
\frac{ \pi_{\theta_t}(a^-) }{ \pi_{\theta_t}(a^*)} \leq \frac{1}{2 \, |\gA^{-}(i_2)|} \, \frac{r(a^*) - r(i_2)}{r(i_2) - r(a^-)}.
\label{eq:general_action_global_convergence_second_part_proof_19}
\end{align}
Combining \cref{eq:general_action_global_convergence_second_part_proof_16,eq:general_action_global_convergence_second_part_proof_19}, we have, for all sufficiently large $t \ge 1$,
\begin{align}
\label{eq:general_action_global_convergence_second_part_proof_20}
    \pi_{\theta_t}^\top r - r(i_2) > \pi_{\theta_t}(a^*) \cdot \frac{ r(a^*) - r(i_2)}{2} > 0.
\end{align}
Hence we have, for all sufficiently large $\tau \ge 1$,
\begin{align}
\label{eq:general_action_global_convergence_second_part_proof_21}
    r(i_2) < \pi_{\theta_t}^\top r < r(a^*)
\end{align}
Comparing \cref{eq:general_action_global_convergence_second_part_proof_21,eq:general_action_global_convergence_second_part_proof_3}, we can use a similar argument for $i_2$ and conclude that, for sufficiently large $\tau \geq 1$, then, $\frac{ \pi_{\theta_t}(a^*) }{ \pi_{\theta_t}(i_2)} \to \infty, \text{ as } t \to \infty$. This further implies that
\begin{align}
r(i_3) < \pi_{\theta_t}^\top r < r(a^*).    
\end{align}
Continuing this recursive argument, we have, for all actions $a \in \gA_\infty$ with $a \ne a^*$,
\begin{align}
\label{eq:general_action_global_convergence_second_part_proof_24}
    \frac{ \pi_{\theta_t}(a^*) }{ \pi_{\theta_t}(a)} \to \infty, \text{ as } t \to \infty.
\end{align}
Meanwhile, according to \cref{lem:unbouned_prob_ratio}, we have, for all actions $a \not\in \gA_\infty$,
\begin{align}
\label{eq:general_action_global_convergence_second_part_proof_25}
    \frac{ \pi_{\theta_t}(a^*) }{ \pi_{\theta_t}(a)} \to \infty, \text{ as } t \to \infty.
\end{align}
Combining \cref{eq:general_action_global_convergence_second_part_proof_24,eq:general_action_global_convergence_second_part_proof_25}, we have, for all sub-optimal actions $a \in [K]$ with $r(a) < r(a^*)$,
\begin{align}
\label{eq:general_action_global_convergence_second_part_proof_26}
    \frac{ \pi_{\theta_t}(a^*) }{ \pi_{\theta_t}(a)} \to \infty, \text{ as } t \to \infty.
\end{align}
Finally, note that, 
\begin{align}
    \pi_{\theta_t}(a^*)
    &= \frac{\pi_{\theta_t}(a^*)}{ \sum_{a \in [K]: \ r(a) < r(a^*)} \pi_{\theta_t}(a) +  \pi_{\theta_t}(a^*)} \\
    &= \frac{1}{ \sum_{a \in [K]: \ r(a) < r(a^*)} \frac{\pi_{\theta_t}(a)}{\pi_{\theta_t}(a^*)}  +  1} \label{eq:general_action_global_convergence_second_part_proof_27}.
\end{align}
Combining \cref{eq:general_action_global_convergence_second_part_proof_26,eq:general_action_global_convergence_second_part_proof_27}, we have, almost surely,
\begin{equation*}
    \pi_{\theta_t}(a^*) \to 1, \text{ as } t \to \infty. \qedhere
\end{equation*}
\end{proof}

\clearpage
\section{Miscellaneous Extra Supporting Results}

\begin{lemma}[Extended Borel-Cantelli Lemma, Corollary 5.29 of \citep{breiman1992probability}]
\label{lem:ebc}
Let $( \gF_n)_{n \ge 1}$ be a filtration, $A_n \in \gF_n$.
Then, almost surely, 
\begin{align}
\{ \omega \,: \, \omega \in A_n \text{ infinitely often } \} = \left\{ \omega \, : \, 
\sum_{n=1}^\infty \sP(A_n|\gF_n) = \infty \right\}\,.
\end{align}
\end{lemma}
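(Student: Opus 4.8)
The statement is the conditional (L\'evy) form of the second Borel--Cantelli lemma, in which the conditioning $\sigma$-algebra is strictly coarser than the one containing $A_n$; concretely I read $p_n \coloneqq \sP(A_n \mid \gF_{n-1})$ as the predictable projection of $\sI\{A_n\}$, so that $p_n$ is $\gF_{n-1}$-measurable while $\sI\{A_n\}$ is $\gF_n$-measurable. The plan is to reduce both set inclusions to the convergence behaviour of the compensated partial-sum martingale
\begin{align}
\label{eq:ebc_prop_martingale}
    M_n \coloneqq \sum_{k=1}^{n} \big( \sI\{A_k\} - p_k \big), \qquad M_0 \coloneqq 0,
\end{align}
which is adapted to $(\gF_n)_{n \ge 1}$ and has increments bounded by $1$ in absolute value. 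Since $\sum_{k \le n} \sI\{A_k\}$ and $\sum_{k \le n} p_k$ are both nondecreasing, whenever exactly one of the two limiting sums is finite the process $M_n$ would have to diverge monotonically to $+\infty$ or $-\infty$; the whole argument amounts to showing that martingale convergence almost surely forbids this, so the two sums are finite together and infinite together.

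First I would establish $\{A_n \text{ i.o.}\} \subseteq \{\sum_n p_n = \infty\}$, equivalently $\{\sum_n p_n < \infty\} \subseteq \{\sum_n \sI\{A_n\} < \infty\}$. For a level $b > 0$ define $\tau_b \coloneqq \inf\{ n : \sum_{k=1}^{n+1} p_k > b \}$; since $\sum_{k=1}^{n+1} p_k$ is $\gF_n$-measurable, $\tau_b$ is a genuine stopping time, and one checks $\sum_{k=1}^{n \wedge \tau_b} p_k \le b$ for all $n$. The stopped process $M_{n \wedge \tau_b}$ is a martingale whose conditional increment variances satisfy $\Var(\sI\{A_k\} \mid \gF_{k-1}) = p_k(1-p_k) \le p_k$, so by orthogonality of the martingale differences $\EE{M_{n \wedge \tau_b}^2} = \EE{\sum_{k=1}^{n \wedge \tau_b} p_k(1-p_k)} \le b$. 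Hence $M_{n \wedge \tau_b}$ is bounded in $L^2$ and converges almost surely; on the event $\{\tau_b = \infty\} = \{\sum_k p_k \le b\}$ this forces $\sum_{k \le n} \sI\{A_k\} = M_n + \sum_{k \le n} p_k$ to converge. Taking the union over $b \in \{1,2,\dots\}$ covers $\{\sum_k p_k < \infty\}$, giving the inclusion.

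Next I would establish the reverse inclusion $\{\sum_n \sI\{A_n\} < \infty\} \subseteq \{\sum_n p_n < \infty\}$. Here I would invoke the bounded-increment martingale dichotomy: for a martingale with increments bounded by $1$, almost surely the event $\{\sup_n M_n < \infty\}$ coincides with the event that $M_n$ converges to a finite limit (the remaining possibility being $\limsup_n M_n = +\infty$ with $\liminf_n M_n = -\infty$). On $\{\sum_k \sI\{A_k\} < \infty\}$ we have $M_n = \sum_{k \le n}\sI\{A_k\} - \sum_{k \le n} p_k \le \sum_{k} \sI\{A_k\} < \infty$ uniformly in $n$ (using $p_k \ge 0$), so $\sup_n M_n < \infty$ and $M_n$ converges; since $\sum_{k \le n}\sI\{A_k\}$ converges as well, $\sum_{k \le n} p_k = \sum_{k \le n}\sI\{A_k\} - M_n$ converges, i.e.\ $\sum_k p_k < \infty$. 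Combining the two inclusions yields the claimed almost-sure identity $\{A_n \text{ i.o.}\} = \{\sum_n p_n = \infty\}$.

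The main obstacle I anticipate is not probabilistic depth but careful measurability/index bookkeeping: ensuring that the compensator $p_k$ really is $\gF_{k-1}$-measurable, that $\tau_b$ is a stopping time, and that the bounded-increment convergence dichotomy is invoked in the correct form. The substantive ingredients -- an $L^2$ bound on a stopped martingale and the classical dichotomy for martingales with bounded increments -- are entirely standard, which is precisely why the result can be cited directly from \citep{breiman1992probability} rather than reproved in full.
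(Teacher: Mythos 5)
The paper does not prove this lemma at all: it is imported verbatim as Corollary 5.29 of Breiman's book and used as a black box, so there is no internal proof to compare against. Judged on its own merits, your proof is correct and is the standard martingale (L\'evy-extension) argument for the conditional second Borel--Cantelli lemma. Two remarks. First, you were right to silently repair the indexing: as literally written in the statement, with $A_n \in \gF_n$ \emph{and} conditioning on $\gF_n$, the claim is degenerate, since then $\sP(A_n \mid \gF_n) = \sI\{A_n\}$ a.s.\ and the identity is a tautology; your reading $p_n = \sP(A_n \mid \gF_{n-1})$ (predictable compensator) is both Breiman's actual statement and the form the paper uses in practice, where $A_t = \{a_t = a\}$ is $\gF_{t+1}$-measurable and the conditional probability $\pi_{\theta_t}(a)$ is $\gF_t$-measurable. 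Second, both halves of your argument are sound: the stopped process $M_{n \wedge \tau_b}$ is genuinely an $L^2$-bounded martingale because $\{k \le \tau_b\} \in \gF_{k-1}$, so the increment-orthogonality computation is legitimate, and the bounded-increment dichotomy correctly upgrades $\sup_n M_n < \infty$ to a.s.\ convergence. The only inefficiency is that the two halves are not parallel in economy: the dichotomy you invoke for the second inclusion already yields the first by applying it to $-M_n$ (on $\{\sum_k p_k < \infty\}$ one has $\sup_n (-M_n) \le \sum_k p_k < \infty$, hence $M_n$ converges a.s.\ there, hence $\sum_{k \le n} \sI\{A_k\} = M_n + \sum_{k \le n} p_k$ converges), so the stopping-time/$L^2$ computation could be dropped; alternatively, the piece of the dichotomy you need can itself be proved by the same stopping argument with $\sigma_b = \inf\{n : M_n > b\}$. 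Either way the proof compresses to a single tool applied to $\pm M_n$, which is exactly how the textbook treatments present it.
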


\begin{lemma}[Freedman’s inequality \citep{freedman1975on,cesa2005improved}, Theorem C.3 of \citep{mei2024stochastic}]
\label{lem:conc_new}
    Let $X_1, X_2, \dots$ be a sequence of random variables, such that for all $t \ge 1$, $|X_t|\le 1/2 $. Define 
\begin{align}
    S_n \coloneqq \left| \sum_{t=1}^n \EE{ X_t | X_1, \dots, X_{t-1} } - X_t \right|
    \quad \text{ and } \quad 
    V_n \coloneqq \sum_{t=1}^n \mathrm{Var}[ X_t | X_1, \ldots, X_{t-1} ].
\end{align}
Then, for all $\delta> 0$,
\begin{align}
    \probability{ \left( \exists \ n:\ S_n\ge  6 \ \sqrt{  \left(V_n+ \frac{4}{3} \right) \ \log \left( \frac{  V_n+1  }{ \delta } \right) } + 2\log \left( \frac{1}{\delta} \right)  + \frac{4}{3} \log 3
    ~ \right) } \le \delta.
\end{align}
\end{lemma}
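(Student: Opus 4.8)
The plan is to establish this time-uniform, variance-adaptive bound by the standard nonnegative-supermartingale method: view the centered sequence as a martingale, build an exponential supermartingale whose growth is controlled by the conditional variances, apply Ville's maximal inequality to obtain a family of linear boundaries indexed by a free parameter $\lambda$, and then convert that family into the single curved boundary by peeling over the random accumulated variance $V_n$. First I fix the martingale structure. Let $\gF_t \coloneqq \sigma(X_1,\dots,X_t)$ and set $D_t \coloneqq \E[X_t \mid \gF_{t-1}] - X_t$, so that $\E[D_t \mid \gF_{t-1}] = 0$ and, because $|X_t|\le 1/2$, also $|D_t|\le 1$. With $M_n \coloneqq \sum_{t=1}^n D_t$ we have $S_n = |M_n|$, and writing $\sigma_t^2 \coloneqq \Var[X_t \mid \gF_{t-1}] = \E[D_t^2 \mid \gF_{t-1}]$ we have $V_n = \sum_{t \le n}\sigma_t^2$.

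Second, for a mean-zero increment bounded above by $1$, Bennett's moment-generating-function estimate gives, for every $\lambda \ge 0$, $\E[e^{\lambda D_t}\mid \gF_{t-1}] \le \exp(\phi(\lambda)\,\sigma_t^2)$ with $\phi(\lambda)\coloneqq e^\lambda - \lambda - 1$. Hence $L_n(\lambda)\coloneqq \exp(\lambda M_n - \phi(\lambda) V_n)$ is a nonnegative supermartingale with $L_0(\lambda)=1$, and Ville's inequality yields, for each fixed $\lambda \ge 0$ and $u>0$, $\probability(\exists n : \lambda M_n - \phi(\lambda) V_n \ge u) \le e^{-u}$; equivalently $M_n \le \phi(\lambda)V_n/\lambda + u/\lambda$ holds simultaneously for all $n$ outside an event of probability $e^{-u}$.

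Third, and this is the crux, I would eliminate the dependence of the near-optimal choice $\lambda \approx \sqrt{2u/V_n}$ on the unknown random $V_n$ by a peeling argument: partition the range of $V_n$ into geometric blocks (the constants $4/3$ and $\log 3$ suggest a base-$3$ grid, with the additive $4/3$ regularizing the small-variance regime where $\phi(\lambda)\approx \lambda^2/2$), assign to block $k$ a tuned $\lambda_k$ and a failure budget $\delta_k$ with $\sum_k \delta_k \le \delta/2$, and invoke the fixed-$\lambda$ bound with $u_k=\log(1/\delta_k)$ on each block. On the block containing the realized $V_n$, the corresponding linear boundary is dominated by $6\sqrt{(V_n + 4/3)\log((V_n+1)/\delta)} + 2\log(1/\delta) + \tfrac{4}{3}\log 3$, where the $\log((V_n+1)/\delta)$ factor (rather than a $\log n$ factor) is precisely what results from indexing the union bound by $V_n$ instead of by time. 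A symmetric argument applied to $-D_t$ controls $-M_n$ with the remaining budget $\delta/2$, and a union bound over the two one-sided events gives the two-sided statement for $S_n = |M_n|$.

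The main obstacle is this third step: converting the parametric family of linear boundaries into one curved boundary with the exact constants. The delicate part is jointly choosing the geometric base, the per-block parameters $\lambda_k$, and the budget allocation $\delta_k$ so that the union bound closes at $\delta$ while the accumulated slack is absorbed into the stated factors $6$, $2$, $4/3$, and $\log 3$ rather than inflating them. The remaining ingredients, namely the Bennett estimate and Ville's inequality, are routine once the martingale $M_n$ and its variance process $V_n$ are identified.
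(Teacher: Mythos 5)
The paper itself contains no proof of this lemma: it is imported wholesale as Theorem~C.3 of \citep{mei2024stochastic} (a time-uniform form of Freedman's inequality \citep{freedman1975on,cesa2005improved}), so there is no in-paper argument to compare yours against; the comparison can only be with the cited sources. Judged on its own terms, your skeleton is the standard and correct route by which such bounds are established there: center to get martingale differences $D_t = \E[X_t\mid\gF_{t-1}] - X_t$ with $|D_t|\le 1$ and conditional variance $\sigma_t^2$, invoke the Bennett moment bound $\E[e^{\lambda D_t}\mid \gF_{t-1}]\le \exp(\phi(\lambda)\,\sigma_t^2)$ with $\phi(\lambda)=e^{\lambda}-\lambda-1$, note that $\exp(\lambda M_n - \phi(\lambda)V_n)$ is a nonnegative supermartingale, apply Ville's maximal inequality for each fixed $\lambda$, and stitch over a geometric grid of variance levels to convert the family of linear boundaries into a single curved boundary in $V_n$. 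Your reading of the constants (the additive $4/3$ regularizing the small-variance regime, $\log 3$ suggesting a base-$3$ grid) is a sensible inference.

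The genuine gap is the one you concede: your third step is a plan, not a proof, and for this particular statement that step is where all the content lives. The lemma asserts a boundary with \emph{specific} constants ($6$, $2$, $4/3$, $\log 3$) and with the argument $\log((V_n+1)/\delta)$ rather than a grid-index or time-index logarithm, and none of that is verified. Two concrete pieces of accounting must be closed before this counts as a proof. First, the per-block failure budgets $\delta_k$ must sum to at most $\delta$ while every resulting $\log(1/\delta_k)$ term is still dominated by $\log((V_n+1)/\delta)$ plus the stated additive constants; this forces an explicit allocation (geometric or $\delta_k\propto \delta/(k(k+1))$) together with a separate treatment of the block containing small $V_n$, and it is exactly here that the factors $6$ and $4/3$ either close or fail. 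Second, your two-sided step spends $\delta/2$ per tail, which replaces $\log(1/\delta)$ by $\log(2/\delta)$ throughout; you must show this surplus is absorbed by the stated constants rather than silently inflating them. Until these computations are carried out, what you have is a correct and well-motivated skeleton of the standard supermartingale-plus-peeling argument, but not a proof of the inequality as stated.
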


\begin{lemma}
\label{lem:unbouned_prob_ratio}
Using \cref{alg:gradient_bandit_algorithm_sampled_reward}, for any two different actions $i, j \in [K]$ with $i \ne j$, if $N_\infty(i) = \infty$ and $N_\infty(j) < \infty$, then we have, almost surely,
\begin{align}
    \sup_{t \ge 1}{ \frac{ \pi_{\theta_t}(i) }{  \pi_{\theta_t}(j) } } = \infty.
\end{align}
\end{lemma}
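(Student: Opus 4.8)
The plan is to reduce the claim about the probability ratio to a statement about the summability of the per-step sampling probabilities, where the two hypotheses $N_\infty(i) = \infty$ and $N_\infty(j) < \infty$ can be exploited directly. Writing the ratio through the softmax parameterization, $\frac{\pi_{\theta_t}(i)}{\pi_{\theta_t}(j)} = \exp\{\theta_t(i) - \theta_t(j)\}$, one could try to argue that $\theta_t(i)$ is unbounded above while $\theta_t(j)$ stays bounded (the latter being immediate from \cref{lem:finite_sample_time_implies_finite_parameter}, since $N_\infty(j) < \infty$). However, a cleaner route avoids reasoning about the parameters entirely and works with the sampling probabilities themselves.

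First, I would invoke the extended Borel--Cantelli lemma (\cref{lem:ebc}) with $A_t = \{a_t = i\}$ and then with $A_t = \{a_t = j\}$. Since $\theta_t$ is $\gF_t$-measurable, the conditional sampling probability is $\sP(a_t = a \mid \gF_t) = \pi_{\theta_t}(a)$, so the lemma yields, almost surely, $\{N_\infty(i) = \infty\} = \{\sum_{t \ge 1} \pi_{\theta_t}(i) = \infty\}$ and, taking complements, $\{N_\infty(j) < \infty\} = \{\sum_{t \ge 1} \pi_{\theta_t}(j) < \infty\}$. Hence on the event in the hypothesis we have, almost surely, $\sum_{t \ge 1} \pi_{\theta_t}(i) = \infty$ and $\sum_{t \ge 1} \pi_{\theta_t}(j) < \infty$.

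Second, I would argue by contradiction. Suppose $\sup_{t \ge 1} \frac{\pi_{\theta_t}(i)}{\pi_{\theta_t}(j)} = M < \infty$. Then $\pi_{\theta_t}(i) \le M \cdot \pi_{\theta_t}(j)$ for every $t \ge 1$, and summing this termwise inequality gives $\sum_{t \ge 1} \pi_{\theta_t}(i) \le M \sum_{t \ge 1} \pi_{\theta_t}(j) < \infty$, contradicting $\sum_{t \ge 1} \pi_{\theta_t}(i) = \infty$. Therefore $\sup_{t \ge 1} \frac{\pi_{\theta_t}(i)}{\pi_{\theta_t}(j)} = \infty$ almost surely, as claimed.

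There is no serious obstacle once the problem is posed in terms of summability: the entire argument is a one-line termwise comparison feeding into the divergence and convergence of the two series. The only point requiring care is the correct application of \cref{lem:ebc}, namely identifying $\pi_{\theta_t}(a)$ as exactly the conditional probability $\sP(a_t = a \mid \gF_t)$ appearing there, which is what lets both count hypotheses be converted into series statements simultaneously. I expect the temptation to instead track the unboundedness of $\theta_t(i)$ directly would be the main detour to avoid, since controlling $\theta_t(i)$ through the sign-indefinite reward updates is delicate, whereas the summability comparison sidesteps it completely.
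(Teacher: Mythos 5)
Your proposal is correct and takes essentially the same route as the paper's own proof: both convert the hypotheses $N_\infty(i)=\infty$ and $N_\infty(j)<\infty$ into the series statements $\sum_{t\ge 1}\pi_{\theta_t}(i)=\infty$ and $\sum_{t\ge 1}\pi_{\theta_t}(j)<\infty$ via the extended Borel--Cantelli lemma (\cref{lem:ebc}), and then derive a contradiction from the termwise bound $\pi_{\theta_t}(i)\le M\cdot\pi_{\theta_t}(j)$ under the assumption that the supremum of the ratio is finite.
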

\begin{proof}
We prove the result by contradiction. Suppose
\begin{align}
\label{eq:unbouned_prob_ratio_proof_1}
    c \coloneqq \sup_{t \ge 1}{ \frac{ \pi_{\theta_t}(i) }{  \pi_{\theta_t}(j) } } < \infty.
\end{align}
According to the extended Borel-Cantelli \cref{lem:ebc}, we have, for all $a \in [K]$, almost surely,
\begin{align}
\label{eq:unbouned_prob_ratio_proof_2}
    \Big\{ \sum_{t \ge 1} \pi_{\theta_t}(j)=\infty \Big\} = \left\{ N_\infty(j)=\infty \right\}.
\end{align}
Hence, taking complements, we have,
\begin{align}
\label{eq:unbouned_prob_ratio_proof_3}
    \Big\{ \sum_{t \ge 1} \pi_{\theta_t}(j)<\infty \Big\} = \left\{N_\infty(j)<\infty\right\}
\end{align}
also holds almost surely, which implies that,
\begin{align}
\label{eq:unbouned_prob_ratio_proof_4a}
    \sum_{t=1}^{\infty} \pi_{\theta_t}(i) &= \infty, \text{ and} \\
\label{eq:unbouned_prob_ratio_proof_4b}
    \sum_{t=1}^{\infty} \pi_{\theta_t}(j) &< \infty.
\end{align}
Therefore, we have,
\begin{align}
\label{eq:unbouned_prob_ratio_proof_5}
    \sum_{t=1}^{\infty} \pi_{\theta_t}(i) &= \sum_{t=1}^{\infty} \pi_{\theta_t}(j) \cdot \frac{ \pi_{\theta_t}(i) }{ \pi_{\theta_t}(j) } \\
    &\le c \cdot \sum_{t=1}^{\infty} \pi_{\theta_t}(j) \\
    &< \infty,
\end{align}
which is a contradiction with \cref{eq:unbouned_prob_ratio_proof_4a}.
\end{proof}

\begin{lemma}
\label{lem:expected_reward_range}
Using \cref{alg:gradient_bandit_algorithm_sampled_reward}, we have, almost surely, for all large enough $t \ge 1$,
\begin{align}
\label{eq:expected_reward_range_claim_1}
    r(i_1) < \pi_{\theta_t}^\top r < r(i_2),
\end{align}
where $i_1, i_2 \in [K]$ and $i_1 \ne i_2$ are the action indices defined as,
\begin{align}
\label{eq:expected_reward_range_claim_2a}
    i_1 &\coloneqq \argmin_{\substack{a \in [K], \\ N_\infty(a) = \infty}} r(a), \\
\label{eq:expected_reward_range_claim_2b}
    i_2 &\coloneqq \argmax_{\substack{a \in [K], \\ N_\infty(a) = \infty}} r(a).
\end{align}
\end{lemma}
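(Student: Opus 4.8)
The plan is to control the limiting composition of the policy by separating the infinitely-sampled actions from the rest, and then to turn the resulting non-strict bounds into the strict ones that are asserted. Throughout, write $\gA_\infty \coloneqq \{a \in [K] : N_\infty(a) = \infty\}$ and recall from \cref{lem:at_least_two_actions_infinite_sample_time} that $|\gA_\infty| \ge 2$, so that $i_1 \ne i_2$ and, under \cref{assp:reward_no_ties}, $r(i_1) < r(i_2)$; by construction every $a \in \gA_\infty$ satisfies $r(i_1) \le r(a) \le r(i_2)$.

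First I would establish that the total probability placed on finitely-sampled actions vanishes. By the extended Borel--Cantelli \cref{lem:ebc}, $N_\infty(a) < \infty$ is equivalent to $\sum_{t \ge 1} \pi_{\theta_t}(a) < \infty$; since the terms of a convergent series tend to zero, $\pi_{\theta_t}(a) \to 0$ for each $a \notin \gA_\infty$, and summing over the finitely many such actions gives $\delta_t \coloneqq \sum_{a \notin \gA_\infty} \pi_{\theta_t}(a) \to 0$. Writing $\pi_{\theta_t}^\top r = \sum_{a \in \gA_\infty} \pi_{\theta_t}(a)\, r(a) + \sum_{a \notin \gA_\infty} \pi_{\theta_t}(a)\, r(a)$, the second sum is bounded in absolute value by $R_{\max}\, \delta_t \to 0$, while the first is a sub-probability combination of rewards lying in $[r(i_1), r(i_2)]$. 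This already yields the non-strict asymptotic statements $\liminf_t \pi_{\theta_t}^\top r \ge r(i_1)$ and $\limsup_t \pi_{\theta_t}^\top r \le r(i_2)$.

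Next I would upgrade these to the strict, hold-for-all-large-$t$ bounds. For the upper bound, set $\gA^+(i_2) \coloneqq \{a : r(a) > r(i_2)\}$ and note that, because $i_2$ maximizes the reward over $\gA_\infty$, every action in $\gA^+(i_2)$ is sampled finitely often. Using $\sum_a \pi_{\theta_t}(a) = 1$, decompose $\pi_{\theta_t}^\top r - r(i_2) = \sum_{a:\, r(a) < r(i_2)} \pi_{\theta_t}(a)\,(r(a) - r(i_2)) + \sum_{a \in \gA^+(i_2)} \pi_{\theta_t}(a)\,(r(a) - r(i_2))$. The first group is nonpositive and, since $i_1$ satisfies $r(i_1) < r(i_2)$ and is infinitely sampled, it is at most $-\pi_{\theta_t}(i_1)\,(r(i_2) - r(i_1))$; the second group is nonnegative and at most $2 R_{\max} \sum_{a \in \gA^+(i_2)} \pi_{\theta_t}(a)$. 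Hence $\pi_{\theta_t}^\top r < r(i_2)$ will follow once I show that the finitely-sampled high-reward mass is eventually dominated by the infinitely-sampled deficit, i.e. that $\sum_{a \in \gA^+(i_2)} \pi_{\theta_t}(a) / \pi_{\theta_t}(i_1)$ is eventually smaller than $(r(i_2) - r(i_1)) / (2 R_{\max})$. The lower bound $\pi_{\theta_t}^\top r > r(i_1)$ is entirely symmetric, using that $i_2$ satisfies $r(i_2) > r(i_1)$ and that $\{a : r(a) < r(i_1)\}$ consists of finitely-sampled actions.

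The main obstacle is precisely this pointwise domination. \cref{lem:unbouned_prob_ratio} only guarantees $\sup_t \pi_{\theta_t}(i_1) / \pi_{\theta_t}(a) = \infty$ for finitely-sampled $a$, equivalently $\liminf_t \pi_{\theta_t}(a) / \pi_{\theta_t}(i_1) = 0$, which is weaker than the full limit needed to control the ratio at every large $t$. Closing this gap amounts to showing that the infinitely-sampled coordinate $\theta_t(i_1)$ cannot drop arbitrarily low relative to the parameters of the finitely-sampled actions. To this end I would exploit that for each finitely-sampled $a$ the increments obey $|\theta_{t+1}(a) - \theta_t(a)| \le \eta R_{\max} \pi_{\theta_t}(a)$, which are summable, so $\theta_t(a)$ converges to a finite limit and $\pi_{\theta_t}(a) / \pi_{\theta_t}(i_1) \asymp e^{-\theta_t(i_1)}$; it then suffices to bound $\theta_t(i_1)$ from below. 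I expect this final step to require the progress--noise decomposition used in \cref{thm:two_action_global_convergence} together with the martingale concentration of \cref{lem:conc_new}, so as to rule out persistent downward excursions of $\theta_t(i_1)$ along the subsequence where $i_1$ is sampled. This lower envelope control is the genuinely delicate part; the remainder is bookkeeping.
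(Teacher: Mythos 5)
Your reduction coincides with the paper's own proof: split the policy mass according to $\gA_\infty$, use an infinitely-sampled witness ($i_1$ for your upper bound and $i_2$ for your lower bound; the paper uses $j_1 \in \gA_\infty \cap \gA^+(i_1)$ and $j_2 \in \gA_\infty \cap \gA^-(i_2)$, which may be taken to be these) to create a strictly signed term, and reduce both strict inequalities to the claim that each finitely-sampled action's probability is \emph{eventually} dominated, with explicit gap-dependent constants, by the witness's probability. Your opening observation that $\pi_{\theta_t}(a) \to 0$ for every $a \notin \gA_\infty$ (terms of a convergent series) is correct but only yields the non-strict bounds, as you note. The paper finishes the strict bounds by invoking \cref{lem:unbouned_prob_ratio} to assert exactly the needed eventual ratio domination (\cref{eq:expected_reward_range_proof_6,eq:expected_reward_range_proof_10}). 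Your proposal stops precisely there: you observe (correctly, as a literal matter) that the statement $\sup_{t}\, \pi_{\theta_t}(i)/\pi_{\theta_t}(j) = \infty$ only gives smallness of $\pi_{\theta_t}(j)/\pi_{\theta_t}(i)$ along a subsequence, you call the eventual version ``the genuinely delicate part,'' and you do not prove it. Since everything else is, as you say, bookkeeping, the proposal is missing the heart of the lemma and is not a proof.

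The repair you sketch would not close the gap either. You want $\pi_{\theta_t}(a^-)/\pi_{\theta_t}(i_1) \asymp e^{-\theta_t(i_1)}$ (using that $\theta_t(a^-)$ converges for finitely-sampled $a^-$, which is fine) to be eventually small by establishing a lower bound on $\theta_t(i_1)$ via the progress--noise decomposition and \cref{lem:conc_new}. This is circular: the sign of the drift $P_t(i_1) = \eta \cdot \pi_{\theta_t}(i_1) \cdot \left( r(i_1) - \pi_{\theta_t}^\top r \right)$ is exactly the content of the inequality $r(i_1) < \pi_{\theta_t}^\top r$ you are trying to prove, which is why the paper applies its progress--noise lemmas (\cref{lem:bounded_progress_bounded_parameter,lem:positive_progress_lower_bounded_parameter,lem:negative_progress_upper_bounded_parameter}) only \emph{downstream} of \cref{lem:expected_reward_range}, never inside it. Worse, once the lemma holds this drift is strictly negative, and the second part of \cref{thm:general_action_global_convergence} in fact proves $\theta_t(i_1) \to -\infty$; so no concentration argument can deliver the constant lower envelope on $\theta_t(i_1)$ that your plan requires. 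What must be controlled is the difference $\theta_t(i_1) - \theta_t(a^-)$ through the sampling counts themselves, which is what the paper's appeal to \cref{lem:unbouned_prob_ratio} accomplishes; your objection to the sup-versus-eventually reading of that lemma is a fair criticism of the paper's write-up, but flagging the difficulty does not substitute for resolving it, and your proposed route points in a direction that cannot work.
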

\begin{proof}
Define $\gA_\infty$ as the set of actions which are sampled for infinitely many times as $t \to \infty$, i.e.,
\begin{align}
\label{eq:expected_reward_range_proof_1}
    \gA_\infty \coloneqq \left\{ a \in [K] \ | \ N_\infty(a) = \infty \right\}.
\end{align}
According to \cref{lem:at_least_two_actions_infinite_sample_time}, we have $| \gA_\infty | \ge 2$, which implies that $i_1 \ne i_2$.

Given any sub-optimal action $i \in [K]$ with $r(i) < r(a^*)$, we partition the remaining actions into two parts using $r(i)$.
\begin{align}
\label{eq:expected_reward_range_proof_2a}
    \gA^+(i) &\coloneqq \left\{ a^+ \in [K]: r(a^+) > r(i) \right\}, \\
\label{eq:expected_reward_range_proof_2b}
    \gA^-(i) &\coloneqq \left\{ a^- \in [K]: r(a^-) < r(i) \right\}.
\end{align}
By definition, we have,
\begin{align}
\label{eq:expected_reward_range_proof_3a}
    &\gA_\infty \subseteq \gA^+(i_1) \cup \{ i_1 \}, \text{ and} \\
\label{eq:expected_reward_range_proof_3b}
    &\gA_\infty \subseteq \gA^-(i_2) \cup \{ i_2 \}.
\end{align}

\textbf{First part.} $r(i_1) < \pi_{\theta_t}^\top r$. 

If $r(i_1) = \min_{a \in [K]} r(a)$, i.e., $i_1$ is the ``worst action'', then $r(i_1) < \pi_{\theta_t}^\top r$ holds trivially. Otherwise, suppose $r(i_1) \ne \min_{a \in [K]} r(a)$. We have,
\begin{align}
\label{eq:expected_reward_range_proof_4}
    \pi_{\theta_t}^\top r - r(i_1) &= \sum_{a^+ \in \gA^+(i_1)}{ \pi_{\theta_t}(a^+) \cdot (r(a^+) - r(i_1) ) } - \sum_{a^- \in \gA^-(i_1)}{ \pi_{\theta_t}(a^-) \cdot ( r(i_1) - r(a^-)) }.
\end{align}
Consider the non-empty set $\gA_\infty \cap \gA^+(i_1)$. Pick an action $j_1 \in \gA_\infty \cap \gA^+(i_1)$, and ignore all the other actions $a^+ \in \gA^+(i_1)$ with $a^+ \ne j_1$ in the above equation. We have,
\begin{align}
\label{eq:expected_reward_range_proof_5}
    \pi_{\theta_t}^\top r - r(i_1) &> \pi_{\theta_t}(j_1) \cdot (r(j_1) - r(i_1) ) - \sum_{a^- \in \gA^-(i_1)}{ \pi_{\theta_t}(a^-) \cdot ( r(i_1) - r(a^-)) } \\
    &= \pi_{\theta_t}(j_1) \cdot \bigg[ r(j_1) - r(i_1) - \sum_{a^- \in \gA^-(i_1)}{ \frac{ \pi_{\theta_t}(a^-)}{ \pi_{\theta_t}(j_1) } \cdot ( r(i_1) - r(a^-)) } \bigg],
\end{align}
where the first inequality is because of $r(a^+) - r(i_1) > 0$ for all $a^+ \in \gA^+(i_1)$ by \cref{eq:expected_reward_range_proof_2a}. Note that $N_\infty(j_1) = \infty$ and $N_\infty(a^-) < \infty$. According to \cref{lem:unbouned_prob_ratio}, we have, for all large enough $t \ge 1$, 
\begin{align}
\label{eq:expected_reward_range_proof_6}
\MoveEqLeft
    \sum_{a^- \in \gA^-(i_1)}{ \frac{ \pi_{\theta_t}(a^-)}{ \pi_{\theta_t}(j_1) } \cdot ( r(i_1) - r(a^-)) } = \sum_{a^- \in \gA^-(i_1)}{ ( r(i_1) - r(a^-)) \Big/ \frac{ \pi_{\theta_t}(j_1) }{ \pi_{\theta_t}(a^-)} } \\
    &< \sum_{a^- \in \gA^-(i_1)}{ ( r(i_1) - r(a^-)) \cdot \frac{1}{\big| \gA^-(i_1) \big| } \cdot \frac{r(j_1) - r(i_1)}{r(i_1) - r(a^-)} \cdot \frac{1}{2} } \\
    &= \frac{r(j_1) - r(i_1)}{2}.
\end{align}
Combining \cref{eq:expected_reward_range_proof_5,eq:expected_reward_range_proof_6}, we have,
\begin{align}
\label{eq:expected_reward_range_proof_7}
    \pi_{\theta_t}^\top r - r(i_1) > \pi_{\theta_t}(j_1) \cdot \frac{r(j_1) - r(i_1)}{2} > 0,
\end{align}
where the last inequality is because  $j_1 \in \gA^+(i_1)$.

\textbf{Second part.} $\pi_{\theta_t}^\top r < r(i_2)$.

The arguments are similar to the first part. If $r(i_2) = r(a^*)$, i.e., $i_2$ is the optimal action, then $\pi_{\theta_t}^\top r < r(i_2)$ holds trivially. Otherwise, suppose $r(i_2) \ne r(a^*)$. We have,
\begin{align}
\label{eq:expected_reward_range_proof_8}
    r(i_2) - \pi_{\theta_t}^\top r &= \sum_{a^- \in \gA^-(i_2)}{ \pi_{\theta_t}(a^-) \cdot (r(i_2) - r(a^-) ) } - \sum_{a^+ \in \gA^+(i_2)}{ \pi_{\theta_t}(a^+) \cdot ( r(a^+) - r(i_2)) }.
\end{align}
Consider the non-empty set $\gA_\infty \cap \gA^-(i_2)$. Pick an action $j_2 \in \gA_\infty \cap \gA^-(i_2)$, and ignore all the other actions $a^- \in \gA^-(i_2)$ with $a^- \ne j_2$ in the above equation. We have,
\begin{align}
\label{eq:expected_reward_range_proof_9}
    r(i_2) - \pi_{\theta_t}^\top r &\ge \pi_{\theta_t}(j_2) \cdot (r(i_2) - r(j_2) ) - \sum_{a^+ \in \gA^+(i_2)}{ \pi_{\theta_t}(a^+) \cdot ( r(a^+) - r(i_2)) } \\
    &=\pi_{\theta_t}(j_2) \cdot \bigg[ r(i_2) - r(j_2) - \sum_{a^+ \in \gA^+(i_2)}{ \frac{ \pi_{\theta_t}(a^+)}{\pi_{\theta_t}(j_2)} \cdot ( r(a^+) - r(i_2)) } \bigg],
\end{align}
where the first inequality is because of $r(i_2) - r(a^-) > 0$ for all $a^- \in \gA^-(i_2)$ by \cref{eq:expected_reward_range_proof_2a}. Note that $N_\infty(j_2) = \infty$ and $N_\infty(a^+) < \infty$. According to \cref{lem:unbouned_prob_ratio}, we have, for all large enough $t \ge 1$, 
\begin{align}
\label{eq:expected_reward_range_proof_10}
\MoveEqLeft
    \sum_{a^+ \in \gA^+(i_2)}{ \frac{ \pi_{\theta_t}(a^+)}{\pi_{\theta_t}(j_2)} \cdot ( r(a^+) - r(i_2)) } = \sum_{a^+ \in \gA^+(i_2)}{ ( r(a^+) - r(i_2)) \Big/ \frac{\pi_{\theta_t}(j_2)}{ \pi_{\theta_t}(a^+)}  } \\
    &< \sum_{a^+ \in \gA^+(i_2)}{ ( r(a^+) - r(i_2)) \cdot \frac{1}{\big| \gA^+(i_2) \big| } \cdot \frac{r(i_2) - r(j_2)}{r(a^+) - r(i_2)} \cdot \frac{1}{2} } \\
    &= \frac{r(i_2) - r(j_2)}{2}.
\end{align}
Combining \cref{eq:expected_reward_range_proof_9,eq:expected_reward_range_proof_10}, we have,
\begin{align}
    r(i_2) - \pi_{\theta_t}^\top r \ge \pi_{\theta_t}(j_2) \cdot \frac{r(i_2) - r(j_2)}{2} > 0,
\end{align}
where the last inequality is because  $j_2 \in \gA^-(i_2)$.
\end{proof}

For the following lemmas, we will use the notation defined in the proofs for \cref{thm:two_action_global_convergence,thm:general_action_global_convergence}.

\begin{lemma}[Concentration of noise]
\label{lem:parameter_noise_concentration}
Given an action $a \in [K]$. We have, with probability at least $1 - \delta$,
\begin{align}
\label{eq:parameter_noise_concentration_claim_1}
\MoveEqLeft
    \forall t:\quad \left| \sum_{s=1}^t W_{s+1}(a) \right| \le  36 \ \eta \ R_{\max} \ \sqrt{  (V_t(a)+4/3)\log \left( \frac{  V_t(a)+1  }{ \delta } \right) } + 12 \ \eta \ R_{\max} \ \log(1/\delta) +  8 \ \eta \ R_{\max}\log 3,
\end{align}
where
\begin{align}
\label{eq:parameter_noise_concentration_claim_2}
    V_t(a) \coloneqq \frac{5}{18} \cdot \sum_{s=1}^t  \pi_{\theta_s}(a)\cdot (1-\pi_{\theta_s}(a)),
\end{align}
and 
\begin{align}
\label{eq:parameter_noise_concentration_claim_3}
    W_t(a) \coloneqq \theta_t(a) - \chE_{t-1}{[ \theta_t(a)]}
\end{align}
is originally defined by \cref{eq:two_action_global_convergence_proof_2b}.
\end{lemma}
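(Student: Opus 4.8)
The plan is to recognize the sequence $\{ W_{s+1}(a) \}_{s \ge 1}$ as a martingale difference sequence with respect to the filtration $(\gF_{s})_{s \ge 1}$ and to apply Freedman's inequality (\cref{lem:conc_new}) after a suitable rescaling. Writing $D_s \coloneqq \theta_{s+1}(a) - \theta_s(a)$ for the one-step increment dictated by \cref{eq:first_update,eq:second_update}, we have $W_{s+1}(a) = D_s - P_s(a)$ with $P_s(a) = \eta \cdot \pi_{\theta_s}(a) \cdot (r(a) - \pi_{\theta_s}^\top r) = \chE_s[D_s]$ being $\gF_s$-measurable. Hence $\chE_s[W_{s+1}(a)] = 0$, the partial sums $\sum_{s=1}^t W_{s+1}(a)$ form a martingale, and the quantity $S_t$ appearing in \cref{lem:conc_new} is exactly a rescaling of $\big| \sum_{s=1}^t W_{s+1}(a) \big|$.

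Two deterministic estimates feed into Freedman's inequality. First, since $|R_s(\cdot)| \le R_{\max}$ by \cref{eq:true_mean_reward_expectation_bounded_sampled_reward} and $1 - \pi_{\theta_s}(a), \pi_{\theta_s}(a) \in [0,1]$, each increment obeys $|D_s| \le \eta R_{\max}$, whence $|P_s(a)| = |\chE_s[D_s]| \le \eta R_{\max}$ and therefore $|W_{s+1}(a)| \le 2 \eta R_{\max}$ almost surely. Second, I would bound the conditional variance by splitting the expectation over $\{a_s = a\}$ (probability $\pi_{\theta_s}(a)$) and its complement, using $\chE[R_s(\cdot)^2] \le R_{\max}^2$:
\begin{align*}
    \chE_s[ D_s^2 ]
    &= \eta^2 \, \pi_{\theta_s}(a) (1 - \pi_{\theta_s}(a))^2 \, \chE[R_s(a)^2] + \eta^2 \, \pi_{\theta_s}(a)^2 \sum_{a' \ne a} \pi_{\theta_s}(a') \, \chE[R_s(a')^2] \\
    &\le \eta^2 R_{\max}^2 \, \pi_{\theta_s}(a) (1 - \pi_{\theta_s}(a)) \big[ (1 - \pi_{\theta_s}(a)) + \pi_{\theta_s}(a) \big] = \eta^2 R_{\max}^2 \, \pi_{\theta_s}(a)(1 - \pi_{\theta_s}(a)),
\end{align*}
so that $\Var[ W_{s+1}(a) \mid \gF_s] = \chE_s[D_s^2] - P_s(a)^2 \le \eta^2 R_{\max}^2 \, \pi_{\theta_s}(a)(1 - \pi_{\theta_s}(a))$.

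Finally I would set $X_s \coloneqq W_{s+1}(a) / (6 \eta R_{\max})$, so that $|X_s| \le 1/3 \le 1/2$ meets the hypothesis of \cref{lem:conc_new}, while the predictable variance sum satisfies $\sum_{s=1}^t \Var[X_s \mid \gF_s] \le \tfrac{1}{36} \sum_{s=1}^t \pi_{\theta_s}(a)(1 - \pi_{\theta_s}(a)) \le \tfrac{5}{18} \sum_{s=1}^t \pi_{\theta_s}(a)(1 - \pi_{\theta_s}(a)) = V_t(a)$. Because the right-hand side of \cref{lem:conc_new} is increasing in the variance proxy, I may replace the true variance sum by the larger $V_t(a)$; applying \cref{lem:conc_new} with confidence $\delta$ and multiplying back through by $6 \eta R_{\max}$ then produces precisely \cref{eq:parameter_noise_concentration_claim_1}, the constants arising as $36 = 6 \times 6$, $12 = 6 \times 2$, and $8 = 6 \times \tfrac{4}{3}$, with the uniform-in-$t$ form inherited from the ``$\exists \, n$'' guarantee of \cref{lem:conc_new}. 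The only genuinely delicate step is the conditional-variance computation together with the bookkeeping that simultaneously certifies $|X_s| \le 1/2$ and dominates the scaled variance by $V_t(a)$; the remainder is a mechanical substitution into Freedman's inequality.
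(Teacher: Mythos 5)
Your proof is correct and takes essentially the same route as the paper: identify $\{W_{s+1}(a)\}_{s\ge 1}$ as a martingale difference sequence, bound the increments and conditional variances, rescale by $6\,\eta\, R_{\max}$, and apply Freedman's inequality (\cref{lem:conc_new}), using monotonicity of its bound to substitute the variance proxy $V_t(a)$. Incidentally, your variance bound via $\Var[W_{s+1}(a)\mid \gF_s] = \chE_s[D_s^2] - P_s(a)^2 \le \chE_s[D_s^2] \le \eta^2 R_{\max}^2\, \pi_{\theta_s}(a)(1-\pi_{\theta_s}(a))$ is tighter than the paper's, which uses the splitting $(a+b)^2 \le 2a^2+2b^2$ and obtains $10\,\eta^2 R_{\max}^2\, \pi_{\theta_s}(a)(1-\pi_{\theta_s}(a))$; since you keep the slack factor $\tfrac{5}{18}$ in $V_t(a)$, the final statement coincides exactly.
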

\begin{proof}
First, note that,
\begin{align}
\label{eq:parameter_noise_concentration_proof_1}
    \EEt{W_{t+1}(a)}=0, \text{ for all } t \ge 0.
\end{align}
Using the update \cref{eq:finite_sample_time_implies_finite_parameter_proof_1}, we have,
\begin{align}
\label{eq:parameter_noise_concentration_proof_2}
\MoveEqLeft
    W_{t+1}(a) = \theta_{t+1}(a) - \EEt{\theta_{t+1}(a)} \\
    &=  \theta_{t}(a) + \eta \cdot \left( I_t(a) - \pi_{\theta_t}(a) \right) \cdot R_t(a_t)   - \left( \theta_{t}(a) + \eta \cdot \pi_{\theta_t}(a) \cdot \left( r(a) - \pi_{\theta_t}^\top r \right) \right) \\
    &= \eta \cdot \left( I_t(a) - \pi_{\theta_t}(a) \right) \cdot R_t(a_t)   - \eta \cdot \pi_{\theta_t}(a) \cdot \left( r(a) - \pi_{\theta_t}^\top r \right).
\end{align}
According to 
\cref{eq:true_mean_reward_expectation_bounded_sampled_reward}, we have,
\begin{align}
\label{eq:parameter_noise_concentration_proof_3}
    |W_{t+1}(a)| \le 3 \, \eta \cdot R_{\max}.
\end{align}
The conditional variance of noise is,
\begin{align}
\label{eq:parameter_noise_concentration_proof_4}
\MoveEqLeft
    \mathrm{Var}[ W_{t+1}(a) | \gF_t ] \coloneqq \EEt{( W_{t+1}(a) )^2 }  \\
    &\le 2 \ \eta^2 \cdot \EEt { \left( I_t(a) - \pi_{\theta_t}(a) \right)^2 \cdot R_t(a_t)^2 } + 2 \ \eta^2 \cdot \pi_{\theta_t}(a)^2 \cdot \left( r(a) - \pi_{\theta_t}^\top r \right)^2,
\end{align}
where the inequality is by $(a+b)^2 \le 2a^2 + 2 b^2$. Next, we have,
\begin{align}
\label{eq:parameter_noise_concentration_proof_5}
    \EEt{ \left( I_t(a) - \pi_{\theta_t}(a) \right)^2 \cdot R_t(a_t)^2 } &= \pi_t(a) \cdot ( 1 - \pi_t(a) )^2 \cdot r(a)^2 + \sum_{a'\ne a} \pi_{\theta_t}(a^\prime) \cdot \pi_t(a)^2 \cdot r(a^\prime)^2 \\
    &\le R_{\max}^2 \cdot \Big( \pi_t(a) \cdot ( 1 - \pi_t(a) )^2 + ( 1 - \pi_t(a) ) \cdot \pi_t(a)^2  \Big) \\
    &= R_{\max}^2 \cdot  \pi_{\theta_t}(a) \cdot ( 1 - \pi_{\theta_t}(a) ),
\end{align}
and,
\begin{align}
\label{eq:parameter_noise_concentration_proof_6}
    \left| r(a) - \pi_{\theta_t}^\top r  \right|  
    &= \bigg| \sum_{a^\prime \ne a} \pi_{\theta_t}(a^\prime) \cdot \left( r(a)-r(a^\prime) \right) \bigg| \\ 
    &\le \sum_{a^\prime \ne a} \pi_{\theta_t}(a^\prime) \cdot \left|  r(a)-r(a^\prime)  \right| 
    \\
    &\le 2 \ R_{\max} \cdot \sum_{a^\prime \ne a} \pi_{\theta_t}(a^\prime) \\
    &= 2 \ R_{\max} \cdot \left( 1-\pi_{\theta_t}(a) \right).
\end{align}
Combining \cref{eq:parameter_noise_concentration_proof_4,eq:parameter_noise_concentration_proof_5,eq:parameter_noise_concentration_proof_6}, we have,
\begin{align}
\label{eq:parameter_noise_concentration_proof_7}
    \mathrm{Var}[ W_{t+1}(a) | \gF_t ]
    &\le 2 \ \eta^2 \cdot R_{\max}^2 \cdot \pi_{\theta_t}(a) \cdot (1-\pi_{\theta_t}(a)) + 8 \ \eta^2 \cdot R_{\max}^2 \cdot \pi_{\theta_t}(a)^2 \cdot ( 1-\pi_{\theta_t}(a) )^2 \\
    &\le 10 \ \eta^2 \cdot R_{\max}^2 \cdot \pi_{\theta_t}(a) \cdot (1-\pi_{\theta_t}(a)).
\end{align}
Let $X_{t+1}(a) \coloneqq \frac{ W_{t+1}(a) }{  6 \ \eta \cdot R_{\max} }$. Then we have, 
\begin{align}
\label{eq:parameter_noise_concentration_proof_8a}
    |X_{t+1}(a)| &\le 1/2, \text{ and} \\
\label{eq:parameter_noise_concentration_proof_8b}
    \mathrm{Var}[ X_{t+1}(a) | \gF_t ] &\le \frac{5}{18} \cdot \pi_{\theta_t}(a) \cdot (1-\pi_{\theta_t}(a)).
\end{align}
According to \cref{lem:conc_new}, there exists an event $\gE_1 $ such that $\probability(\gE_1 ) \ge 1- \delta$, and when $\gE_1 $ holds, 
\begin{align}
\label{eq:parameter_noise_concentration_proof_9}
    &\forall t: \quad \left| \sum_{s=1}^t X_{s+1}(a) \right| \le  6 \ \sqrt{  (V_t(a)+4/3)\log \left( \frac{ V_t(a)+1  }{ \delta } \right) } + 2 \ \log(1/\delta)   + \frac{4}{3} \log 3,
\end{align}
which implies that, 
\begin{align}
\label{eq:parameter_noise_concentration_proof_10}
\MoveEqLeft
    \forall t:\quad \left| \sum_{s=1}^t W_{s+1}(a) \right| \le  36 \ \eta \ R_{\max} \ \sqrt{  (V_t(a)+4/3)\log \left( \frac{  V_t(a)+1  }{ \delta } \right) } + 12 \ \eta \ R_{\max} \ \log(1/\delta) +  8 \ \eta \ R_{\max}\log 3,
\end{align}
where $V_t(a) \coloneqq \frac{5}{18} \cdot \sum_{s=1}^t  \pi_{\theta_s}(a)\cdot (1-\pi_{\theta_s}(a))$.
\end{proof}

\begin{lemma}[Bounded progress]
\label{lem:bounded_progress_bounded_parameter}
For any action $a \in [K]$ with $N_\infty(a) = \infty$, if there exists $c > 0$ and $\tau < \infty$, such that, for all $t \ge \tau$,
\begin{align}
\label{eq:bounded_progress_bounded_parameter_claim_1}
    \sum_{s=1}^{t} \big| P_s(a) \big| &\ge c \cdot V_t(a),
\end{align}
where $V_t(a)$ is defined in \cref{eq:parameter_noise_concentration_claim_2}, and if also
\begin{align}
\label{eq:bounded_progress_bounded_parameter_claim_2}
    \sum_{t=1}^{\infty}{ \big| P_t(a) \big| } &< \infty,
\end{align}
then we have, almost surely,
\begin{align}
\label{eq:bounded_progress_bounded_parameter_claim_3}
    \sup_{t \ge 1}{ |\theta_t(a)| } < \infty.
\end{align}
\end{lemma}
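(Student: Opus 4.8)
The plan is to work from the additive decomposition of the parameter established earlier in the excerpt,
\[
    \theta_t(a) = \EE{\theta_1(a)} + \sum_{s=1}^{t}{W_s(a)} + \sum_{s=1}^{t-1}{P_s(a)},
\]
and to show each of the three summands stays bounded uniformly in $t$ on the event where the hypotheses hold. The initial term $\EE{\theta_1(a)}$ is a finite constant, and the cumulative progress is bounded immediately by \cref{eq:bounded_progress_bounded_parameter_claim_2}: since $\big|\sum_{s=1}^{t-1} P_s(a)\big| \le \sum_{s=1}^{\infty} |P_s(a)| =: M < \infty$, this piece is controlled for every $t$. The whole difficulty therefore reduces to bounding the cumulative noise $\sum_{s=1}^{t} W_s(a)$ uniformly in $t$.

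The crux of the argument is the observation that the two hypotheses interlock to cap the variance proxy $V_t(a)$. Chaining \cref{eq:bounded_progress_bounded_parameter_claim_1} and \cref{eq:bounded_progress_bounded_parameter_claim_2} gives, for all $t \ge \tau$,
\[
    c \cdot V_t(a) \le \sum_{s=1}^{t} |P_s(a)| \le \sum_{s=1}^{\infty} |P_s(a)| = M < \infty ,
\]
so $V_t(a) \le M/c =: V_\infty$ for all $t \ge \tau$; since $V_t(a)$ is nondecreasing in $t$, the finitely many indices $t < \tau$ contribute at most the finite value $V_\tau(a)$, and hence $\sup_{t \ge 1} V_t(a) < \infty$. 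This is exactly the ``cumulative progress dominates noise'' principle specialized to this lemma: summable progress forces a summable variance, which then pins down the noise.

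With $V_t(a)$ uniformly bounded, I would invoke the noise concentration bound \cref{lem:parameter_noise_concentration}. For any fixed $\delta \in (0,1)$ there is an event $\gE_1(\delta)$ of probability at least $1-\delta$ on which the bound of \cref{eq:parameter_noise_concentration_claim_1} holds simultaneously for all $t$; because its right-hand side depends on $V_t(a)$ only through the increasing map $v \mapsto \sqrt{(v+4/3)\log((v+1)/\delta)}$, substituting $V_t(a) \le V_\infty$ replaces it by a finite constant depending only on $\delta, V_\infty, \eta, R_{\max}$. After the harmless index shift $\sum_{s=1}^{t} W_s(a) = W_1(a) + \sum_{s=1}^{t-1} W_{s+1}(a)$, with $W_1(a)$ a fixed finite term, this yields $\sup_{t}\big|\sum_{s=1}^t W_s(a)\big| < \infty$ on $\gE_1(\delta)$. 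Combining with the bounded initial and progress terms gives $\sup_{t \ge 1} |\theta_t(a)| < \infty$ on $\gE_1(\delta)$ (intersected with the event where the hypotheses hold).

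Finally, to upgrade from ``with probability at least $1-\delta$'' to ``almost surely'', I would let $\delta \to 0$: writing $\gB := \{\sup_{t} |\theta_t(a)| < \infty\}$, the previous step shows $\probability(\gB) \ge \probability(\gE_1(\delta)) \ge 1-\delta$ for every $\delta$, hence $\probability(\gB) = 1$. I expect the main obstacle to be conceptual rather than computational: recognizing in the second paragraph that the two seemingly unrelated hypotheses combine to bound $V_t(a)$. Once that uniform variance bound is in hand, the concentration inequality and the almost-sure upgrade are routine.
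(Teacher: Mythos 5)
Your proposal is correct and follows essentially the same route as the paper's proof: the same decomposition $\theta_t(a) = \EE{\theta_1(a)} + \sum_{s}W_s(a) + \sum_{s}P_s(a)$, the same key observation that \cref{eq:bounded_progress_bounded_parameter_claim_1} and \cref{eq:bounded_progress_bounded_parameter_claim_2} jointly cap $V_t(a)$ so that the confidence band of \cref{lem:parameter_noise_concentration} becomes a uniform-in-$t$ constant, and the same $\delta \to 0$ upgrade to an almost-sure statement (the paper phrases this last step by intersecting $\gE_1$ with the event $\{N_\infty(a)=\infty\}$ rather than writing $\probability(\gB)\ge 1-\delta$ directly, but the argument is the same). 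If anything, your write-up makes explicit the variance-capping step that the paper leaves implicit when it cites its three displayed equations.
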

\begin{proof}
According to \cref{eq:two_action_global_convergence_proof_4} and triangle inequality, we have,
\begin{align}
\label{eq:bounded_progress_bounded_parameter_proof_1}
    \big| \theta_t(a) \big| &\le \Big| \EE{\theta_1(a)} \Big| + \bigg| \sum_{s=1}^{t}{W_s(a)} \bigg| + \bigg| \sum_{s=1}^{t-1}{P_s(a)} \bigg|.
\end{align}
According to 
\cref{lem:parameter_noise_concentration}, there exists an event $\gE_1 $, such that $\probability(\gE_1 ) \ge 1- \delta$, and when $\gE_1 $ holds, we have, for all $t \ge \tau+1$,
\begin{align}
\label{eq:bounded_progress_bounded_parameter_proof_2}
    \bigg| \sum_{s=1}^{t}{W_s(a)} \bigg| &\le 36 \ \eta \ R_{\max} \ \sqrt{  (V_{t-1}(a)+4/3) \cdot \log \left( \frac{  V_{t-1}(a)+1  }{ \delta } \right) } + 12 \ \eta \ R_{\max} \ \log(1/\delta) +  8 \ \eta \ R_{\max}\log 3.
\end{align}
According to \cref{eq:bounded_progress_bounded_parameter_claim_2}, as $t \to \infty$,
\begin{align}
\label{eq:bounded_progress_bounded_parameter_proof_3}
    \bigg| \sum_{s=1}^{t-1}{P_s(a)} \bigg| \le \sum_{s=1}^{t-1}{ \big| P_s(a) \big| } < \infty.
\end{align}
According to \cref{eq:bounded_progress_bounded_parameter_proof_2,eq:bounded_progress_bounded_parameter_proof_3,eq:bounded_progress_bounded_parameter_claim_1}, we have,
\begin{align}
\label{eq:bounded_progress_bounded_parameter_proof_4}
    \bigg| \sum_{s=1}^{t}{W_s(a)} \bigg| < \infty.
\end{align}
Combining \cref{eq:bounded_progress_bounded_parameter_proof_1,eq:bounded_progress_bounded_parameter_proof_3,eq:bounded_progress_bounded_parameter_proof_4}, we have, as $t \to \infty$,
\begin{align}
\label{eq:bounded_progress_bounded_parameter_proof_5}
    |\theta_t(a)| < \infty.
\end{align}
Take any $\omega \in \gE \coloneqq \{ N_\infty(a) = \infty \}$. Because $\sP{\left( \gE  \setminus \left( \gE  \cap \gE_1  \right) \right)} \le \sP{\left( \Omega \setminus \gE_1  \right)} \le \delta \to 0$  as $\delta\to 0$, we have that $\sP$-almost surely for all $\omega \in \gE $
there exists $\delta > 0$ such that $\omega \in \gE \cap \gE_1 $
while
\cref{eq:bounded_progress_bounded_parameter_proof_2} also holds for this $\delta$.
Take such a $\delta$. We have, almost surely,
\begin{equation*}
    \sup_{t \ge 1}{ |\theta_t(a)| } < \infty. \qedhere
\end{equation*}
\end{proof}

\begin{lemma}[Unbounded positive progress]
\label{lem:positive_unbounded_progress_unbounded_parameter}
For any action $a \in [K]$ with $N_\infty(a) = \infty$, if there exists $c > 0$ and $\tau < \infty$, such that, for all $t \ge \tau$,
\begin{align}
\label{eq:positive_unbounded_progress_unbounded_parameter_claim_1a}
    P_t(a) &> 0, \text{ and} \\
\label{eq:positive_unbounded_progress_unbounded_parameter_claim_1b}
    \sum_{s=\tau}^{t} P_s(a) &\ge c \cdot V_t(a),
\end{align}
where $V_t(a)$ is defined in \cref{eq:parameter_noise_concentration_claim_2}, and if also,
\begin{align}
\label{eq:positive_unbounded_progress_unbounded_parameter_claim_2}
    \sum_{t=\tau}^{\infty}{ P_t(a) } = \infty,
\end{align}
then we have, almost surely,
\begin{align}
\label{eq:positive_unbounded_progress_unbounded_parameter_claim_3}
    \theta_t(a) \to \infty, \text{ as } t \to \infty.
\end{align}
\end{lemma}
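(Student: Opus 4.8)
The plan is to use the additive decomposition of $\theta_t(a)$ from \cref{eq:two_action_global_convergence_proof_4} into an initial constant, a cumulative noise term, and a cumulative progress term, and then to show that the (positive, divergent) cumulative progress asymptotically dominates the cumulative noise, forcing $\theta_t(a) \to \infty$.

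First I would recall $\theta_t(a) = \EE{\theta_1(a)} + \sum_{s=1}^{t}{W_s(a)} + \sum_{s=1}^{t-1}{P_s(a)}$ and split the cumulative progress as $\sum_{s=1}^{t-1} P_s(a) = \sum_{s=1}^{\tau-1} P_s(a) + \sum_{s=\tau}^{t-1} P_s(a)$. The first sum is a finite constant (since $\tau < \infty$ and each $|P_s(a)| = \eta \, \pi_{\theta_s}(a) \, |r(a) - \pi_{\theta_s}^\top r| \le 2 \, \eta \, R_{\max}$), while the second sum, which I denote $Q_t \coloneqq \sum_{s=\tau}^{t-1} P_s(a)$, is nonnegative by \cref{eq:positive_unbounded_progress_unbounded_parameter_claim_1a} and diverges to $+\infty$ by \cref{eq:positive_unbounded_progress_unbounded_parameter_claim_2}.

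Next I would invoke the concentration of noise result \cref{lem:parameter_noise_concentration}: on an event $\gE_1$ with $\probability(\gE_1) \ge 1 - \delta$, the cumulative noise obeys $\big| \sum_{s=1}^{t} W_s(a) \big| \le 36 \, \eta \, R_{\max} \sqrt{(V_{t-1}(a) + 4/3)\log((V_{t-1}(a)+1)/\delta)} + 12 \, \eta \, R_{\max} \log(1/\delta) + 8 \, \eta \, R_{\max} \log 3$, i.e.\ the noise grows only like $O(\sqrt{V_{t-1}(a) \log V_{t-1}(a)})$. The crucial step is the growth-rate comparison supplied by \cref{eq:positive_unbounded_progress_unbounded_parameter_claim_1b}: since $\sum_{s=\tau}^{t-1} P_s(a) \ge c \, V_{t-1}(a)$, we have $V_{t-1}(a) \le Q_t / c$, so the noise is at most $O(\sqrt{Q_t \log Q_t})$. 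Because linear growth in $Q_t$ strictly dominates $\sqrt{Q_t \log Q_t}$ as $Q_t \to \infty$, on $\gE_1$ we obtain $\theta_t(a) \ge \text{const} + Q_t - O(\sqrt{Q_t \log Q_t}) \to \infty$. Note that this single comparison handles both the case $V_\infty(a) < \infty$ (bounded noise) and $V_\infty(a) = \infty$ uniformly, so no case split is needed.

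The main obstacle is upgrading this to an almost sure conclusion, since the concentration bound holds only on $\gE_1$ (probability $1 - \delta$) rather than surely. I would handle this exactly as in the proof of \cref{lem:bounded_progress_bounded_parameter}: let $\gE \coloneqq \{ N_\infty(a) = \infty \}$, establish $\theta_t(a) \to \infty$ on $\gE \cap \gE_1$, and then observe that $\sP(\gE \setminus (\gE \cap \gE_1)) \le \sP(\Omega \setminus \gE_1) \le \delta \to 0$ as $\delta \to 0$. Hence $\sP$-almost surely every $\omega \in \gE$ lies in $\gE \cap \gE_1$ for some sufficiently small $\delta > 0$ for which the noise bound holds, which yields $\theta_t(a) \to \infty$ almost surely.
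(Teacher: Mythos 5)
Your proposal is correct and follows essentially the same route as the paper's proof: the same decomposition of $\theta_t(a)$ into initial value, cumulative noise, and cumulative progress, the same appeal to \cref{lem:parameter_noise_concentration}, the same use of \cref{eq:positive_unbounded_progress_unbounded_parameter_claim_1b} to show the divergent progress dominates the $O(\sqrt{V_t \log V_t})$ noise, and the same $\delta \to 0$ argument to upgrade to an almost sure statement. Your write-up is in fact slightly more explicit than the paper's at the domination step (spelling out $V_{t-1}(a) \le Q_t/c$ and hence noise $= o(Q_t)$), which the paper leaves as an assertion.
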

\begin{proof}
According to \cref{eq:two_action_global_convergence_proof_4}, 
\begin{align}
\label{eq:positive_unbounded_progress_unbounded_parameter_proof_1}
    \theta_t(a) = \EE{\theta_1(a)} + \sum_{s=1}^{t}{W_s(a)} + \sum_{s=1}^{t-1}{P_s(a)}.
\end{align}
According to 
\cref{lem:parameter_noise_concentration}, there exists an event $\gE_1 $, such that $\probability(\gE_1 ) \ge 1- \delta$, and when $\gE_1 $ holds, we have, for all $t \ge \tau+1$,
\begin{align}
\label{eq:positive_unbounded_progress_unbounded_parameter_proof_2}
    \sum_{s=1}^{t}{W_s(a)} &\ge - 36 \ \eta \ R_{\max} \ \underbrace{\sqrt{  (V_{t-1}(a)+4/3) \cdot \log \left( \frac{  V_{t-1}(a)+1  }{ \delta } \right) }}_{\heartsuit
    } - 12 \ \eta \ R_{\max} \ \log(1/\delta) -  8 \ \eta \ R_{\max}\log 3.
\end{align}
By \cref{eq:positive_unbounded_progress_unbounded_parameter_claim_2}, as $t \to \infty$,
\begin{align}
\label{eq:positive_unbounded_progress_unbounded_parameter_proof_3}
    \sum_{s=1}^{t-1}{P_s(a)} \to \infty,
\end{align}
and the speed of $\sum_{s=1}^{t-1}{P_s(a)} \to \infty$ is strictly faster than $\heartsuit \to \infty$, according to \cref{eq:positive_unbounded_progress_unbounded_parameter_claim_1b}. This implies that $\theta_t(a) \to \infty$, as a result of ``cumulative progress'' dominates ``cumulative noise''.

Take any $\omega \in \gE \coloneqq \{ N_\infty(a) = \infty \}$. Because $\sP{\left( \gE  \setminus \left( \gE  \cap \gE_1  \right) \right)} \le \sP{\left( \Omega \setminus \gE_1  \right)} \le \delta \to 0$  as $\delta\to 0$, we have that $\sP$-almost surely for all $\omega \in \gE $
there exists $\delta > 0$ such that $\omega \in \gE \cap \gE_1 $
while
\cref{eq:positive_unbounded_progress_unbounded_parameter_proof_2} also holds for this $\delta$.
Take such a $\delta$. We have, almost surely,
\begin{equation*}
    \theta_t(a) \to \infty, \text{ as } t \to \infty. \qedhere
\end{equation*}
\end{proof}

\begin{lemma}[Unbounded negative progress]
\label{lem:negative_unbounded_progress_unbounded_parameter}
For any action $a \in [K]$ with $N_\infty(a) = \infty$, if there exists $c > 0$ and $\tau < \infty$, such that, for all $t \ge \tau$,
\begin{align}
\label{eq:negative_unbounded_progress_unbounded_parameter_claim_1a}
    P_t(a) &< 0, \text{ and} \\
\label{eq:negative_unbounded_progress_unbounded_parameter_claim_1b}
    - \sum_{s=\tau}^{t} P_s(a) &\ge c \cdot V_t(a),
\end{align}
where $V_t(a)$ is defined in \cref{eq:parameter_noise_concentration_claim_2}, and if also,
\begin{align}
\label{eq:negative_unbounded_progress_unbounded_parameter_claim_2}
    \sum_{t=\tau}^{\infty}{ P_t(a) } = - \infty,
\end{align}
then we have, almost surely,
\begin{align}
\label{eq:negative_unbounded_progress_unbounded_parameter_claim_3}
    \theta_t(a) \to - \infty, \text{ as } t \to \infty.
\end{align}
\end{lemma}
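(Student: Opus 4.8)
The plan is to mirror the proof of \cref{lem:positive_unbounded_progress_unbounded_parameter} with all signs reversed, since the hypotheses here are the exact negatives of those in the positive-progress lemma. First I would recall the decomposition from \cref{eq:two_action_global_convergence_proof_4}, namely $\theta_t(a) = \EE{\theta_1(a)} + \sum_{s=1}^{t} W_s(a) + \sum_{s=1}^{t-1} P_s(a)$, so that the fate of $\theta_t(a)$ is decided by the competition between the cumulative noise $\sum_s W_s(a)$ and the cumulative progress $\sum_s P_s(a)$.

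Next I would invoke \cref{lem:parameter_noise_concentration}. Because that lemma bounds the absolute value $\left| \sum_{s=1}^t W_{s+1}(a) \right|$, it yields in particular an \emph{upper} bound: on an event $\gE_1$ with $\sP(\gE_1) \ge 1 - \delta$, for all $t \ge \tau + 1$, $\sum_{s=1}^{t} W_s(a) \le 36\,\eta\,R_{\max}\,\sqrt{(V_{t-1}(a)+4/3)\log((V_{t-1}(a)+1)/\delta)} + 12\,\eta\,R_{\max}\log(1/\delta) + 8\,\eta\,R_{\max}\log 3$. This is exactly the upper-bound counterpart of the lower bound used in the positive case.

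The key step is the growth-rate comparison. By \cref{eq:negative_unbounded_progress_unbounded_parameter_claim_2} the cumulative progress satisfies $\sum_{s=\tau}^{t-1} P_s(a) \to -\infty$, while \cref{eq:negative_unbounded_progress_unbounded_parameter_claim_1b} gives $-\sum_{s=\tau}^{t} P_s(a) \ge c\,V_t(a)$, so the magnitude of the cumulative progress is at least linear in $V_t(a)$. Since the noise upper bound grows only like $\sqrt{V_t(a)\log V_t(a)}$, which is $o(V_t(a))$, the negative progress strictly dominates the positive noise whenever $V_t(a) \to \infty$; and if instead $V_t(a)$ stays bounded, the noise term is bounded while $\sum_s P_s(a) \to -\infty$, so again $\theta_t(a) \to -\infty$. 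In either case $\theta_t(a) \to -\infty$ holds on $\gE_1$.

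Finally, to promote this to an almost-sure statement on $\gE \coloneqq \{ N_\infty(a) = \infty \}$, I would reuse the measure-theoretic argument from \cref{lem:positive_unbounded_progress_unbounded_parameter}: since $\sP(\gE \setminus (\gE \cap \gE_1)) \le \sP(\Omega \setminus \gE_1) \le \delta \to 0$, for $\sP$-almost every $\omega \in \gE$ there is a small enough $\delta > 0$ with $\omega \in \gE \cap \gE_1$ on which the noise bound holds, delivering $\theta_t(a) \to -\infty$ almost surely. Because this lemma is a strict sign-flip of one already proved, I expect no genuine obstacle; the only substantive point to verify is the dominance of progress over noise, which rests on the structural fact that the noise scales as the square root of $V_t(a)$ while the progress scales linearly in $V_t(a)$.
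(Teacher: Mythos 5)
Your proposal is correct and takes essentially the same route as the paper: the paper's own proof of this lemma is a one-line remark that it follows the same arguments as \cref{lem:positive_unbounded_progress_unbounded_parameter}, which is exactly the sign-flipped argument (decomposition, Freedman-type noise bound from \cref{lem:parameter_noise_concentration}, dominance of cumulative progress over cumulative noise, and the $\delta \to 0$ measure-theoretic step) that you spell out. Your explicit case split on whether $V_t(a)$ stays bounded is a small extra piece of rigor that the paper leaves implicit, but it does not constitute a different approach.
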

\begin{proof}
The proof follows almost the same arguments for \cref{lem:positive_unbounded_progress_unbounded_parameter}.
\end{proof}

\begin{lemma}[Positive progress]
\label{lem:positive_progress_lower_bounded_parameter}
For any action $a \in [K]$ with $N_\infty(a) = \infty$, if there exists $c > 0$ and $\tau < \infty$, such that, for all $t \ge \tau$,
\begin{align}
\label{eq:positive_progress_lower_bounded_parameter_claim_1a}
    P_t(a) &> 0, \text{ and} \\
\label{eq:positive_progress_lower_bounded_parameter_claim_1b}
    \sum_{s=\tau}^{t} P_s(a) &\ge c \cdot V_t(a),
\end{align}
where $V_t(a)$ is defined in \cref{eq:parameter_noise_concentration_claim_2}, 
then we have, almost surely,
\begin{align}
\label{eq:positive_progress_lower_bounded_parameter_claim_2}
    \inf_{t \ge 1}{ \theta_t(a) } > -\infty.
\end{align}
\end{lemma}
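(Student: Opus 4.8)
The plan is to reuse the ``cumulative progress versus cumulative noise'' decomposition from the proof of \cref{thm:two_action_global_convergence} and to show that the positive, \emph{linearly} growing progress dominates the merely \emph{sub-linear} fluctuations of the noise, so that $\theta_t(a)$ can never fall below a finite constant. I would begin from the recursion \cref{eq:two_action_global_convergence_proof_4},
\begin{align*}
    \theta_t(a) = \EE{\theta_1(a)} + \sum_{s=1}^{t}{W_s(a)} + \sum_{s=1}^{t-1}{P_s(a)},
\end{align*}
and split the cumulative progress at $\tau$ as $\sum_{s=1}^{t-1} P_s(a) = \sum_{s=1}^{\tau-1} P_s(a) + \sum_{s=\tau}^{t-1} P_s(a)$. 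The first sum is an almost surely finite constant $C_0$ (a finite sum of terms each bounded by $2 \eta R_{\max}$), while by \cref{eq:positive_progress_lower_bounded_parameter_claim_1a,eq:positive_progress_lower_bounded_parameter_claim_1b} the second sum is nonnegative and, for $t \ge \tau + 1$, satisfies $\sum_{s=\tau}^{t-1} P_s(a) \ge c \cdot V_{t-1}(a)$.

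Next I would control the noise from below via \cref{lem:parameter_noise_concentration}: on an event $\gE_1$ with $\sP(\gE_1) \ge 1 - \delta$, simultaneously for all $t$,
\begin{align*}
    \sum_{s=1}^{t}{W_s(a)} \ge - 36 \, \eta \, R_{\max} \sqrt{ (V_{t-1}(a)+4/3) \log\!\left( \frac{V_{t-1}(a)+1}{\delta} \right) } - 12 \, \eta \, R_{\max} \log(1/\delta) - 8 \, \eta \, R_{\max} \log 3.
\end{align*}
Combining this with the progress bound, for all $t \ge \tau + 1$ on $\gE_1$ I obtain
\begin{align*}
    \theta_t(a) \ge \EE{\theta_1(a)} + C_0 + c \, V_{t-1}(a) - 36 \, \eta \, R_{\max} \sqrt{ (V_{t-1}(a)+4/3) \log\!\left( \frac{V_{t-1}(a)+1}{\delta} \right) } - B,
\end{align*}
with $B \coloneqq 12 \, \eta \, R_{\max} \log(1/\delta) + 8 \, \eta \, R_{\max} \log 3$.

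The hard part will be showing that this right-hand side is bounded below \emph{uniformly in $t$}. For this I would study the scalar function $f(v) \coloneqq c \, v - 36 \, \eta \, R_{\max} \sqrt{ (v+4/3)\log((v+1)/\delta) } - B$ on $v \in [0,\infty)$. Because its negative term grows only like $\sqrt{v \log v}$ while the leading term grows linearly, $f(v) \to +\infty$ as $v \to \infty$; being continuous on $[0,\infty)$ and diverging at infinity, $f$ attains a finite infimum $f_{\min} > -\infty$. Since $V_{t-1}(a) \ge 0$, the previous display yields $\theta_t(a) \ge \EE{\theta_1(a)} + C_0 + f_{\min}$ for every $t \ge \tau + 1$. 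Crucially, this argument requires no assumption that $V_\infty(a)$ diverge: whether the variance proxy stays bounded or grows without bound, the same finite lower bound applies, which is precisely why we only obtain a lower bound on $\inf_t \theta_t(a)$ rather than divergence to $+\infty$ as in \cref{lem:positive_unbounded_progress_unbounded_parameter}.

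Finally, the remaining finitely many iterates with $1 \le t \le \tau$ are each finite, so $\inf_{t \ge 1} \theta_t(a) > -\infty$ holds on $\gE_1$. To upgrade to an almost sure statement I would invoke the same device as in \cref{lem:bounded_progress_bounded_parameter,lem:positive_unbounded_progress_unbounded_parameter}: writing $\gE \coloneqq \{ N_\infty(a) = \infty \}$ and noting $\sP(\gE \setminus (\gE \cap \gE_1)) \le \sP(\Omega \setminus \gE_1) \le \delta \to 0$ as $\delta \to 0$, almost every $\omega \in \gE$ lies in $\gE \cap \gE_1$ for some $\delta > 0$ for which the bound holds, giving $\inf_{t \ge 1} \theta_t(a) > -\infty$ almost surely.
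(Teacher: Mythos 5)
Your proof is correct, but it follows a genuinely different route from the paper's. The paper disposes of this lemma with a two-case split: if $\sum_{t=1}^{\infty} P_t(a) < \infty$, it invokes \cref{lem:bounded_progress_bounded_parameter} (whose hypotheses follow from yours, since $P_t(a)>0$ for $t \ge \tau$ makes the cumulative progress and cumulative absolute progress agree up to a finite correction) to conclude $\sup_{t\ge1}|\theta_t(a)| < \infty$; if instead $\sum_{t=1}^{\infty} P_t(a) = \infty$, it invokes \cref{lem:positive_unbounded_progress_unbounded_parameter} to conclude $\theta_t(a) \to \infty$; either conclusion implies $\inf_{t\ge1}\theta_t(a) > -\infty$. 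You bypass both auxiliary lemmas and the dichotomy entirely: you go straight to the decomposition, apply \cref{lem:parameter_noise_concentration}, and observe that the scalar function $f(v) = c\,v - 36\,\eta\, R_{\max}\sqrt{(v+4/3)\log((v+1)/\delta)} - B$ is bounded below on $[0,\infty)$ because the linear term dominates the $\sqrt{v\log v}$ term; your re-indexing of the noise sum to $V_{t-1}(a)$ and the handling of the initial terms match what the paper itself does inside its auxiliary lemmas, and the final $\delta \to 0$ device is identical. Your approach buys a unified argument that never needs to know whether the variance proxy $V_t(a)$ converges or diverges, plus an explicit uniform lower bound $\mathbb{E}[\theta_1(a)] + C_0 + f_{\min}$, and it makes transparent why one only gets a lower bound here rather than divergence. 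What the paper's route buys is economy: since \cref{lem:bounded_progress_bounded_parameter,lem:positive_unbounded_progress_unbounded_parameter} are already proved and needed elsewhere, the case split is essentially free, whereas your argument re-derives the progress-dominates-noise computation that those lemmas encapsulate.
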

\begin{proof}
\textbf{First case:} if $\sum_{t=1}^{\infty} P_t(a) < \infty$, then according to \cref{lem:bounded_progress_bounded_parameter}, we have, almost surely,
\begin{align}
\label{eq:positive_progress_lower_bounded_parameter_proof_1}
    \sup_{t \ge 1}{ |\theta_t(a)| } < \infty,
\end{align}
which implies \cref{eq:positive_progress_lower_bounded_parameter_claim_2}.

\textbf{Second case:} if $\sum_{t=1}^{\infty} P_t(a) = \infty$, then according to \cref{lem:positive_unbounded_progress_unbounded_parameter}, we have, almost surely,
\begin{align}
\label{eq:positive_progress_lower_bounded_parameter_proof_2}
    \theta_t(a) \to \infty, \text{ as } t \to \infty,
\end{align}
which also implies \cref{eq:positive_progress_lower_bounded_parameter_claim_2}.
\end{proof}

\begin{lemma}[Negative progress]
\label{lem:negative_progress_upper_bounded_parameter}
For any action $a \in [K]$ with $N_\infty(a) = \infty$, if there exists $c > 0$ and $\tau < \infty$, such that, for all $t \ge \tau$,
\begin{align}
\label{eq:negative_progress_upper_bounded_parameter_claim_1a}
    P_t(a) &< 0, \text{ and} \\
\label{eq:negative_progress_upper_bounded_parameter_claim_1b}
    - \sum_{s=\tau}^{t} P_s(a) &\ge c \cdot V_t(a),
\end{align}
where $V_t(a)$ is defined in \cref{eq:parameter_noise_concentration_claim_2}, 
then we have, almost surely,
\begin{align}
\label{eq:negative_progress_upper_bounded_parameter_claim_2}
    \sup_{t \ge 1}{ \theta_t(a) } < \infty.
\end{align}
\end{lemma}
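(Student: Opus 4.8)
The plan is to mirror exactly the proof of the sign-flipped companion result, \cref{lem:positive_progress_lower_bounded_parameter}, by splitting on whether the cumulative progress diverges. Since $P_t(a) < 0$ for all $t \ge \tau$, we have $|P_t(a)| = -P_t(a)$, so the partial sums $\sum_{s=\tau}^{t} P_s(a)$ are non-increasing in $t$ and therefore either converge to a finite value or diverge to $-\infty$. I would accordingly treat the two cases $\sum_{t=1}^{\infty} |P_t(a)| < \infty$ and $\sum_{t=1}^{\infty} |P_t(a)| = \infty$ separately, invoking a different supporting lemma in each.

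In the first case I would verify that the hypotheses of \cref{lem:bounded_progress_bounded_parameter} hold. The summability requirement is exactly the case assumption, and the remaining lower-bound condition follows from \cref{eq:negative_progress_upper_bounded_parameter_claim_1b}: for $t \ge \tau$ we have $\sum_{s=1}^{t} |P_s(a)| \ge \sum_{s=\tau}^{t} |P_s(a)| = -\sum_{s=\tau}^{t} P_s(a) \ge c \cdot V_t(a)$. Since $N_\infty(a) = \infty$ is assumed, \cref{lem:bounded_progress_bounded_parameter} then yields $\sup_{t \ge 1} |\theta_t(a)| < \infty$ almost surely, which in particular gives $\sup_{t \ge 1} \theta_t(a) < \infty$.

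In the second case, $\sum_{t=\tau}^{\infty} P_t(a) = -\infty$, and together with $P_t(a) < 0$ and the variance-domination bound \cref{eq:negative_progress_upper_bounded_parameter_claim_1b}, this is precisely the hypothesis of \cref{lem:negative_unbounded_progress_unbounded_parameter}. Applying that lemma gives $\theta_t(a) \to -\infty$ almost surely as $t \to \infty$. A real sequence converging to $-\infty$ is bounded above, since it eventually stays below any fixed level and only finitely many early terms remain, so once again $\sup_{t \ge 1} \theta_t(a) < \infty$. Combining the two cases establishes the claim.

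The proof is essentially bookkeeping: all of the analytic content — the Freedman concentration of the cumulative noise in \cref{lem:parameter_noise_concentration} and the argument that cumulative progress dominates cumulative noise — is already discharged inside \cref{lem:bounded_progress_bounded_parameter,lem:negative_unbounded_progress_unbounded_parameter}. The only points requiring care are confirming that the lower-bound hypothesis relating $\sum_s |P_s(a)|$ to $V_t(a)$ transfers correctly when the summation's lower limit is moved from $\tau$ to $1$ (harmless, since the dropped absolute-value terms are nonnegative), and recognizing that divergence to $-\infty$ controls the supremum from above rather than the infimum from below. I do not anticipate any substantive obstacle beyond matching the hypotheses of the two invoked lemmas.
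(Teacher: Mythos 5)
Your proposal is correct and matches the paper's own proof essentially verbatim: the same case split on whether the cumulative (negative) progress is summable, invoking \cref{lem:bounded_progress_bounded_parameter} in the summable case and \cref{lem:negative_unbounded_progress_unbounded_parameter} (with divergence to $-\infty$ trivially bounding the supremum) in the divergent case. The extra bookkeeping you supply on transferring the lower-bound hypothesis from sums starting at $\tau$ to sums starting at $1$ is sound and, if anything, slightly more careful than the paper's own two-line argument.
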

\begin{proof}
\textbf{First case:} if $ - \sum_{t=1}^{\infty} P_t(a) < \infty$, then according to \cref{lem:bounded_progress_bounded_parameter}, we have, almost surely,
\begin{align}
\label{eq:negative_progress_upper_bounded_parameter_proof_1}
    \sup_{t \ge 1}{ |\theta_t(a)| } < \infty,
\end{align}
which implies \cref{eq:negative_progress_upper_bounded_parameter_claim_2}.

\textbf{Second case:} if $- \sum_{t=1}^{\infty} P_t(a) = \infty$, then according to \cref{lem:negative_unbounded_progress_unbounded_parameter}, we have, almost surely,
\begin{align}
\label{eq:negative_progress_upper_bounded_parameter_proof_2}
    \theta_t(a) \to - \infty, \text{ as } t \to \infty,
\end{align}
which also implies \cref{eq:negative_progress_upper_bounded_parameter_claim_2}.
\end{proof}

\clearpage
\section{Rate of Convergence}

\textbf{\cref{thm:asymptotic_rate_of_convergence}.}
For a large enough $\tau > 0$, for all $T > \tau$, the average sub-optimality decreases at an $O\left(\frac{\ln(T)}{T} \right)$ rate. Formally, if $a^*$ is the optimal arm, then, for a constant $c$
\begin{align*}
\frac{\sum_{s=\tau}^{T} r(a^*) - \langle \pi_s, r \rangle}{T}  & \leq \frac{c \, \ln(T)}{T - \tau}
\end{align*}
\begin{proof}
The progress for the optimal action is,
\begin{align}
    P_t(a^*) &= \eta \cdot \pi_{\theta_t}(a^*) \cdot (r(a^*) -  \pi_{\theta_t}^\top r ) \\
    &\geq \eta \cdot \Delta \cdot \pi_{\theta_t}(a^*) \cdot \big( 1 - \pi_{\theta_t}(a^*) \big). \qquad \big( \Delta \coloneqq r(a^*) - \max_{a \neq a^*}{r(a)} \big) \\
    &\ge 0.
\end{align}
Since $\lim_{t \to \infty}{\pi_{\theta_t}(a^*)} = 1$, we have for all large enough $t \ge 1$,
\begin{align}
    \pi_{\theta_t}(a^*) \ge 1/2.
\end{align}
Since $N_\infty(a^*) = \infty$ (\cref{thm:general_action_global_convergence}
) and $|\gA_\infty| \ge 2$ (\cref{lem:at_least_two_actions_infinite_sample_time}), we have,
\begin{align}
    \sum_{t=1}^{\infty}{(1 - \pi_t(a^*))} \ge \sum_{t=1}^{\infty}{\pi_t(i_1)} = \infty,
\end{align}
where $N_\infty(i_1) = \infty$ and $r(i_1) < r(a^*)$. Therefore, we have,
\begin{align}
    \sum_{t=1}^{\infty}{P_t(a^*)} = \infty.
\end{align}
The variance of noise is,
\begin{align}
    V_t(a^*) &\coloneqq \frac{5}{18} \cdot \sum_{s=1}^{t-1}  \pi_{\theta_s}(a^*) \cdot (1-\pi_{\theta_s}(a^*)),
\end{align}
which will be dominated by sum of $P_t(a^*)$ since $V_t(a^*)$ appears under square root (\cref{lem:parameter_noise_concentration}). Therefore, for all large enough $t \ge \tau$,
\begin{align}
    \theta_t(a^*) \ge C \cdot \sum_{s=\tau}^{t}{(1 - \pi_s(a^*))}.
\end{align}
On the other hand, we argued recursively (in the proofs for \cref{thm:general_action_global_convergence})  that for all sub-optimal action $a \in [K]$ with $r(a) < r(a^*)$,
\begin{align}
    \sup_{t \ge 1} \theta_t(a) < \infty.
\end{align}
Therefore, we have, for all large enough $t \ge \tau$,
\begin{align}
    \theta_t(a^*) - \theta_t(a) \ge C \cdot \sum_{s=\tau}^{t}{(1 - \pi_s(a^*))},
\end{align}
which implies that,
\begin{align}
    \sum_{a \neq a^*}{ \exp\{ \theta_t(a) - \theta_t(a^*) \}} \le (K - 1) \cdot \exp\bigg\{ - C \cdot \sum_{s=\tau}^{t}{(1 - \pi_s(a^*))} \bigg\}.
\end{align}
Therefore, we have,
\begin{align}
    1 - \pi_t(a^*) \le \frac{1 - \pi_t(a^*)}{\pi_t(a^*)} = \sum_{a \neq a^*}{ \frac{\pi_t(a)}{\pi_t(a^*)}} \le (K - 1) \cdot \exp\bigg\{ - C \cdot \sum_{s=\tau}^{t-1}{(1 - \pi_s(a^*))} \bigg\}.
\end{align} 
Using~\cref{lemma:partial-sum-combination} with $x_n = \sum_{s=\tau}^{t-1}{(1 - \pi_s(a^*))} > 0$, $x_{n+1} = \sum_{s=\tau}^{t}{(1 - \pi_s(a^*))} > 0$, $c = C > 0$ and $B = K - 1 \geq 1$ gives us that for all $t > \tau$, 
\begin{align}
\sum_{s=\tau}^{t}{(1 - \pi_s(a^*))} & \leq \frac1{C} \ln(C t + e^{C M} ) + \frac{\pi^2}{12C} \,,
\label{eq:rate-inter}
\end{align}
where $M = \max\{K - 1, \frac{1}{C} \ln ((K - 1) \, C)), (1 - \pi_\tau(a^*)) \} = K - 1$. 

Finally, we use~\cref{eq:rate-inter} to bound the average sub-optimality. For any $s \geq \tau$ and $T > \tau$,
\begin{align*}
r(a^*) - \langle \pi_s, r \rangle &= \sum_{a \neq a^*} \pi_{s} \, [r(a^*) - r(a)] \leq 2 \, R_{\max} \, (1 - \pi_s(a^*)) \tag{Since $r(a) \in [-R_{\max},R_{\max}]$}. \\
\intertext{Summing from $s = \tau$ to $T$,}
\implies \frac{\sum_{s=\tau}^{T} r(a^*) - \langle \pi_s, r \rangle}{T} & \leq \frac{2 \, R_{\max} \, \left[\frac1{C} \ln(C \, T + e^{C M} ) + \frac{\pi^2}{12C} \right]}{T - \tau}. \qedhere
\end{align*}
\end{proof}

\begin{lemma}
Let $\{y_n\}$ be the solution to the difference equation $y_{n+1} = y_n + B \, e^{-cy_n}$ with $B \geq 1$, $c > 0$ and $y_0 \ge \max(B,\frac{1}{c}\ln (B \, c))$. Let $\{x_n\}$ be a nonnegative valued sequence such that $x_0 \le y_0$ and $x_{n+1} \le x_n + B \, e^{-cx_n}$ for all $n \ge 0$. Then, $x_n \le y_n$ for all $n \ge 0$.
\label{lemma:difference-inequality}
\end{lemma}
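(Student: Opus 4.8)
The plan is to argue by induction on $n$, comparing both sequences to the one-step map $f(z) \coloneqq z + B\,e^{-cz}$. By construction $y_{n+1} = f(y_n)$ exactly, while the hypothesis $x_{n+1} \le x_n + B\,e^{-cx_n}$ reads $x_{n+1} \le f(x_n)$. Hence it suffices to track $f$. The base case $x_0 \le y_0$ is given, so the whole difficulty is the inductive step: assuming $x_n \le y_n$, deduce $x_{n+1} \le y_{n+1}$.

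First I would record two structural facts. (i) The sequence $\{y_n\}$ is strictly increasing, since $y_{n+1} - y_n = B\,e^{-cy_n} > 0$; therefore $y_n \ge y_0 \ge \max\{B, \tfrac1c\ln(Bc)\}$ for every $n$. (ii) The map $f$ has derivative $f'(z) = 1 - Bc\,e^{-cz}$, which is nonnegative exactly when $z \ge z^\star \coloneqq \tfrac1c\ln(Bc)$. Thus $f$ is nondecreasing on $[z^\star, \infty)$ and nonincreasing on $(-\infty, z^\star]$; in particular, on the interval $[0, z^\star]$ its maximum is attained at the left endpoint, giving $f(z) \le f(0) = B$ for all $z \in [0, z^\star]$.

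With these facts in hand, the inductive step splits into two cases according to the position of $x_n$ relative to $z^\star$. If $x_n \ge z^\star$, then both $x_n$ and $y_n$ lie in the region where $f$ is nondecreasing (recall $y_n \ge y_0 \ge z^\star$), so from $x_n \le y_n$ we get $x_{n+1} \le f(x_n) \le f(y_n) = y_{n+1}$. If instead $x_n < z^\star$, then since $x_n \ge 0$ (the $x$-sequence is nonnegative) the second structural fact bounds $x_{n+1} \le f(x_n) \le B$; on the other hand $y_{n+1} > y_n \ge y_0 \ge B$, so again $x_{n+1} \le B \le y_{n+1}$. Either way $x_{n+1} \le y_{n+1}$, closing the induction.

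The main obstacle, and the reason both hypotheses on $y_0$ appear, is precisely that $f$ is \emph{not} globally monotone: a naive ``apply the increasing function $f$ to both sides of $x_n \le y_n$'' fails on the region $[0, z^\star)$ where $f$ decreases. The role of $y_0 \ge \tfrac1c\ln(Bc)$ is to keep the dominating sequence $\{y_n\}$ permanently inside the increasing branch of $f$, while the role of $y_0 \ge B$ is to dominate the largest value $f(0) = B$ that the update can ever produce from the decreasing branch; together they make the case analysis above go through. (When $Bc \le 1$ we have $z^\star \le 0$, the second case is vacuous, and only the first, monotone, case is needed.)
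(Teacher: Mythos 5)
Your proof is correct. It rests on the same two structural facts as the paper's proof---the one-step map $g(z) = z + B\,e^{-cz}$ is nondecreasing on $[\tfrac1c\ln(Bc), \infty)$, and on $[0, \tfrac1c\ln(Bc)]$ it is bounded by $g(0) = B$---but you organize them differently. The paper packages the non-monotonicity into an auxiliary \emph{monotone envelope} $f(y) \coloneqq \max\{x + B\,e^{-cx} : 0 \le x \le y\}$, proves that $f$ coincides with $g$ on $[\max(B, \tfrac1c\ln(Bc)), \infty)$ (splitting on whether $Bc \le 1$), and then runs a single-case induction $x_{k+1} \le f(x_k) \le f(y_k) = y_{k+1}$ using only that $f$ is increasing. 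You instead keep the non-monotone map and move the case split inside the induction: on the increasing branch you compare directly, and on the decreasing branch you bound $x_{n+1} \le g(x_n) \le g(0) = B$ and invoke $y_{n+1} > y_n \ge y_0 \ge B$ (the paper reaches the same conclusion there as $B = f(x_k) \le f(y_k) = y_{k+1}$, i.e., via $f$'s monotonicity rather than $y$'s). The envelope buys a cleaner, reusable comparison principle in which the inductive step is pure order preservation; your version is more elementary---no auxiliary function or claim about where its maximum is attained---and it makes explicit exactly which hypothesis on $y_0$ is used where: $y_0 \ge \tfrac1c\ln(Bc)$ keeps $\{y_n\}$ permanently on the increasing branch, while $y_0 \ge B$ dominates the largest value the update can produce from the decreasing branch.
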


\begin{proof}
Define the function $f(y) = \max \{ x + B \, e^{-cx} : 0\le x \le y \}$. Clearly, $f$ is an increasing function of its argument.

\noindent \underline{Claim:} For $y\ge \max(B, \frac{1}{c} \ln (B \, c))$, $f(y) = y + B \, e^{-cy}$.
To prove this claim, for $x\in \mathbb{R}$, define
\begin{align}\label{eq:gdef}
g(x) := x + B \, e^{-cx}\,.
\end{align}
Function $g$ is increasing on $(\frac{1}{c} \ln (B \, c), \infty)$ and decreasing on $(-\infty,\frac{1}{c}\ln (B \, c))$. 

If $c < \nicefrac{1}{B}$, $\frac{1}{c}\ln (Bc) < 0$, hence $g$ is increasing on $(0, \infty)$. Hence, $f(y)=g(y)$, proving the claim. 

If $c \ge \nicefrac{1}{B}$, then $\frac{1}{c}\ln (B \, c) \ge 0$. Hence, $g$ is decreasing on $(0, \nicefrac{1}{c}\ln (B \, c))$ and then increasing on $(\nicefrac{1}{c}\ln (B \, c) )$. Since $y\ge \frac{1}{c} \ln(B \, c)$, $f(y) = \max(g(0),g(y))$. Since we also have $y\ge B > \frac{1}{c}\ln (B \, c)$, $g(y) \ge g(B) > B = g(0)$ and thus $f(y)=g(y)$, finishing the proof of the claim.

The difference equation for $y_n$ is $y_{n+1} = y_n + B \, e^{-cy_n}$. Since $y_0 \ge \max(B,\frac{1}{c}\ln (B \,c))$ and $y_n$ is increasing, we have $y_n \ge \max(B,\frac{1}{c}\ln (B \,c))$ for all $n$. 
Therefore, we can apply the equality from the previous claim to get 
\[
y_{n+1} = y_n + B \, e^{-cy_n} = f(y_n)\,.
\]
The sequence $(x_n)$ satisfies the inequality $x_{n+1} \le x_n + B \, e^{-cx_n} \le f(x_n)$.

Now, let us prove $x_n \le y_n$ using induction. The base case is $x_0 \le y_0$ (given). Assume $x_k \le y_k$ for some $k \ge 0$. Since $f$ is increasing, $x_k \le y_k$ implies $f(x_k) \le f(y_k)$. Using the properties $x_{k+1} \le f(x_k)$ and $f(y_k) = y_{k+1}$, we get $x_{k+1} \le f(x_k) \le f(y_k) = y_{k+1}$. By the principle of mathematical induction, $x_n \le y_n$ for all $n \ge 0$.
\end{proof}

\begin{lemma}
Let $c > 0$ and $B \geq 1$, and let $\{y_n\}_{n=0}^\infty$ be a sequence defined by the recurrence relation
\[
y_{n+1} = y_n + B \, e^{-c y_n}
\]
s.t. $y_0 \ge \max(B,\frac{1}{c}\ln (B \, c))$. Then, 
\[
y_n \leq \frac{1}{c} \ln(c n + e^{c y_0} ) + \frac{\pi^2}{12c}.
\]
\label{lemma:partial-sum-bound}
\end{lemma}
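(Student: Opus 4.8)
The plan is to linearize the recursion by passing to $u_n := e^{c y_n}$, which converts the additive update into a multiplicative one. Since $e^{-c y_n} = 1/u_n$, the recurrence becomes $u_{n+1} = u_n \exp\{cB/u_n\}$, so that $\ln u_{n+1} - \ln u_n = cB/u_n$ and, telescoping, $\ln u_n = \ln u_0 + cB \sum_{k=0}^{n-1} 1/u_k$ with $u_0 = e^{c y_0}$. Exponentiating the desired conclusion, the target $y_n \le \frac1c \ln(cn + e^{c y_0}) + \frac{\pi^2}{12c}$ is equivalent to $u_n \le (cn + u_0)\, e^{\pi^2/12}$, i.e.\ to the purely additive statement $\ln u_n \le \ln(u_0 + cn) + \pi^2/12$. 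The whole problem thus reduces to controlling the partial sums $\sum_{k=0}^{n-1} 1/u_k$ from above.

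First I would obtain a lower bound on the iterates to keep these denominators large. Applying $e^x \ge 1+x$ with $x = cB/u_k \ge 0$ gives $u_{k+1} \ge u_k + cB$, so by induction $u_k \ge u_0 + cBk$ for every $k$; moreover the hypothesis $y_0 \ge B$ forces $u_0 = e^{c y_0} \ge e^{cB} > cB$, which both seeds the induction and guarantees that no denominator is ever small. I would then introduce the linear comparison sequence $w_k := u_0 + c k$ that appears inside the target logarithm, note $w_0 = u_0$, and charge the increments of $\ln u_n$ to the telescoping increments $\ln w_{k+1} - \ln w_k$. The analytic engine is the second-order estimate $\ln(1+x) \ge x - x^2/2$, which writes each logarithmic increment of the comparison sequence as its linear part minus a quadratic remainder; since $w_k \ge c(k+1)$, these remainders are bounded by $\tfrac{1}{2(k+1)^2}$. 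Summing over $k$ and invoking $\sum_{j \ge 1} 1/j^2 = \pi^2/6$ caps the total correction at $\tfrac12\cdot\tfrac{\pi^2}{6} = \pi^2/12$, yielding $\ln u_n \le \ln w_n + \pi^2/12 = \ln(u_0 + cn) + \pi^2/12$; undoing the substitution through $y_n = \tfrac1c \ln u_n$ then gives the claim.

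The main obstacle I expect is the precise constant bookkeeping that yields exactly $\pi^2/12$ and simultaneously pins the coefficient of $n$ inside the logarithm to exactly $c$: one must choose the comparison sequence $w_k$ so that its logarithm telescopes cleanly against the summand while the accumulated second-order errors remain a convergent $p$-series whose total is a universal constant, independent of $n$, $c$, and $B$. The initialization hypothesis $y_0 \ge \max(B, \tfrac1c\ln(Bc))$ is used exactly here, to keep each $u_k$ bounded below by a quantity of order $c(k+1)$, which is what makes the quadratic tail summable and fixes the slack at $\pi^2/12$. Carrying this lower bound uniformly through the induction, and verifying that the leading logarithmic term is matched to $\ln(u_0 + cn)$ rather than to a larger multiple while the remainder series stays controlled independently of the problem constants, is the delicate part of the argument.
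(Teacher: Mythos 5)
Your substitution $u_n = e^{c y_n}$, the telescoping identity $\ln u_n = \ln u_0 + cB\sum_{k=0}^{n-1} 1/u_k$, and the lower bound $u_k \ge u_0 + cBk$ are all correct, but the charging step you yourself flagged as delicate is exactly where the argument breaks, and the break is fatal for $B > 1$. The increments of $\ln u_n$ are $cB/u_k$, while your comparison sequence $w_k = u_0 + ck$ only supplies increments $\ln(1 + c/w_k) \le c/w_k$. From $u_k \ge u_0 + cBk$ you get $cB/u_k \le cB/(u_0 + cBk)$, but the inequality $cB/(u_0+cBk) \le c/(u_0+ck)$ is, after cross-multiplication, equivalent to $Bu_0 \le u_0$, i.e.\ to $B \le 1$; for $B > 1$ the comparison runs the wrong way at every step, with a per-step surplus of $\frac{c\,u_0\,(B-1)}{(u_0+cBk)(u_0+ck)}$ that no bookkeeping can fold into $\pi^2/12$ in general. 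Indeed, for $B > e^{\pi^2/12} \approx 2.28$ the statement you are trying to prove is simply false: the exact bound $u_n \ge u_0 + cBn$ already gives $y_n - \frac{1}{c}\ln(cn + e^{cy_0}) \ge \frac{1}{c}\ln\frac{u_0 + cBn}{u_0 + cn} \to \frac{\ln B}{c} > \frac{\pi^2}{12c}$. Concretely, with $c = 1$, $B = 3$, $y_0 = 3$ (admissible), at $n = 100$ one has $y_{100} \ge \ln(e^3 + 300) \approx 5.77$, exceeding the claimed $\ln(100 + e^3) + \pi^2/12 \approx 5.61$. So your plan goes through verbatim only when $B = 1$, where it is complete and, as a purely discrete argument, arguably more elementary than the paper's.

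You should know that the paper's own proof fails at the corresponding spot: it compares $y_n$ with the ODE solution $y(t) = \frac{1}{c}\ln(ct + e^{cy_0})$ and, in the ``direct calculation'' of $\Delta_{n+1} - \Delta_n$, silently replaces the true increment $y_{n+1} - y_n = B\,e^{-cy_n}$ by $e^{-cy_n}$; restoring the factor $B$ leaves a term of order $(B-1)/(cn + A)$ which is not summable, and in fact $\Delta_n \to \ln B/c$. Your route repairs itself more gracefully than the paper's: take $w_k = u_0 + cBk$ instead of $u_0 + ck$. Then $u_k \ge w_k$; the hypothesis $y_0 \ge \frac{1}{c}\ln(Bc)$ gives $u_0 \ge cB$, hence $w_k \ge cB(k+1)$, so $\ln(1 + cB/w_k) \ge cB/w_k - \frac{(cB)^2}{2w_k^2} \ge cB/w_k - \frac{1}{2(k+1)^2}$, and your telescoping argument delivers the corrected conclusion $y_n \le \frac{1}{c}\ln(cBn + e^{cy_0}) + \frac{\pi^2}{12c}$. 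That corrected form, propagated through \cref{lemma:partial-sum-combination}, is all that \cref{thm:asymptotic_rate_of_convergence} actually requires (the paper applies the lemma with $B = K - 1$, which equals $3$ in its own simulations), and it changes only constants in the $O(\ln T/T)$ rate.
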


\begin{proof}
Define the  function:
\begin{align*}
    y(t) & := \frac1{c} \ln(c t + e^{c y_0} )\,, \qquad t\ge 0\,,
\end{align*}
and note that it is an increasing function, Moreover, $\dot y(t) (= \frac{d}{dt} y(t)) = e^{-c y(t)}$ for any $t \ge 0$. Hence,
\begin{align*}
y(t) = y_0 + \int_0^t e^{-c y(s)} \, ds\,, \qquad t\ge 0
\end{align*}

Define the function:
\begin{align*}
g(x) := x + B \, e^{-cx}\,,
\end{align*}
and note that $g$ is increasing when $y\ge \frac{1}{c} \ln (B \,c)$. Moreover, $y(0) = y_0$ and $y_{n+1} = g(y_n)$. 

Since $y(0) = y_0 \ge \frac{1}{c} \ln (B \,c))$ and both $\{y_n\}$ and $y(t)$ are increasing, $y_n \geq \frac{1}{c} \ln (B \, c)$ for all $n$ and , $y(t) \geq \frac{1}{c} \ln (B \, c)$ for all $t \geq 0$.

We first prove that
\begin{align}
    y(n)\le y_n\,, \qquad n=0,1,\dots\,.
    \label{eq:ynlb}
\end{align}
We prove \cref{eq:ynlb} by induction. The claim holds for $n=0$ by construction. Now assume that $y(n)\le y_n$ holds 
for some $n\ge 0$. Let us show that that $y(n+1)\le y_{n+1}$ also holds. For this note that
\begin{align*}
    y(n+1) 
    &=      y(n) + \int_n^{n+1} e^{-c y(s)} ds \\
    &\le    y(n) + \int_n^{n+1} e^{-c y(n)} ds  \tag{$t\mapsto y(t)$ is increasing}\\
    &=      y(n) + e^{-c y(n)} \\
    & \leq  y(n) + B \, e^{-c y(n)} \tag{since $B \geq 1$} \\
    &=   g(y(n)) \tag{definition of $g$} \\
    &\le    g(y_n) \tag{induction hypothesis, $g$ is increasing for $y\ge \frac{1}{c} \ln (B \,c)$ and $y_n, y(n) \ge \frac{1}{c} \, \ln (B \, c)$} \\
    &=      y_{n+1}\,. \tag{By definition of $y_{n+1}$}
\end{align*}    
By the principle of mathematical induction, $y(n) \le y_{n}$ for all $n \ge 0$.

Define $\Delta_n := y_n - y(n)$. From our previous inequality, we know that $\Delta_n \ge 0$. We now show that $\{\Delta_n\}_n$ is bounded, from which the desired statement follows immediately.
To show that $\{\Delta_n\}_n$ is bounded we will show that $\Delta_{n+1}-\Delta_n$ is summable.
To show this, we start by obtaining an expression for $\Delta_{n+1}-\Delta_n$. 
Let $A = e^{c y_0}$. Let $n\ge 0$.
Direct calculation gives
\begin{align*}
    \Delta_{n+1}-\Delta_n 
    & =  e^{-c y_n} - \frac{1}{c} \ln\left( 1 + \frac{c}{cn+A} \right)\,.
\end{align*}
Using that for all $x>0$, $\ln(1+x)\ge \frac{x}{1+\frac{x}{2}}$, we get 
\begin{align*}
    \Delta_{n+1}-\Delta_n 
    & \le e^{-c y_n} - \frac{1}{cn + A + \frac{c}{2}} \\
    & \le e^{-c y(n)} - \frac{1}{cn + A + \frac{c}{2}} \tag{from \cref{eq:ynlb}} \\
    & \le \frac{1}{cn +A } - \frac{1}{cn + A + \frac{c}{2}} \tag{definition of $y(n)$} \\
    & = \frac{1}{2c} \, \frac{1}{(n + \nicefrac{A}{c}) \, (n + \nicefrac{A}{c} + \frac{1}{2})}  \\
    & \le \frac{1}{2c} \frac{1}{(n+1)^2}\,. \qquad \tag{Since $y_0 \geq \frac{1}{c} \ln (B \,c))$, $A/c = B \ge 1$}
\end{align*}
For a fixed $m > 0$, summing up the above inequality from $n = 0$ to $m-1$,
\begin{align*}
\Delta_{m}  - \Delta_{0} & \leq \frac{1}{2c} \sum_{n = 0}^{m-1} \frac{1}{(n+1)^2} \leq \frac{\pi^2}{12c} \tag{Since $\sum_{i = 1}^{\infty} \frac{1}{i^2} = \frac{\pi^2}{6}$} \\
\implies \Delta_{m} & \leq \frac{\pi^2}{12c}.  \tag{Since $\Delta_0 = 0$}
\end{align*}
where $\pi = 3.14159\dots$. Hence, it follows that for any $n\ge 0$,
\begin{equation*}
y_n = y(n) + \Delta_n \leq y(n) + \frac{\pi^2}{12c} = \frac1{c} \ln(c n + e^{c y_0} ) + \frac{\pi^2}{12c}. \qedhere
\end{equation*}
\end{proof}

\begin{lemma}
Let $\{x_n\}$ be a nonnegative valued sequence such that $x_{n+1} \le x_n + B \, e^{-cx_n}$ for all $n \ge 0$ with $B \geq 1$, $c > 0$. Then, for all $n \ge 0$, 
\begin{align*}
x_n \le  \frac1{c} \ln(c n + e^{c M} ) + \frac{\pi^2}{12c} \,,
\end{align*}
where $M = \max\{B, \frac{1}{c} \ln (B \,c)), x_{0} \}$. 
\label{lemma:partial-sum-combination}    
\end{lemma}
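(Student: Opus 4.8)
The plan is to obtain the bound by sandwiching $\{x_n\}$ beneath the solution of the corresponding \emph{equality} recurrence, for which a closed-form bound has already been established, and then to chain the two estimates. Concretely, I would introduce the comparison sequence $\{y_n\}_{n \ge 0}$ defined by the exact recurrence $y_{n+1} = y_n + B\, e^{-c y_n}$ with the specific initialization $y_0 = M = \max\{B, \tfrac{1}{c}\ln(B\,c), x_0\}$. The entire reason for defining $M$ as the maximum of these three particular quantities is to simultaneously satisfy the two distinct initialization hypotheses demanded by the two preceding lemmas.

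First I would verify the hypotheses of \cref{lemma:difference-inequality}. Because $M \ge \max\{B, \tfrac{1}{c}\ln(B\,c)\}$, the required initialization $y_0 \ge \max(B, \tfrac{1}{c}\ln(B\,c))$ holds; because $M \ge x_0$, the domination condition $x_0 \le y_0$ holds; the recursive inequality $x_{n+1} \le x_n + B\,e^{-c x_n}$ and the nonnegativity of $\{x_n\}$ are assumed in the statement. \cref{lemma:difference-inequality} then immediately yields $x_n \le y_n$ for all $n \ge 0$.

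Next I would apply \cref{lemma:partial-sum-bound} to the sequence $\{y_n\}$. Its sole hypothesis is $y_0 \ge \max(B, \tfrac{1}{c}\ln(B\,c))$, which we have already checked, so the lemma gives $y_n \le \tfrac{1}{c}\ln(c\,n + e^{c y_0}) + \tfrac{\pi^2}{12c}$ for every $n$. Substituting $y_0 = M$ and combining with $x_n \le y_n$ then produces exactly the claimed bound $x_n \le \tfrac{1}{c}\ln(c\,n + e^{c M}) + \tfrac{\pi^2}{12c}$.

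There is no genuine analytic obstacle here, since the statement is essentially the composition of the two preceding lemmas and all the substantive work — the discrete comparison argument and the integral estimate producing the $\pi^2/12c$ correction — has already been carried out. The only step requiring care is the choice of $y_0 = M$: one must take the maximum over all three of $B$, $\tfrac{1}{c}\ln(B\,c)$, and $x_0$ so that the monotonicity of $g(x) = x + B\,e^{-cx}$, which holds only on $[\tfrac{1}{c}\ln(B\,c), \infty)$, is available for \cref{lemma:partial-sum-bound} \emph{and} the comparison sequence starts above $x_0$ for \cref{lemma:difference-inequality}; dropping any one of these three terms could invalidate one of the two invocations.
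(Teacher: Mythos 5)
Your proof is correct and matches the paper's own argument exactly: both define the comparison sequence $\{y_n\}$ with $y_0 = \max\{B, \tfrac{1}{c}\ln(B\,c), x_0\}$, invoke \cref{lemma:difference-inequality} to get $x_n \le y_n$, and then apply \cref{lemma:partial-sum-bound} to bound $y_n$. Nothing further is needed.
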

\begin{proof}
Let $\{y_n\}$ be the solution to the difference equation $y_{n+1} = y_n + B \, e^{-cy_n}$ where $y_0 = \max\{B, \frac{1}{c} \ln (B \,c)), x_{0} \}$. Since $y_0 \geq x_0$ and $y_0 \ge \max(B,\frac{1}{c}\ln (B \, c))$, we can use~\cref{lemma:difference-inequality} to conclude that for all $n \ge 0$, 
\[
x_n \le y_n \,.
\] 

Furthermore, using~\cref{lemma:partial-sum-bound} we can conclude that, 
\begin{align*}
y_n \leq \frac{1}{c} \ln(c n + e^{c y_0} ) + \frac{\pi^2}{12c}.
\end{align*} 
Combining the above inequalities completes the proof. 
\end{proof}

\clearpage
\section{Additional simulation results}
\label{app:sim}

\begin{figure}[h]
\centering
\begin{subfigure}[b]{.328\linewidth}
\includegraphics[width=\linewidth]{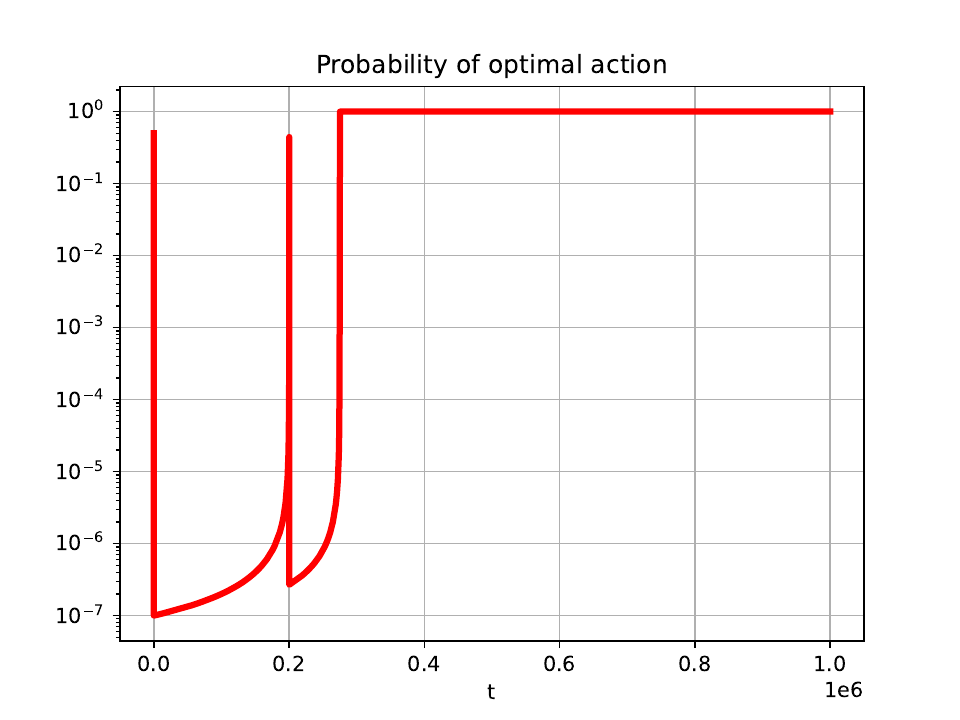}
\caption{$\pi_{\theta_t}(a^*)$, $\, \eta = 100$.}\label{fig:optimal_action_prob_two_action_case_eta_100}
\end{subfigure}
\begin{subfigure}[b]{.328\linewidth}
\includegraphics[width=\linewidth]{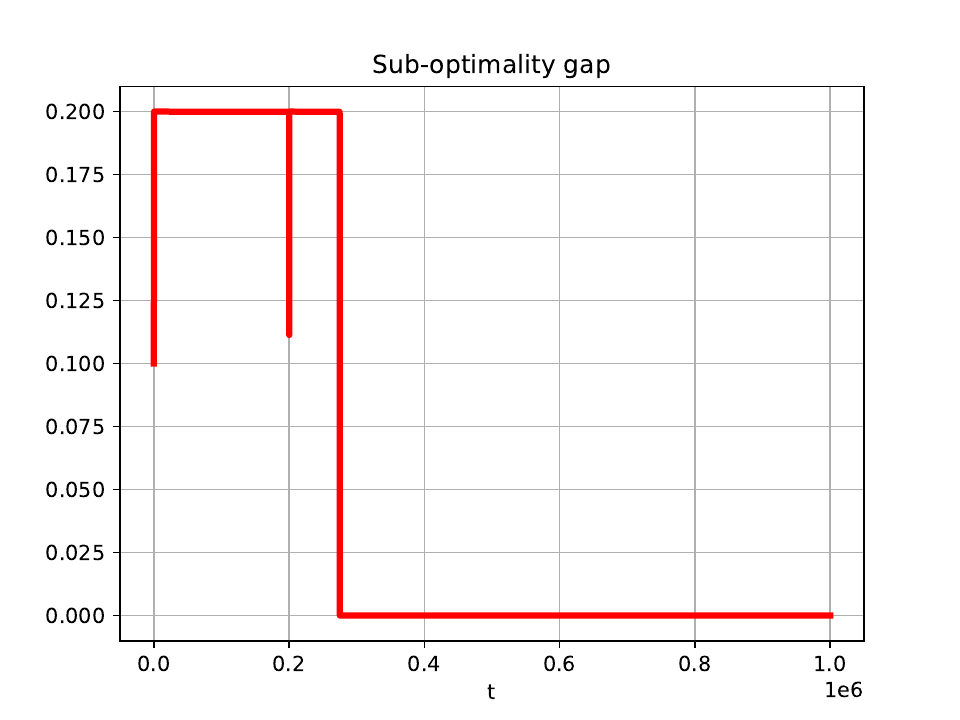}
\caption{$r(a^*) - \pi_{\theta_t}^\top r$, $\, \eta = 100$.}\label{fig:sub_optimality_gap_two_action_case_eta_100}
\end{subfigure}
\begin{subfigure}[b]{.328\linewidth}
\includegraphics[width=\linewidth]{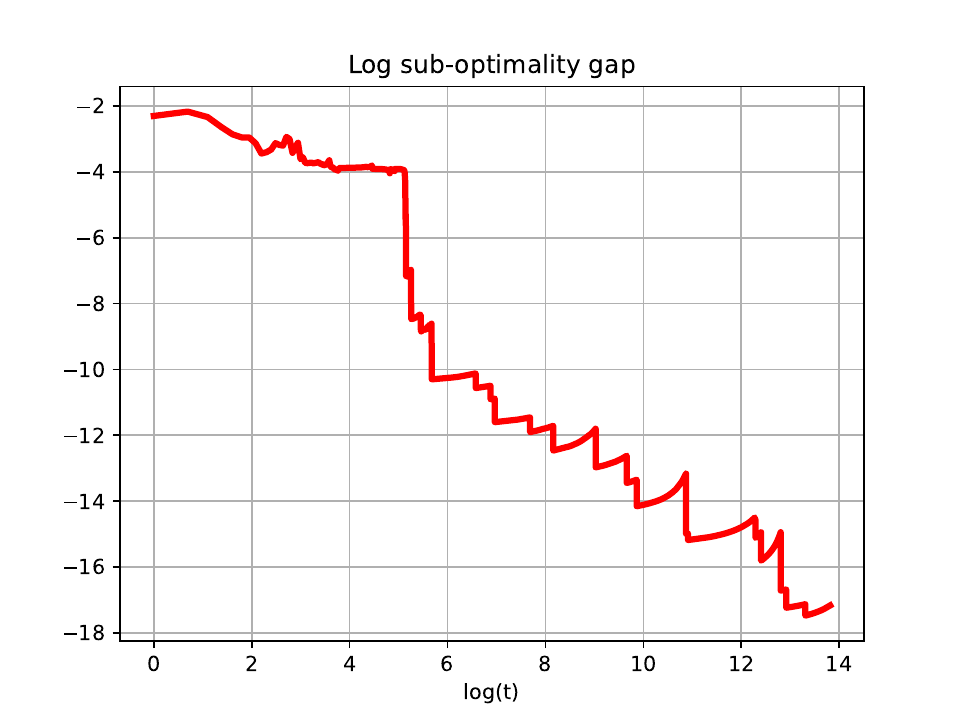}
\caption{$\log{ ( r(a^*) - \pi_{\theta_t}^\top r ) }$, $\, \eta = 10$.}\label{fig:sub_optimality_gap_two_action_case_eta_10}
\end{subfigure}
\caption{Visualization in a two-action stochastic bandit problem. Here the rewards are defined as $(-0.05, -0.25)$. Other details are same as for \cref{fig:visualization_general_action_case}. Figures~\ref{fig:optimal_action_prob_two_action_case_eta_100} and~\ref{fig:optimal_action_prob_two_action_case_eta_100} are based on a single run, while Figure~\ref{fig:sub_optimality_gap_two_action_case_eta_10} averages across 10 runs. 
Note that $\log{ ( r(a^*) - \pi_{\theta_t}^\top r ) } 
\approx 10^{-33}$ at the final stages on \cref{fig:sub_optimality_gap_two_action_case_eta_100}.} 
\label{fig:visualization_two_action_case}
\vspace{-10pt}
\end{figure}


\end{document}